\newcommand{\para}[1]{\textbf{#1}~}
\newtheorem{theorem}{\textbf{Theorem}}
\newtheorem{lemma}[theorem]{\textbf{Lemma}}
\newtheorem{corollary}[theorem]{Corollary}
\newtheorem{proposition}[theorem]{Proposition}
\newtheorem{conjecture}[theorem]{Conjecture}
\newtheorem{definition}{Definition}
\newtheorem{assumption}{Assumption}
\renewcommand{\cite}{\citep}
\newcommand{\EE}{\mathbb{E}}
\newcommand{\VV}{\mathbb{V}}
\newcommand{\cond}{~\big\vert~}
\newcommand{\Indi}{\mathbb{I}}
\newcommand{\holder}{\,\cdot\,}
\newcommand{\maxnorm}[1]{\left\| #1 \right\|_\infty}
\DeclareMathOperator*{\argmin}{arg\,min}
\DeclareMathOperator*{\argmax}{arg\,max}
\newcommand{\Fcal}{\mathcal{F}}
\newcommand{\Gcal}{\mathcal{G}}
\newcommand{\Mcal}{\mathcal{M}}
\newcommand{\Scal}{\mathcal{S}}
\newcommand{\Sphi}{\mathcal{S}_\phi}
\newcommand{\Zcal}{\mathcal{Z}}
\newcommand{\Acal}{\mathcal{A}}
\newcommand{\Tcal}{\mathcal{T}}
\newcommand{\Rmax}{R_{\max}}
\newcommand{\Vmax}{V_{\max}}
\newcommand{\RR}{\mathbb{R}}
\newcommand{\Xcal}{\mathcal{X}}
\newcommand{\emp}[1]{\widehat{#1}}
\newcommand{\init}{\eta_1} 
\newcommand{\That}{\widehat{\mathcal{T}}_{\Fcal}}
\newcommand{\fh}{f_k}
\newcommand{\fo}{f_{k-1}}
\newcommand{\muu}{{\mu}}
\newcommand{\Lcal}{\mathcal{L}}
\newcommand{\smax}{s_{+}}
\newcommand{\smin}{s_{-}}
\definecolor{darkgreen}{rgb}{0,0.5,0}
\definecolor{darkred}{rgb}{0.7,0,0}
\definecolor{teal}{rgb}{0.3,0.8,0.8}
\newcommand{\kibitz}[2]{\ifnum\Comments=1\textcolor{#1}{#2}\fi}
\icmltitlerunning{Information-Theoretic Considerations in Batch RL}
\begin{document}

\twocolumn[
\icmltitle{Information-Theoretic Considerations in Batch Reinforcement Learning}



\icmlsetsymbol{equal}{*}

\begin{icmlauthorlist}
\icmlauthor{Jinglin Chen}{uiuc}
\icmlauthor{Nan Jiang}{uiuc}
\end{icmlauthorlist}

\icmlaffiliation{uiuc}{University of Illinois at Urbana-Champaign, Urbana, Illinois, USA}

\icmlcorrespondingauthor{Nan Jiang}{nanjiang@illinois.edu}

\icmlkeywords{value-function approximation, information-theoretic lower bound}

\vskip 0.3in
]



\printAffiliationsAndNotice{}  

\begin{abstract}
Value-function approximation methods that operate in batch mode have foundational importance to reinforcement learning (RL). Finite sample guarantees for these methods often crucially rely on two types of assumptions: (1) mild distribution shift, and (2) representation conditions that are stronger than realizability. However, the necessity (``why do we need them?'') and the naturalness (``when do they hold?'') of such assumptions have largely eluded the literature. 
In this paper, we revisit these assumptions and provide theoretical results towards answering the above questions, 
and make steps towards a deeper understanding of value-function approximation. 
\end{abstract}

\section{Introduction and Related Work}
\label{sec:intro}

We are concerned with value-function approximation in batch-mode reinforcement learning, which is related to and sometimes known as Approximate Dynamic Programming \citep[ADP;][]{bertsekas1996neuro}. 
Such methods take sample transition data as input\footnote{In this paper, we restrict ourselves to the so-called \emph{one-path} setting and do not allow multiple samples from the same state \cite{sutton1998reinforcement, maillard2010finite}, which is only feasible in certain simulated environments and allows algorithms to succeed with realizability as the only representation condition.} and approximate the optimal value-function $Q^\star$ from a restricted class that encodes one's prior knowledge and inductive biases. They provide an important foundation for RL's empirical success today, as many popular deep RL algorithms find their prototypes in this literature. For example, when DQN \citep{mnih2015human} is run on off-policy data, and the target network is updated slowly, it can be viewed as the stochastic approximation of its batch analog, Fitted Q-Iteration \citep{ernst2005tree}, with a neural net as the function approximator \citep{riedmiller2005neural, yang2019theoretical}.

Given the importance of these methods, the question of \emph{when they work} is central to our understanding of RL. Existing works that analyze error propagation and finite sample behavior of ADP methods \citep{munos2003error, szepesvari2005finite, antos2008learning, munos2008finite, tosatto2017boosted} have provided us with a decent understanding: To guarantee sample-efficient learning of near-optimal policies, we often need assumptions from the following two categories.
 
\para{Mild distribution shift} Many ADP methods can run completely off-policy and they do the best with whatever data available.\footnote{Even when they are on-policy or combined with a standard exploration module (e.g., $\epsilon$-greedy), most often they fail in problems where exploration is difficult \citep[e.g., combination lock; see][]{kakade2003sample} and rely on the benignness of data to succeed.} Therefore, it is necessary that the data have sufficient coverage over the state (and action) space. 

\para{Representation condition} Since the ultimate goal is to find $Q^\star$, we would expect that the function class we work with contains it (or at least a close approximation). While such realizability-type assumptions are sufficient for supervised learning, reinforcement learning faces the additional difficulties of delayed consequences and the lack of labels, and existing analyses often make stronger assumptions on the function class, such as (approximate) closedness under Bellman update \cite{szepesvari2005finite}.

While the above assumptions make intuitive sense, and finite sample bounds have been proved when they hold, their necessity (``\emph{can we prove similar results without making these assumptions?}'') and naturalness (``\emph{do they actually hold in interesting problems?}'') have largely eluded the literature. 
In this paper, we revisit these assumptions and provide theoretical results towards answering the above questions. Below is a highlight of our results:
\begin{enumerate}[leftmargin=*]
\item To prepare for later discussions, we provide an analysis of representative ADP algorithms (FQI and its variant) under a simplified and minimal setup (Section~\ref{sec:upper}). As a side-product, our results improve upon prior analyses in the dependence of error rate on sample size.
\item We formally justify the necessity of mild distribution shift via an information-theoretic lower bound (Section~\ref{sec:con_lower}). Our setup rules out trivial and uninteresting failure mode due to an adversarial choice of data: Even with the most favorable data distribution, polynomial sample complexity is not achievable if the MDP dynamics are not restricted. 
\item We conjecture an information-theoretic lower bound against realizability alone as the representation condition (Conjecture~\ref{conj:complete}, Section~\ref{sec:lower_complete}). While we are not able to prove the conjecture, important steps are made, as two very general proof styles are shown to be destined to fail, one of which is due to \citet{sutton2018reinforcement} and has been used to prove a closely related result.
\item 
As another side-product, we prove that \emph{model-based RL} can enjoy polynomial sample complexity with realizability alone (Corollary~\ref{cor:mbrl}). If Conjecture~\ref{conj:complete} is true, we have a formal separation showing the gap between batch model-based vs value-based RL with function approximation (see the analog in the online exploration setting in \citet{sun2018model}).
\end{enumerate}
Throughout the paper, we make novel connections to two subareas of RL: state abstractions~\cite{whitt1978approximations, li2006towards} and PAC exploration under function approximation \cite{krishnamurthy2016pac, jiang2017contextual}. In particular, we are able to utilize some of their results in our proofs (Sections~\ref{sec:con_lower} and \ref{sec:lower_complete}), 
and find examples from these areas where the assumptions of interest hold (Sections~\ref{sec:visgrid} and \ref{sec:bisimulation}). This suggests that the results in these other areas may be beneficial to the research in ADP, and we hope this work can inspire researchers from different subareas of RL to exchange ideas more often.

\section{Preliminaries} \label{sec:prelim}

\subsection{Markov Decision Processes (MDPs)}
Let $M = (\Scal, \Acal, P, R, \gamma, \eta_1)$ be an MDP, where $\Scal$ is the finite (but can be arbitrarily large) state space, $\Acal$ is the finite action space, $P: \Scal\times \Acal\to \Delta(\Scal)$ is the transition function ($\Delta(\cdot)$ is the probability simplex), $R: \Scal\times\Acal \to [0, \Rmax]$ is the reward function, $\gamma \in [0, 1)$ is the discount factor, and $\eta_1$ is the initial distribution over states. 

A (stochastic) policy $\pi: \Scal\to\Delta(\Acal)$ prescribes a distribution over actions for each state. Fixing a start state $s$, the policy $\pi$ induces a random trajectory $s_1, a_1, r_1, s_2, a_2, r_2, \ldots$, where $s_1 = s$, $a_1 \sim \pi(s_1)$, $r_1 = R(s_1, a_1)$, $s_2 \sim P(s_1, a_1)$, $a_2 \sim \pi(s_2)$, etc. The goal is to find $\pi$ that maximizes the expected return $v^\pi := \EE[\sum_{h=1}^\infty \gamma^{h-1} r_h | s_1 \sim \init, \pi]$. It will also be useful to define the value function $V^\pi(s) := \EE[\sum_{h=1}^\infty \gamma^{h-1} r_h | s_1 = s, \pi]$ and Q-value function $Q^\pi(s,a) := \EE[\sum_{h=1}^\infty \gamma^{h-1} r_h | s_1 = s, a_1 = a, a_{2:\infty} \sim \pi]$, and these functions take values in $[0, \Vmax]$ with $\Vmax:=\Rmax/(1-\gamma)$.

There exists a deterministic policy\footnote{A deterministic policy puts all the probability mass on a single action in each state. With a slight abuse of notation, we sometimes also treat the type of such policies as $\pi: \Scal\to\Acal$.} $\pi^\star$ that maximizes $V^{\pi}(s)$ for all $s\in\Scal$ simultaneously, and hence also maximizes $v^\pi$ as $v^\pi = \EE_{s_1 \sim \init}[V^\pi(s_1)]$. Let $V^\star$ and $Q^\star$ be the shorthand for $V^{\pi^\star}$ and $Q^{\pi^\star}$ respectively. It is well known that $\pi^\star(s) = \pi_{Q^\star}(s) := \argmax_{a\in\Acal} Q^\star(s,a)$, and $Q^\star$ satisfies the \emph{Bellman equation} $Q^\star = \Tcal Q^\star$, where $\Tcal: \RR^{\Scal\times\Acal}\to \RR^{\Scal\times \Acal}$ is the \emph{Bellman update operator}: $\forall f \in \RR^{\Scal\times\Acal}$,
\begin{align} \label{eq:bellman}
	(\Tcal f)(s,a) := R(s,a) + \gamma \EE_{s'\sim P(s,a)} [V_f(s')],
\end{align}
where $V_f(s') := \max_{a' \in \Acal} f(s', a')$.

\para{Additional notations} Let $\eta^{\pi}_h$ be the marginal distribution of $s_h$ under $\pi$, that is, $\eta^{\pi}_h(s) := \Pr[s_h = s \cond s_1 \sim \eta_1, \pi]$. For $g: \Scal\times\Acal \to \RR$, $\nu \in \Delta(\Scal\times\Acal)$, and $p\ge 1$, define the shorthand $\|g\|_{p, \nu} := (\EE_{(s,a) \sim \nu}[|g(s,a)|^p])^{1/p}$, which is a semi-norm. Furthermore, for any object that is a function of/distribution over $\Scal$ (or $\Scal\times\Acal$), we will treat it as a vector whenever convenient. We add a subscript to the value functions or Bellman update operators, e.g., $V_M^\star$, when it is necessary to clarify the MDP in which the object is defined.

\subsection{Batch Value-Function Approximation}
This paper is concerned with \emph{batch-mode} RL with value-function approximation. As a typical setup, the agent does not have direct access to the MDP and instead is given the following inputs:
\begin{itemize}[leftmargin=*]
	\item A batch dataset $D$ consisting of $(s,a,r,s')$ tuples, where $r = R(s,a)$ and $s' \sim P(s,a)$. For simplicity, we assume that $(s,a)$ is generated i.i.d.~from the \emph{data distribution} $\mu\in\Delta(\Scal \times \Acal)$.\footnote{The agent may or may not have knowledge of $\mu$. Most existing algorithms are agnostic to such knowledge.}
	\item A class of candidate value-functions, $\Fcal \subset (\Scal\times\Acal\to [0, \Vmax])$, which (approximately) captures $Q^\star$; such a property is often called \emph{realizability}. We discuss additional assumptions on $\Fcal$ later. As a further simplification, we focus on finite but exponentially large $\Fcal$ and discuss how to handle infinite classes when appropriate.
\end{itemize}
The learning goal is to compute a near-optimal policy from the data, often via finding $f\in\Fcal$ that approximates $Q^\star$ and outputting $\pi_f$, the greedy policy w.r.t.~$f$. 
A representative algorithm for this setting is Fitted Q-Iteration (FQI) \citep{ernst2005tree, szepesvari2010algorithms}.\footnote{Batch value-based algorithms can often be categorized into approximate value iteration (e.g., FQI) and approximate policy iteration (e.g., LSPI \citep{lagoudakis2003least}). We focus on the former due to its simplicity and do not discuss the latter as its guarantees often rely on similar but more complicated assumptions \citep{lazaric2012finite}. Moreover, our lower bounds are information-theoretic and algorithm-independent.} The algorithm initializes $f_0 \in \Fcal$ arbitrarily, and iteratively computes $\fh$ as follows: in iteration $k$, the algorithm converts the dataset $D$ into a regression dataset, with $(s,a)$ being the input and $r+ \gamma V_{\fo}(s')$ as the output. It then minimizes the squared loss regression objective over $\Fcal$, and the minimizer becomes $\fh$. More formally, $\fh:= \That \fo$, where
\begin{align} \label{eq:Lcal_D} 
	\That f' := &~\argmin_{f \in\Fcal} \Lcal_D(f; f')\\
\Lcal_D(f; f') := &~\frac{1}{|D|} \sum_{(s,a,r,s') \in D} \left(f(s,a) - r - \gamma V_{f'}(s')\right)^2. \nonumber 
\end{align}
FQI may oscillate and a fixed point solution may not exist in general \citep{gordon1995stable}. Nevertheless, under conditions which we will specify later, finite sample guarantees for FQI can still be  obtained even if the process does not converge.

\subsection{State Abstractions} \label{sec:abstraction}
A state abstraction $\phi$ maps $\Scal$ to a finite and potentially much smaller abstract state space, $\Sphi$. Naturally, $\phi$ is often a many-to-one mapping, inducing an equivalence notion over $\Scal$ which encodes one's prior knowledge of equivalent or similar states. A typical use of abstractions in the batch learning setting is to construct a tabular (or \emph{certainty-equivalent}) model from a dataset $\{(\phi(s), a, r, \phi(s'))\}$, and compute the optimal policy in the resulting abstract model. There is a long history of studying abstractions, mostly focusing on their approximation guarantees \cite{whitt1978approximations}. 

We note, however, that there is a direct connection between FQI and certainty-equivalence with abstractions. In particular, value iteration in the model estimated with abstraction $\phi$ is \emph{exactly equivalent} to FQI with $\Fcal$ being the class of piece-wise constant functions under $\phi$.\footnote{This result is known anecdotally \citep[see e.g.,][]{pires2016policy} and we include details in Appendix~\ref{app:abstraction} for completeness.} As such, the characterization of approximation errors in the two bodies of literature are closely related to each other. We will discuss further connections in the rest of this paper.

\section{Bellman Error Minimization in Batch Reinforcement Learning}
\label{sec:upper}
In this section, we give a complete analysis of FQI and a related algorithm, with the main results being two sample complexity bounds. Many of the insights and results in this section have either explicitly appeared in or been implicitly hinted by prior work \citep[especially][]{szepesvari2005finite, antos2008learning}, and we include them because (1) the discussions in the rest of the paper are largely based on these results, and (2) our analyses simplify prior results without trivializing them, making the high-level insights more accessible. We also improve the results in some aspects.

\subsection{Sample-Based Bellman Error Minimization} \label{sec:minimax}
We start by deriving FQI from a slightly unusual perspective due to the aforementioned prior work, which motivates major assumptions in FQI analysis and introduces concepts that are important for later discussions. 

Recall that the goal of value-based RL is to find $f \in\Fcal$ such that $f \approx \Tcal f$, that is, $\|f - \Tcal f\|=0$ where $\|\cdot\|$ is some appropriate norm. For example, if $\mu$ is a distribution supported on the entire $\Scal\times \Acal$, then $\|f - \Tcal f\|_{2,\mu}^2 = 0$ 
would guarantee that $f = Q^\star$. While such an $f$ can be found in principle by minimizing
$\|f -\Tcal f\|_{2, \mu}^2$ 
over $f\in\Fcal$, calculating $\|f - \Tcal f\|$ requires knowledge of the transition dynamics (recall Eq.\eqref{eq:bellman}), which is unknown in the learning setting. Instead, we have access to the dataset $D = \{(s,a,r,s')\}$, and it may be tempting to minimize the following objective that is purely a function of data: (Recall $\Lcal_D$ in Eq.\eqref{eq:Lcal_D})
\begin{align*}
	\Lcal_D(f; f) := &~\frac{1}{|D|} \sum_{(s,a,r,s') \in D} \left(f(s,a) - r - \gamma V_{f}(s')\right)^2.
\end{align*}
Unfortunately, even with the infinite amount of data, the above objective is still different from the actual Bellman error $\|f -\Tcal f\|_{2, \mu}^2$ that we wish to minimize. In particular, define
$
\Lcal_\mu(\cdot; \cdot) := \EE[\Lcal_D(\cdot; \cdot)],
$ 
where the expectation is w.r.t.~the random draw of the dataset $D$. 
We have $	\Lcal_\mu(f; f) = $
\begin{align} \label{eq:cond_var}
\|f - \Tcal f\|_{2, \mu}^2 + \gamma^2 \EE_{(s,a) \sim \mu}[\VV_{s' \sim P(s,a)}[V_f(s')]].
\end{align}
In words, $\Lcal_\mu(f; f)$ adds a conditional variance term to the desired objective, which incorrectly penalizes functions that have a large variance w.r.t.~random state transitions. 

\para{The minimax algorithm}\footnote{Also known under the name ``modified Bellman Residual Minimization'' \cite{antos2008learning}.}
One way to fix the issue is to estimate the conditional variance term in Eq.~\eqref{eq:cond_var} and subtracting it from $\Lcal_D(f;f)$. In fact, it is easy to verify that $\gamma^2 \EE_{(s,a) \sim \mu}[\VV_{s' \sim P(s,a)}[V_f(s')]]$ is the Bayes optimal error of the regression problem
\begin{align} \label{eq:regress}
	(s,a) \mapsto r + \gamma V_{f}(s').
\end{align}
One can estimate it by empirical risk minimization over a rich function class, and the estimate is consistent as long as the function class realizes the Bayes optimal regressor and has bounded statistical complexity. 
Following this idea, we assume access to another function class $\Gcal \subset (\Scal \times \Acal \to [0, \Vmax])$ for solving the regression problem in Eq.\eqref{eq:regress}. The estimated Bayes optimal error is
\begin{align} \label{eq:bayes_err_estm}
	\inf_{g\in\Gcal} \Lcal_D(g; f).
\end{align}
A good approximation to $\|f - \Tcal f\|_{2,\mu}^2$ from data is then
$
\sup_{g\in\Gcal} \Lcal_D(f; f) - \Lcal_D(g; f).
$ 
This suggests that we can simply run the following optimization problem to find $f\in \Fcal$ that approximates $Q^\star$:
\begin{align} \label{eq:minimax}
	\inf_{f\in\Fcal} \sup_{g\in\Gcal} \Lcal_D(f; f) - \Lcal_D(g; f).
\end{align}
Later in this section, we will provide a finite sample analysis of the above minimax algorithm, but before that, we will show that FQI can be viewed as its approximation. 

\para{FQI as an approximation to Eq.\eqref{eq:minimax}} FQI has a close connection to the above program and can be viewed as its approximation, when $\Gcal$ is chosen to be $\Fcal$. Formally, 
\begin{proposition} \label{prop:fqi_minimax}
	Let $\hat f$, $\hat g$ be the solution to Eq.\eqref{eq:minimax} when $\Gcal=\Fcal$. 
	\begin{itemize}[leftmargin=*]
		\item If $\Lcal_D(\hat f; \hat f) - \Lcal_D(\hat g; \hat f)= 0$, $\hat f$ is a fixed point for FQI. 
		\item Conversely, if $\fh=\fo$ holds for some $k$ in FQI, then $\hat f = \hat g = \fh$ is a solution to Eq.\eqref{eq:minimax}.
		\item If $\Lcal_D(\hat f; \hat f) - \Lcal_D(\hat g; \hat f)> 0$, FQI oscillates and no fixed point exists. 
	\end{itemize}
\end{proposition}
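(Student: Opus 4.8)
The plan is to collapse the inner maximization and reduce all three bullets to a single nonnegativity-plus-equivalence observation. Since $\Gcal=\Fcal$, the inner problem $\sup_{g\in\Fcal}-\Lcal_D(g;f)$ equals $-\inf_{g\in\Fcal}\Lcal_D(g;f)$, and by the very definition of $\That$ in Eq.\eqref{eq:Lcal_D} this infimum is attained at $g=\That f$. Hence the outer objective of Eq.\eqref{eq:minimax} is the single-variable function
\begin{align*}
h(f) := \Lcal_D(f;f) - \inf_{g\in\Fcal}\Lcal_D(g;f) = \Lcal_D(f;f) - \Lcal_D(\That f; f),
\end{align*}
and the inner maximizer associated with any $f$ is $\hat g=\That f$. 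The first thing I would record is the crucial inequality $h(f)\ge 0$ for every $f\in\Fcal$: because $\Fcal=\Gcal$, the function $f$ is itself feasible in the inner infimum, so $\inf_{g}\Lcal_D(g;f)\le \Lcal_D(f;f)$.

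The heart of the argument is the equivalence that $h(f)=0$ if and only if $f$ is an FQI fixed point. For the backward direction, if $\That f=f$ then $\Lcal_D(f;f)=\Lcal_D(\That f;f)=\inf_g\Lcal_D(g;f)$, so $h(f)=0$. For the forward direction, $h(f)=0$ says $\Lcal_D(f;f)=\inf_g\Lcal_D(g;f)$, i.e.\ $f$ attains the minimum of $\Lcal_D(\cdot;f)$ over $\Fcal$; thus $f$ is a legitimate minimizer and can be taken as $\That f$, so $f$ is a fixed point of the FQI update. Here I would be careful about the one genuine subtlety: the minimizer of $\Lcal_D(\cdot;f)$ need not be unique in a finite class, so ``fixed point'' should be read as ``$f\in\argmin_{g}\Lcal_D(g;f)$'' rather than as an identity under a particular tie-breaking rule; with this reading the equivalence is clean.

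With these two facts the three bullets are immediate. For the first bullet, its hypothesis is exactly $h(\hat f)=\Lcal_D(\hat f;\hat f)-\Lcal_D(\hat g;\hat f)=0$ (using $\Lcal_D(\hat g;\hat f)=\inf_g\Lcal_D(g;\hat f)$ from the inner maximizer), so the equivalence gives that $\hat f$ is a fixed point. For the second bullet, $\fh=\fo$ means $\That\fo=\fo$, so $f^\star:=\fo$ is a fixed point and $h(f^\star)=0$; since $h\ge 0$ everywhere, $f^\star$ minimizes $h$ and hence solves the outer problem with value $0$, while its inner maximizer is $\hat g=\That f^\star=f^\star$, giving $\hat f=\hat g=\fh$. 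For the third bullet, the hypothesis $h(\hat f)>0$ says the minimax value $\inf_f h(f)=h(\hat f)$ is strictly positive; were any fixed point to exist it would force $h=0$ there and contradict this, so no fixed point exists.

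The remaining piece is to upgrade ``no fixed point'' to ``oscillates.'' Here I would invoke that $\Fcal$ is finite and the update $f\mapsto\That f$ is deterministic once a tie-breaking rule is fixed, so the trajectory $f_0,f_1,\dots$ eventually enters a cycle; the absence of a fixed point rules out cycles of length one, forcing a cycle of length at least two, which is exactly oscillation. The main obstacle I anticipate is purely bookkeeping around the non-uniqueness of $\That$ noted above --- making sure both directions of the equivalence, and the identification $\hat g=\That f^\star=f^\star$ in the second bullet, are phrased in terms of membership in the $\argmin$ set rather than a single designated minimizer.
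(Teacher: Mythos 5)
Your proposal is correct and follows essentially the same route as the paper's proof: collapse the inner supremum to the FQI update $\That f$, observe the objective is nonnegative because $g=f$ is feasible when $\Gcal=\Fcal$, and derive all three bullets from the resulting equivalence between zero objective value and FQI fixed points (the third by contradiction). Your two additions---reading ``fixed point'' as membership in the $\argmin$ set to handle tie-breaking, and the pigeonhole argument that finiteness of $\Fcal$ plus the absence of a fixed point forces a genuine cycle---are refinements of steps the paper leaves implicit, not a different argument.
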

The proof is deferred to Appendix~\ref{app:minimax_opt}.
The proposition states that the minimax algorithm is more stable than FQI, and when FQI reaches a fixed point, the solutions of the two algorithms coincide. In fact, \citet{dai2018sbeed} derives a closely related algorithm using Fenchel dual and shows that the algorithm is always convergent.



\subsection{Analysis of FQI and Its Minimax Variant}
We provide finite sample guarantees to the two algorithms introduced above; closely related analyses have appeared in prior works (see Section~\ref{sec:intro} for references), and our version provides a cleaner analysis under simplification assumptions, improves the error rate as a function of sample size, and prepares us for later discussions. 

To state the guarantees, we need to introduce the two assumptions that are core to this paper. The first assumption handles distribution shift, and we precede it with the definition of \emph{admissible distributions}.

\begin{definition}[Admissible distributions]
	We say a distribution $\nu \in \Delta(\Scal\times\Acal)$ is \emph{admissible} in MDP $M = (\Scal, \Acal, P, R, \gamma, \eta_1)$, if there exists $h\ge 0$, and a (potentially non-stationary and stochastic) policy $\pi$, such that $\nu(s,a) = \Pr[s_h = s, a_h = a | s_1 \sim \init, \pi]$. 
\end{definition}

Intuitively, a distribution is admissible if it can be generated in the MDP by following some policy for a number of timesteps. The following assumption on \emph{concentratability} asserts that all admissible distributions are not ``far away'' from the data distribution $\mu$. The original definition is due to  \citet{munos2003error}.

\begin{assumption}[Concentratability coefficient] \label{asm:concentratability}
	We assume that there exists $C < \infty$ s.t.~for any admissible $\nu$, 
	$$
	\forall (s,a)\in\Scal\times\Acal,~ \frac{\nu(s,a)}{\mu(s,a)} \le C.
	$$ 
\end{assumption}

The real (and implicit) assumption here is that $C$ is manageably large, as our sample complexity bounds scale linearly with $C$. Prior works have used more sophisticated definitions \citep{farahmand2010error}.\footnote{This often comes at the cost of their bound being not \emph{a priori}, i.e., having a dependence on the randomness of data, initialization, and tie-breaking in optimization.} The technicalities introduced are largely orthogonal to the discussions in this paper, so we choose to adopt a much  simplified version. Despite the simplification, we will see natural examples that yield small $C$ under our definition in Section~\ref{sec:concentratability}. We will also discuss how to relax it using the structure of $\Fcal$ at the end of the paper.

Next, we introduce the assumption on the representation power of $\Fcal$ and $\Gcal$. 

\begin{assumption}[Realizability] \label{asm:realizability}
	$Q^\star \in \Fcal$. \\ (When this holds approximately, we measure violation by $\epsilon_{\Fcal} := \inf_{f\in\Fcal}\|f-\Tcal f\|_{2,\mu}^2$.) 
\end{assumption}

\begin{assumption}[Completeness] \label{asm:completeness}
	$\forall f \in \Fcal$, $\Tcal f \in \Gcal$. \\(When this holds approximately, we measure  violation by $\epsilon_{\Fcal, \Gcal} := \sup_{f\in\Fcal} \inf_{g \in \Gcal} \|g - \Tcal f\|_{2, \mu}^2$.) 
\end{assumption}
These assumptions lead to finite sample guarantees for both the minimax algorithm and FQI. For FQI, since $\Gcal=\Fcal$, Assumption~\ref{asm:completeness} essentially states that \emph{$\Fcal$ is closed under operator $\Tcal$}, hence ``completeness''.\footnote{In the literature, the violation of completeness when $\Fcal=\Gcal$, $\epsilon_{\Fcal,\Fcal}$, is called \emph{inherent Bellman error}.} The assumption is natural from how we derive the minimax algorithm in Sec~\ref{sec:minimax}, as Eq.\eqref{eq:bayes_err_estm} is only a consistent estimate of the Bayes optimal error of Eq.\eqref{eq:regress} if $\Gcal$ realizes the Bayes optimal regressor, which is $\Tcal f$.

A few remarks in order: 
\begin{enumerate}[leftmargin=*]
	\item When $\Fcal=\Gcal$ is finite, completeness implies realizability.\footnote{This is because $\Tcal^k f$ never repeats itself, as its $\ell_\infty$ distance to $Q^\star$ shrinks exponentially with a rate of $\gamma$ due to contraction.} However, completeness is stronger and much less desired than realizability: realizability is monotone in $\Fcal$ (adding functions to $\Fcal$ never hurts realizability), while completeness is not (adding functions to $\Fcal$ may break completeness). 
	\item While we focus on completeness, it is not the only condition that leads to guarantees for  ADP algorithms. We discuss alternative assumptions in Section~\ref{sec:conclusion}.
\end{enumerate}

Now we are ready to state the sample complexity results. In Appendices~\ref{app:fqi} and \ref{app:minimax} we provide more general error bounds (Theorems~\ref{thm:fqi_full} and \ref{thm:minimax_full}) that handle the approximate case where $\epsilon_\Fcal$ and $\epsilon_{\Fcal, \Gcal}$ are not zero and iteration $k$ is finite. To keep the main text focused and accessible, we only present their sample complexity corollaries in the exact case.
\textbf{}\begin{theorem}[Sample complexity of FQI] \label{thm:fqi}
	Given a dataset $D=\{(s,a,r,s')\}$ with sample size $|D|=n$ and $\Fcal$ that satisfies completeness (Assumption~\ref{asm:completeness} when $\Gcal=\Fcal$), w.p.~$\ge 1-\delta$, the output policy of FQI after $k$ iterations, $\pi_{f_k}$, satisfies
	$v^\star - v^{\pi_{\fh}} \le \epsilon \cdot \Vmax$ when $k\to \infty$ and\footnote{Only absolute constants are suppressed in Big-Oh notations.} 
	$\displaystyle
	n = O\left(\frac{C\ln\frac{|\Fcal|}{\delta}}{\epsilon^2(1-\gamma)^4}\right).
	$
\end{theorem}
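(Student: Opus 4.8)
The plan is to combine a \emph{per-iteration statistical guarantee} with a \emph{cross-iteration error-propagation} argument. The first component controls, with high probability, the one-step fitting error $\|\fh - \Tcal \fo\|_{2,\mu}^2$ \emph{uniformly over all iterations}; the second converts a uniform bound on these one-step errors into a bound on $v^\star - v^{\pi_{\fh}}$, paying a factor of $\sqrt{C}$ for the distribution shift (Assumption~\ref{asm:concentratability}) and powers of $1/(1-\gamma)$ for the discounting.

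For the statistical step, observe that iteration $k$ of FQI is exactly empirical risk minimization for the regression problem $(s,a)\mapsto r + \gamma V_{\fo}(s')$ of Eq.\eqref{eq:regress}, whose Bayes-optimal regressor is $\Tcal \fo$. Completeness (Assumption~\ref{asm:completeness} with $\Gcal=\Fcal$) guarantees $\Tcal \fo \in \Fcal$, so this regression is \emph{well-specified}. By the same calculation as in Eq.\eqref{eq:cond_var}, the conditional-variance term depends only on $\fo$ and not on the candidate $f$, so the excess risk of the ERM solution $\fh$ over $\Tcal\fo$ equals precisely $\|\fh - \Tcal\fo\|_{2,\mu}^2$. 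I would then invoke a fast-rate (Bernstein-type) argument for well-specified squared-loss regression over a finite class: the self-bounding property of the squared loss makes the variance of each excess-loss variable controlled by its mean, yielding the $1/n$ rate $\|\fh - \Tcal\fo\|_{2,\mu}^2 = O(\Vmax^2\ln(|\Fcal|/\delta)/n)$ rather than the naive $1/\sqrt n$. Since the target-defining function $\fo$ is always one of the finitely many elements of $\Fcal$, a union bound over the $|\Fcal|$ possible targets (and, inside each, over the $|\Fcal|$ candidates) makes this bound hold simultaneously for every iteration $k$, which is essential because the theorem takes $k\to\infty$.

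For the propagation step, I would unroll $Q^\star - Q^{\pi_{\fh}}$ into a discounted sum of the one-step Bellman residuals $\fh - \Tcal\fo$ by repeatedly applying the Bellman and transition operators, in the style of the error-propagation analyses cited in Section~\ref{sec:intro}. Each term in this sum measures a residual under a distribution obtained by rolling out some policy from $\eta_1$, i.e.\ an \emph{admissible} distribution; Assumption~\ref{asm:concentratability} bounds each such density ratio against $\mu$ by $C$, and a Jensen/Cauchy--Schwarz step converts the resulting $L_1$ quantities into $\sqrt{C}$ times the $L_2(\mu)$ residuals controlled above. Summing the geometric series in $\gamma$ and accounting for the greedy-policy suboptimality step produces the prefactor $\sqrt{C}/(1-\gamma)^2$, while the influence of the arbitrary initializer $f_0$ enters only through a $\gamma^k$ term that vanishes as $k\to\infty$. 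Setting $\tfrac{\sqrt C}{(1-\gamma)^2}\,\|\fh-\Tcal\fo\|_{2,\mu} \le \epsilon\, \Vmax$ and substituting the statistical bound, the factors of $\Vmax$ cancel, and solving for $n$ yields the claimed $n = O\big(C\ln(|\Fcal|/\delta)/(\epsilon^2(1-\gamma)^4)\big)$.

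I expect the main obstacle to be the statistical step, specifically establishing the \emph{fast} $1/n$ rate uniformly across iterations: the slow $1/\sqrt n$ rate would degrade the final bound to $1/\epsilon^4$, so the whole improvement advertised in Section~\ref{sec:upper} hinges on exploiting well-specifiedness via the variance-to-mean (self-bounding) relationship of the squared loss and then paying only a logarithmic price for the union bound over $\Fcal$. The propagation step is comparatively standard, but care is needed to track the exact powers of $(1-\gamma)$ --- in particular the extra factor from the $L_1$-to-$L_2$ conversion and from relating the value of the greedy policy $\pi_{\fh}$ to the Bellman residual --- so that the squared prefactor lands precisely on $(1-\gamma)^4$.
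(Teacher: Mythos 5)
Your proposal is correct and follows essentially the same route as the paper's own proof: your fast-rate statistical step is exactly the paper's Lemma~\ref{lem:fast_1} (Bernstein with the variance-to-mean self-bounding of the squared excess loss, union-bounded over the $|\Fcal|\times|\Fcal|$ target--candidate pairs to get uniformity over iterations), and your propagation step is exactly what the paper's Lemmas~\ref{lem:decompose}, \ref{lem:piff}, and \ref{lem:iteration} implement, paying $\sqrt{C}$ per unrolled residual and leaving a $\gamma^k \Vmax$ initialization term that vanishes as $k\to\infty$, giving the $\frac{2\sqrt{C\epsilon_1}}{(1-\gamma)^2}\le\epsilon\Vmax$ condition and hence the claimed $n$.
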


\begin{theorem}[Sample complexity of the minimax variant] \label{thm:minimax}
	Given a dataset $D=\{(s,a,r,s')\}$ with sample size $|D|=n$ and $\Fcal$, $\Gcal$ that satisfy realizability (Assumption~\ref{asm:realizability}) and completeness (Assumption~\ref{asm:completeness}) respectively,  w.p.~$\ge 1-\delta$, the output policy of the minimax algorithm (Eq.\eqref{eq:minimax}), $\pi_{\hat f}$, satisfies $v^\star - v^{\pi_{\hat f}} \le \epsilon \cdot  \Vmax$, if
	$\displaystyle
	n = O\left(\frac{C\ln\frac{|\Fcal||\Gcal|}{\delta}}{\epsilon^2(1-\gamma)^4}\right).
	$
\end{theorem}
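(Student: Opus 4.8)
The plan is to control the population Bellman error $\|\hat f - \Tcal \hat f\|_{2,\mu}^2$ of the minimax solution, then convert this into a policy suboptimality bound using concentratability. First I would define, for each $f \in \Fcal$, the population version of the minimax objective, $\Lcal_\mu(f;f) - \inf_{g\in\Gcal}\Lcal_\mu(g;f)$. By the decomposition in Eq.\eqref{eq:cond_var}, the conditional-variance term that corrupts $\Lcal_\mu(f;f)$ is exactly the Bayes-optimal error of the regression $(s,a)\mapsto r + \gamma V_f(s')$, so under completeness (Assumption~\ref{asm:completeness}) the regressor $\Tcal f \in \Gcal$ attains this Bayes error, giving $\inf_{g\in\Gcal}\Lcal_\mu(g;f) = \gamma^2\,\EE_{(s,a)\sim\mu}[\VV_{s'\sim P(s,a)}[V_f(s')]]$. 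Subtracting cancels the variance term and leaves precisely $\|f-\Tcal f\|_{2,\mu}^2$. Thus the population minimax objective recovers the true Bellman error, and realizability (Assumption~\ref{asm:realizability}) guarantees $Q^\star\in\Fcal$ makes this objective zero, so the population minimizer is $Q^\star$.

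The second step is the uniform concentration argument, which I expect to be the main obstacle. I must bound the deviation of the empirical objective $\Lcal_D(f;f)-\inf_{g\in\Gcal}\Lcal_D(g;f)$ from its population counterpart, uniformly over $f\in\Fcal$ (and, for the inner supremum, over $g\in\Gcal$). The difficulty is that $\Lcal_D(f;f)$ is a sum of \emph{squared} residuals rather than bounded $[0,1]$ losses, and the empirical minimizer $\hat g$ depends on the data, so one cannot naively apply a single Hoeffding bound. The standard remedy is to work with loss \emph{differences} $\ell_f(s,a,r,s') - \ell_g(s,a,r,s')$, whose variance is itself controlled by the expected difference (a Bernstein/variance-based argument), yielding fast $O(1/n)$ rates on the squared-error scale; this is precisely where the improved dependence on $n$ claimed in the abstract comes from. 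Taking a union bound over the finite classes $\Fcal$ and $\Gcal$ introduces the $\ln(|\Fcal||\Gcal|/\delta)$ factor. I would lean on the more general finite-sample bound (Theorem~\ref{thm:minimax_full}) stated in Appendix~\ref{app:minimax}, specializing $\epsilon_\Fcal=\epsilon_{\Fcal,\Gcal}=0$.

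The final step propagates the guarantee on $\|\hat f-\Tcal\hat f\|_{2,\mu}^2$ to the return gap $v^\star - v^{\pi_{\hat f}}$. Here I would invoke a standard error-propagation lemma: the suboptimality of the greedy policy $\pi_{\hat f}$ is bounded by the Bellman error measured under \emph{admissible} distributions, and Assumption~\ref{asm:concentratability} lets me trade the admissible-distribution Bellman error for the $\mu$-weighted Bellman error at the cost of the factor $C$. Concretely, $v^\star - v^{\pi_{\hat f}} \lesssim \frac{1}{1-\gamma}\sqrt{C}\,\|\hat f-\Tcal\hat f\|_{2,\mu}$, with an extra $1/(1-\gamma)$ from the horizon in the telescoping argument. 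Combining the concentration rate $\|\hat f-\Tcal\hat f\|_{2,\mu}^2 = O\!\left(\frac{\Vmax^2\ln(|\Fcal||\Gcal|/\delta)}{n}\right)$ with this propagation, setting the right-hand side to $\epsilon\Vmax$ and solving for $n$, produces the stated bound $n = O\!\big(C\ln(|\Fcal||\Gcal|/\delta)/(\epsilon^2(1-\gamma)^4)\big)$, where the $(1-\gamma)^{-4}$ collects two factors of $(1-\gamma)^{-1}$ from error propagation and the $\Vmax^2 = \Rmax^2/(1-\gamma)^2$ scaling absorbed into $\epsilon\Vmax$.
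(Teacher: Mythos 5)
Your proposal follows essentially the same route as the paper's own proof: the observation that under completeness the inner regression cancels the conditional-variance term so the population minimax objective equals $\|f-\Tcal f\|_{2,\mu}^2$, the Bernstein-based fast-rate concentration applied to loss \emph{differences} (handling the data-dependent inner maximizer $\widehat\Tcal_\Gcal f$) with a union bound over $\Fcal\times\Gcal$, and the final error propagation $v^\star - v^{\pi_{\hat f}} \lesssim \frac{\sqrt{C}}{(1-\gamma)^2}\|\hat f - \Tcal\hat f\|_{2,\mu}$ via concentratability, exactly as in Theorem~\ref{thm:minimax_full} and Lemmas~\ref{lem:decompose}, \ref{lem:iteration}, \ref{lem:fast_1}, and \ref{lem:fast_2}. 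The plan is sound and matches the paper's argument step for step.
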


Our results show that the suboptimality $\epsilon$ decreases in the rate of $n^{-1/2}$ when realizability and completeness hold exactly, and the more general error bounds (Theorems~\ref{thm:fqi_full} and \ref{thm:minimax_full}) degrade gracefully from the exact case as $\epsilon_{\Fcal, \Fcal}$ (or $\epsilon_{\Fcal}$ and $\epsilon_{\Fcal, \Gcal}$) increases. This is obtained via the use of Bernstein's inequality to achieve fast rate in least square regression. 
While results similar to Theorems~\ref{thm:fqi} and \ref{thm:fqi_full} exist (\citealt[Chapter 5]{farahmand2011regularization}; see also \citet{lazaric2012finite, pires2012statistical, farahmand2016regularized}), according to our knowledge, fast rate for the minimax algorithm has not been established before: for example, \citet{antos2008learning, munos2008finite} obtain an error rate of $n^{-1/4}$ in closely related settings, but their rates do not improve to $n^{-1/2}$ in the absence of approximation.\footnote{Note however that they handle infinite function classes. 
In fact, \citet[pg.831]{munos2008finite} have discussed the possibility of an $n^{-1/2}$ result, which we obtain here. See the beginning of Appendix~\ref{app:fqi} for further discussions.} The major limitation of our result is the assumption of finite $\Fcal$ and $\Gcal$ due to our minimal setup, and we refer readers to \citet{yang2019theoretical} for a recent analysis that specializes in ReLU networks.\footnote{Their analysis modifies the FQI algorithm and samples fresh data in each iteration, dodging some of the technical difficulties due to reusing the same batch of data, which we handle here.}

We do not discuss the proofs in further details since the improvement in error rate is a side-product and this section is mainly meant to simplify prior analyses and provide a basis for subsequent discussions. Interested readers are invited to consult Appendices~\ref{app:fqi} and \ref{app:minimax} where we provide sketched outlines as well as detailed proofs. 

\section{On Concentratability}
\label{sec:concentratability}

In this section, we establish the necessity of Assumption~\ref{asm:concentratability} and show natural examples where concentratability is low.  
While it is easy to construct a counterexample of missing data\footnote{ That is, $\mu$ puts $0$ probability on important states and actions.} against removing Assumption~\ref{asm:concentratability}, such a counterexample only reflects a trivial failure mode due to an adversarial choice of data. What we show is a deeper and nontrivial failure mode: Even with the \emph{most favorable} data distribution, polynomial sample complexity is precluded if we put no restriction on MDP dynamics. This result improves our understanding on concentratability, and shows that this assumption is not only about the data distribution, but also (and perhaps more) about the environment and the state distributions induced therein. 

\subsection{Lower Bound} \label{sec:con_lower}
To show that low concentratability is necessary, we prove a hardness result, where both realizability and completeness hold, and an algorithm has the freedom to choose \emph{any} data distribution $\mu$ that is favorable, yet \emph{no algorithm} can achieve $poly(|\Acal|, \tfrac{1}{1-\gamma}, \ln|\Fcal|, \ln|\Gcal|, \tfrac{1}{\epsilon}, \tfrac{1}{\delta})$ sample complexity. Crucially, the  concentratability coefficient of any data distribution on the worst-case MDP is always exponential in horizon, so the lower bound does not conflict with the upper bounds in Section~\ref{sec:upper}, as the exponential sample complexity would have been explained away by the dependence on $C$.

\begin{theorem}
	\label{thm:lower_con}
	There exists a family of MDPs $\Mcal$ (they share the same $\Scal$, $\Acal$, $\gamma$), $\Fcal$ that realizes the $Q^\star$ of every MDP in the family, and $\Gcal$ that realizes $\Tcal_{M'} f$ for any $M'\in\Mcal$ and any $f\in\Fcal$, such that: for any data distribution and any batch algorithm with $(\Fcal, \Gcal)$ as input, an adversary can choose an MDP from the family, such that the sample complexity for the algorithm to find an $\epsilon$-optimal policy cannot be poly($|\Acal|,  \tfrac{1}{1-\gamma}, \ln|\Fcal|, \ln|\Gcal|, \tfrac{1}{\epsilon}, \tfrac{1}{\delta}$).
\end{theorem}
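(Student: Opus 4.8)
The plan is to prove the statement by a \emph{needle-in-a-haystack} construction built on a deterministic tree, designed so that realizability and completeness hold exactly while the hardness is purely information-theoretic and holds for \emph{every} data distribution. I would take the state space to be the nodes of a complete binary tree of depth $H=\Theta(1/(1-\gamma))$, with two actions that move deterministically to the left/right child, and let each of the $N=2^{H}$ leaves self-loop. The family $\Mcal$ is indexed by a ``secret'' leaf $\ell$: in $M_\ell$ the self-loop of $\ell$ is absorbing and emits reward $\Rmax$ each step, while every other transition yields zero reward. The crucial feature is that \emph{all members of $\Mcal$ share the same deterministic dynamics} and differ only in which leaf is rewarding. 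This is essentially the hard instance from the PAC-exploration literature, so I would cite that instance and devote the proof to what is genuinely new here: meeting the two representation assumptions.

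Next I would construct $\Fcal$ and $\Gcal$ so that Assumptions~\ref{asm:realizability} and~\ref{asm:completeness} hold \emph{exactly} while $\ln|\Fcal|$ and $\ln|\Gcal|$ remain $O(H)$. Take $\Fcal=\{Q^\star_{M_\ell}:\ell \text{ a leaf}\}$, which has size $N$ and contains $Q^\star$ of the adversary's MDP, giving realizability. Because transitions are deterministic, $\Tcal_{M_\ell}f(s,a)=R_\ell(s,a)+\gamma V_f(s')$ with $s'$ the unique successor of $(s,a)$, so $\Gcal=\{\Tcal_{M_\ell}f:\ell \text{ a leaf},\,f\in\Fcal\}$ is well-defined, has size $\le N^2$, lies in $(\Scal\times\Acal\to[0,\Vmax])$, and by construction contains $\Tcal_{M'}f$ for every $M'\in\Mcal$ and every $f\in\Fcal$; hence completeness holds. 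I expect this to be the \textbf{main obstacle}: simultaneously satisfying completeness \emph{exactly} with a $\mathrm{poly}$-$\log$-size $\Gcal$ is what forces the dynamics to be deterministic and shared across the family, since stochastic or heterogeneous instances typically blow up the Bellman-image class or destroy exact closedness.

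For the hardness itself, fix any $\mu$, any algorithm, and sample size $n$, and let $W$ be the (at most $n$) distinct states appearing among the draws $(s,a)\sim\mu$. Since next-states are $\ell$-independent and a reward is observed only when a sampled state equals the secret leaf, the dataset is \emph{identical} to the reward-free null dataset whenever $\ell\notin W$. On that event the output policy $\hat\pi$ depends only on the null data, and its leaf-reaching probabilities $q_\ell$ satisfy $\sum_\ell q_\ell=1$. Averaging suboptimality over $\ell$ drawn uniformly from the $N$ leaves, and using $|W|\le n$ together with $\sum_\ell q_\ell\le 1$, the mean of $\Indi[\ell\notin W]\,(1-q_\ell)$ is at least $1-(n+1)/N$. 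Hence for $n\le N/2$ the $\ell$-averaged expected suboptimality is at least $\gamma^{H}\Vmax(\tfrac12-o(1))=\Omega(\Vmax)$, since $\gamma^{H}=\Theta(1)$ for $H=\Theta(1/(1-\gamma))$. The one delicate point is the event-conditioning in this averaging, which I deliberately sidestep by bounding via $|W|\le n$ rather than conditioning on $\{\ell\notin W\}$ leaf by leaf.

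Finally I would extract a worst-case instance and convert the expectation bound into a failure probability: because the $\ell$-averaged expected suboptimality is $\Omega(\Vmax)$, some leaf $\ell^\star$ (which the adversary selects) has expected suboptimality $\Omega(\Vmax)$, and a reverse-Markov inequality on the $[0,\Vmax]$-valued suboptimality shows it exceeds a constant fraction of $\Vmax$ with constant probability. Thus for constant accuracy $\epsilon$ and confidence $\delta$, success forces $n>N/2=2^{H-1}$, i.e.\ sample complexity exponential in $1/(1-\gamma)$, whereas $|\Acal|=2$, $\ln|\Fcal|$, $\ln|\Gcal|$, $1/\epsilon$, and $1/\delta$ are all $\mathrm{poly}(1/(1-\gamma))$; letting $H$ grow rules out any polynomial bound. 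I would close by reconciling this with Theorem~\ref{thm:fqi}: since any $\mu$ must spread its mass over exponentially many leaves, the concentratability coefficient $C$ is necessarily exponential, which is exactly the factor the upper bound pays, so the two results are consistent.
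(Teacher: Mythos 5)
Your proposal is correct, and its skeleton coincides with the paper's: a family of MDPs on a deterministic complete tree whose members differ only in which single leaf is rewarding, together with $\Fcal=\{Q^\star_{M'}: M'\in\Mcal\}$ and $\Gcal=\{\Tcal_{M'}Q^\star_{M''}: M',M''\in\Mcal\}$, which give exact realizability and completeness at logarithmic size --- exactly the paper's choice. Where you genuinely diverge is the hardness argument. The paper uses noisy rewards (Bernoulli$(1/2)$ at every leaf, Bernoulli$(1/2+\epsilon)$ at the secret one) and proves the lower bound in two steps: it invokes the PAC-exploration lower bound of Krishnamurthy et al.~(2016) --- the instance is equivalent to best-arm identification with $|\Acal|^H$ arms, giving $\Omega(|\Acal|^H\ln(1/\delta)/\epsilon^2)$ even for algorithms that explore adaptively --- and then observes that, since all members share the same known deterministic dynamics, any pair (data distribution, batch algorithm) can be simulated by an exploring learner, so the exploration lower bound transfers to the batch setting. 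You instead use noiseless $0/\Rmax$ rewards and prove the batch statement directly: for $n\le N/2$ the secret leaf is absent from the sampled states for most choices of the secret, on which event the dataset coincides with a reward-free null dataset; your averaging bound $\tfrac{1}{N}\sum_\ell \Indi[\ell\notin W]\,(1-q_\ell)\ge 1-(n+1)/N$ (which correctly sidesteps the conditioning subtlety by bounding the indicator sum via $|W|\le n$ rather than conditioning leaf by leaf) combined with a reverse-Markov step extracts a hard instance. The trade-off is clear: the paper's reduction buys the sharper $1/\epsilon^2$ and $\ln(1/\delta)$ dependence (hardness persists even when the data covers every leaf, because of reward noise) and the conceptually stronger conclusion that even active exploration fails; your argument is more elementary and self-contained (no bandit lower-bound machinery, no batch-to-exploration reduction), but with deterministic rewards a single sample at the secret leaf reveals it, so you only obtain hardness at constant $\epsilon$ --- which nevertheless suffices to rule out any polynomial bound in the listed parameters, and hence proves the theorem as stated.
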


\begin{proof}
	We construct $\Mcal$, a family of hard MDPs, and prove the theorem via the combination of two arguments: 
	\begin{enumerate}[leftmargin=*]
		\item All algorithms are subject to an exponential lower bound (w.r.t.~the horizon) even if  (a) they have compact $\Fcal$ and $\Gcal$ that satisfy realizability and completeness as inputs, and (b) they can perform \emph{exploration} during data collection.
		\item Since the MDPs in the construction share the same deterministic transition dynamics, the combination of any data distribution and any batch RL algorithm is \emph{a special case} of an exploration algorithm.
	\end{enumerate}
	
	We first provide argument (1), which reuses the construction by \citet{krishnamurthy2016pac}. Let each instance of $\Mcal$ be a complete tree with branching factor $|\Acal|$ and depth $H = \lfloor 1/(1-\gamma) \rfloor$.  Transitions are deterministic, and only leaf nodes have non-zero rewards. All leaves give Ber$(1/2)$ rewards, except for one that gives Ber($1/2+\epsilon$). Changing the position of this optimal leaf yields a family of $|\Acal|^H$ MDPs, and in order to  achieve a suboptimality that is a constant fraction of $\epsilon$, the algorithm is required to identify this optimal leaf.\footnote{All leaf rewards are discounted by only a constant when $\gamma \to 1$, as $\gamma^{1/(1-\gamma)} \to e^{-1}$.} In fact, the problem is equivalent to the hard instances of best arm identification with $|\Acal|^H$ arms, so even if an algorithm can perform active exploration, the sample complexity is still $\Omega(|\Acal|^H \ln(1/\delta) / \epsilon^2)$ (see \citet{krishnamurthy2016pac} for details, who use standard techniques from \citet{auer2002finite}).
	
	Now we provide $\Fcal$ and $\Gcal$ that (1) satisfy Assumptions~\ref{asm:realizability} and \ref{asm:completeness}, (2) do not provide any information other than the fact that the problem is in $\Mcal$, and (3) have ``small'' logarithmic sizes so that $\ln|\Fcal|$ and $\ln|\Gcal|$ cannot explain away the exponential sample complexity. Let $\Fcal = \{Q_{M'}^\star: M' \in \Mcal\}$, where the subscript specifies the MDP with respect to which we compute $Q^\star$. Let $\Gcal = \{\Tcal_{M'} Q_{M''}^\star: M', M'' \in \Mcal \}$. Such $\Fcal$ and $\Gcal$ satisfy realizability and completeness by definition, and have statistical complexities $\ln|\Fcal|=H\ln|\Acal|$ and $\ln|\Gcal|\le 2H\ln|\Acal|$, respectively. With this, we conclude that any \emph{exploration} algorithm cannot obtain $poly(|\Acal|, \tfrac{1}{1-\gamma}, \ln|\Fcal|, \ln|\Gcal|, \tfrac{1}{\epsilon})$ sample complexity.
	
	We complete the proof with the second argument. Note that all the MDPs in $\Mcal$ only differ in leaf rewards and share the same deterministic transition dynamics. Therefore, a learner with the ability to actively explore can mimic the combination of \emph{any data distribution $\mu\in \Delta(\Scal\times\Acal)$ and any batch RL algorithm}, by (1) collecting data from $\mu$ (which is always doable due to known and deterministic transitions), and (2) running the batch algorithm after data is collected. This completes the proof.
\end{proof}

\subsection{Natural Examples} \label{sec:visgrid}
We have shown that polynomial learning is precluded if no restriction is put on the MDP dynamics, even if data is chosen in a favorable manner. The next question is, is low concentratability common, or at least found in interesting problems? In general, even if the data distribution $\mu$ is uniform over the state-action space, the worst-case $C$ might still scale with $|\Scal\times\Acal|$, which can be too large in challenging RL problems for the guarantees to be any meaningful. To this end, \citet{munos2007performance} has provided several carefully constructed tabular examples, demonstrating that $C$ does not always scale badly. However, are there more general problem families that capture RL scenarios found in empirical work, yet always yield a bounded $C$? 

\para{Example in problems with rich observations} We find answers to the above problem in recent development of PAC exploration in rich-observation problems \citep{krishnamurthy2016pac, jiang2017contextual, dann2018oracle}, where a general low-rank condition (a.k.a.~\emph{Bellman rank} \citep{jiang2017contextual}) has been identified that enables sample-efficient exploration under function approximation. One of the prominent examples where such a condition holds is inspired by ``visual gridworld'' environments in empirical RL research \citep[see e.g.,][]{johnson2016malmo}: the dynamics are defined over a small number of hidden states (e.g., grids), and the agent receives high dimensional observations that are generated i.i.d.~from the hidden states (e.g., raw-pixel images as observations). Below we show that in these environments, there always exists a data distribution that yields small $C$ for batch learning, and such a distribution can be naturally generated as a mixture of admissible distributions. We include an informal statement below, deferring the precise version and the proof to Appendix~\ref{app:visgrid}. 

\begin{proposition}[Informal] \label{prop:visgrid}
Let $M$ be a reactive POMDP as defined in \citet{jiang2017contextual}, where the underlying hidden state space $\Zcal$ is finite but the (Markov) observation space $\Scal$ can be arbitrarily large. There always exists a state-action distribution $\mu$ such that $C = |\Zcal\times\Acal|$ satisfies Assumption~\ref{asm:concentratability}. Furthermore, $\mu$ can be obtained by taking a probability mixture of several admissible distributions.
\end{proposition}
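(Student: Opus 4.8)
The plan is to exploit the generative (block) structure of a reactive POMDP, as in the visual-gridworld example of \citet{jiang2017contextual}: each hidden state $z\in\Zcal$ has a \emph{fixed} emission distribution $O(\cdot\mid z)$ over observations, and distinct hidden states emit observations with disjoint supports, so there is a decoding map $\phi:\Scal\to\Zcal$. The crucial point is that the conditional law of the observation given the current hidden state is exactly $O(\cdot\mid z)$, \emph{independent of the policy, the history, and the timestep}. Hence any admissible distribution $\nu$ — generated at some step $h$ by some (possibly non-stationary, stochastic) policy $\pi$ — factorizes at the observation level as $\eta^{\pi}_h(s)=\Pr[z_h=\phi(s)\mid s_1\sim\init,\pi]\cdot O(s\mid\phi(s))$, so that $\nu(s,a)=\Pr[z_h=\phi(s)\mid\pi]\,O(s\mid\phi(s))\,\pi(a\mid s)$. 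Writing $p_z:=\max_{\pi,h}\Pr[z_h=z\mid s_1\sim\init,\pi]$ for the best achievable probability of occupying hidden state $z$, and bounding $\pi(a\mid s)\le 1$, this yields the crude but key inequality $\nu(s,a)\le p_{\phi(s)}\,O(s\mid\phi(s))$ valid for \emph{every} admissible $\nu$.

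Next I would construct $\mu$ explicitly as a mixture of admissible distributions. For each $(z,a)\in\Zcal\times\Acal$ with $p_z>0$, fix a policy $\pi^{(z)}$ and step $h^{(z)}$ attaining $p_z$, and let $\nu_{z,a}$ be the admissible distribution obtained by following $\pi^{(z)}$ for $h^{(z)}-1$ steps and then deterministically playing action $a$ at step $h^{(z)}$; this is admissible precisely because a non-stationary policy is permitted to fix its final action. By the factorization above, $\nu_{z,a}(s,a)=p_z\,O(s\mid z)$ for every observation $s$ in the block of $z$. Define $\mu:=\tfrac{1}{|\Zcal|\,|\Acal|}\sum_{z,a}\nu_{z,a}$, a uniform mixture of at most $|\Zcal\times\Acal|$ admissible distributions, which immediately establishes the ``furthermore'' clause. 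Retaining only the single term indexed by $(\phi(s),a)$ gives the lower bound $\mu(s,a)\ge \tfrac{1}{|\Zcal|\,|\Acal|}\,p_{\phi(s)}\,O(s\mid\phi(s))$.

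Combining the two bounds, for any admissible $\nu$ and any $(s,a)$ in its support (so $p_{\phi(s)}>0$ and $O(s\mid\phi(s))>0$, ruling out division by zero), one gets $\tfrac{\nu(s,a)}{\mu(s,a)}\le \tfrac{p_{\phi(s)}\,O(s\mid\phi(s))}{\tfrac{1}{|\Zcal|\,|\Acal|}\,p_{\phi(s)}\,O(s\mid\phi(s))}=|\Zcal\times\Acal|$, which verifies Assumption~\ref{asm:concentratability} with $C=|\Zcal\times\Acal|$. Observations lying in unreachable blocks ($p_z=0$) receive no mass under any admissible $\nu$ and are harmless.

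I expect the only real content to live in the first step: the structural fact that the within-block (conditional-on-hidden-state) observation law is exactly the fixed emission and is invariant to the policy, history, and time. This invariance is what makes the arbitrarily large factor $O(s\mid\phi(s))$ cancel between numerator and denominator, so that the potentially enormous observation space $\Scal$ never enters $C$ and only the hidden counts $|\Zcal\times\Acal|$ survive; it is also why decodability (disjoint emission supports) is needed — with overlapping emissions the same argument only yields a worse constant of order $|\Zcal|^2|\Acal|$. The remaining steps — selecting reachability-maximizing policies, forming the uniform mixture, and the ratio computation — are routine once the emission invariance and the decoding map $\phi$ are in place, with care only to confirm each $\nu_{z,a}$ is genuinely admissible and that the $\tfrac{1}{|\Zcal|\,|\Acal|}$ weights match the target $|\Zcal\times\Acal|$.
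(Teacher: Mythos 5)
Your proof establishes the result only under an additional structural assumption that is not part of the proposition's hypothesis, and this is a genuine gap. You assume from the outset that distinct hidden states emit observations with \emph{disjoint supports}, so that a decoding map $\phi:\Scal\to\Zcal$ exists; every subsequent step (the factorization $\nu(s,a)\le p_{\phi(s)}\,O(s\mid\phi(s))$, the cancellation of the emission probability in the ratio) leans on this. But the proposition, and its formal version in the paper's Appendix B, only assume a reactive POMDP with Markov observations: the emission distributions $P(\cdot\mid z)$ may overlap arbitrarily, and no decoder exists in general. You acknowledge this yourself and note that with overlapping emissions your argument only yields a constant of order $|\Zcal|^2|\Acal|$ --- but that is not the claimed bound $C=|\Zcal\times\Acal|$, so under the actual hypothesis your argument proves a strictly weaker statement. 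In short, you have a correct and pleasantly elementary proof for block MDPs (which also delivers the ``mixture of admissible distributions'' clause), but not a proof of the proposition as stated. A smaller technical point: your construction needs the maximum defining $p_z=\max_{\pi,h}\Pr[z_h=z\mid\pi]$ to be attained; the paper sidesteps such issues by assuming the number of admissible distributions is finite.

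The idea you are missing is how to get the ``within-block cancellation'' without ever decoding. The paper's proof writes each admissible state marginal as $\nu_h^\pi(s)=\sum_{z\in\Zcal}P(s\mid z)\,\nu_h^\pi(z)$ (valid with overlap), stacks all admissible state distributions into a matrix $A_\Scal=A_\Zcal\,P_{\Scal|\Zcal}$, and concludes $\mathrm{rank}(A_\Scal)\le|\Zcal|$. It then factors $A_\Zcal=B_\Zcal C_\Zcal$ and selects the $r\le|\Zcal|$ rows of $B_\Zcal$ maximizing the absolute determinant (spanned volume); a determinant-swap contradiction shows every other row is a linear combination of these $r$ rows with coefficients $|\alpha_j|\le 1$ --- a barycentric-spanner-type argument. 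Taking $\mu(s)=\tfrac{1}{r}\sum_{j=1}^r \nu_{h_{i_j}}^{\pi_{i_j}}(s)$ and $\mu(a\mid s)=1/|\Acal|$, nonnegativity of probabilities gives, for any admissible $\nu$,
\begin{equation*}
\nu_{h_i}^{\pi_i}(s)=\sum_{j=1}^r\alpha_j\,\nu_{h_{i_j}}^{\pi_{i_j}}(s)\le\sum_{j=1}^r\nu_{h_{i_j}}^{\pi_{i_j}}(s)=r\,\mu(s),
\end{equation*}
hence $\nu(s,a)/\mu(s,a)\le r|\Acal|\le|\Zcal\times\Acal|$. This is exactly the cancellation your decoder provides in the block case, but it is extracted from low-rankness of the set of admissible distributions rather than from disjoint supports, which is why the bound $|\Zcal\times\Acal|$ survives arbitrary emission overlap. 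If you want to salvage your write-up, either restrict the claim explicitly to block MDPs, or replace your first step with this rank/volume argument.
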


Similar results can be established for other structures studied by \citet{jiang2017contextual} (e.g., large MDPs with low-rank transitions), which we omit here. These results suggest that \emph{Bellman rank is the counterpart for concentratability coefficient in the online exploration setting}. Further implications and how to leverage this connection  to improve the definition of concentratability will be discussed in Section~\ref{sec:conclusion}.

\section{On Completeness}
\subsection{Towards an Information-Theoretic Lower Bound in the Absence of Completeness} \label{sec:lower_complete}
We would also like to establish the necessity of completeness by showing that, there exist hard MDPs that cannot be efficiently learned with value-function approximation, even under low concentratability and realizability (Assumptions~\ref{asm:concentratability} and \ref{asm:realizability}).\footnote{Note that the existence of such a lower bound would not imply that completeness is indispensable. Rather it simply states that realizability alone is insufficient, and we need stronger conditions on $\Fcal$, for which completeness is a candidate. } In fact, \emph{algorithm-specific} hardness results have been known for a long time \citep[see e.g.,][]{van1994feature, gordon1995stable, tsitsiklis1997analysis}, where ADP algorithms are shown to diverge even in MDPs with a small number of states, when the algorithm is forced to work with a restricted class of functions.\footnote{Interested readers can consult \citet{shipranotes}. See also \citet[Theorem 45]{dann2018oracle} for a more plain example.} Unfortunately, such hardness results are insufficient to confirm the fundamental difficulty of the problem, and it is important to seek \emph{information-theoretic} lower bounds. 

While we are not able to obtain such a lower bound, what we find is that the counterexample (if it exists) must be highly nontrivial and probably need ideas that are not present in standard statistical learning theory (SLT) and RL literature. More concretely, we show that two general proof styles are destined to fail in such a task, as polynomial sample complexity can be achieved information-theoretically.

\para{Exponential-sized model family will not work}
Standard lower bounds in SLT often start with the construction of a family of problem instances that has an exponential size \cite{yu1997assouad}.\footnote{In fact, our Theorem~\ref{thm:lower_con} also follows this style, whose construction is due to \citet{krishnamurthy2016pac, jiang2017contextual}.} We show that this will simply never work, which is a direct corollary of Theorem~\ref{thm:minimax}:

\begin{corollary}[Batch model-based RL only needs realizability] \label{cor:mbrl}
	Let $D=\{(s,a,r,s')\}$ be a dataset with sample size $|D|=n$, $C$ as defined in Assumption~\ref{asm:concentratability}, and $\Mcal$ a model class that realizes the true MDP $M$, i.e., $M\in\Mcal$. There exists  an (information-theoretic) algorithm that takes $\Mcal$ as input and return an $(\epsilon \Vmax)$-optimal policy w.p.~$\ge 1-\delta$, if
	$\displaystyle
	n = O\left(\frac{C\ln\frac{|\Mcal|}{\delta}}{\epsilon^2(1-\gamma)^4}\right).
	$
\end{corollary}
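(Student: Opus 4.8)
The plan is to derive this purely as a reduction to the minimax algorithm of Theorem~\ref{thm:minimax}, manufacturing value-function classes $\Fcal$ and $\Gcal$ out of the model class $\Mcal$ so that realizability and completeness hold by construction. Concretely, I would set
$$\Fcal := \{Q^\star_{M'} : M' \in \Mcal\}, \qquad \Gcal := \{\Tcal_{M'} Q^\star_{M''} : M', M'' \in \Mcal\},$$
where each $Q^\star_{M'}$ and each Bellman operator $\Tcal_{M'}$ is computed from the (fully known) description of the candidate model $M'$ --- this is exactly the construction used in the proof of Theorem~\ref{thm:lower_con}. Since the true reward and transitions are accessed only through the dataset $D$ (whose regression targets $r+\gamma V_f(s')$ have conditional mean $\Tcal_M f$), the algorithm is ``information-theoretic'' in the sense that both classes are determined by $\Mcal$ alone, independent of the unknown $M$.

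Next I would verify the two representation conditions against the true environment $M$. Because $M\in\Mcal$ by hypothesis, $Q^\star = Q^\star_M \in \Fcal$, so Assumption~\ref{asm:realizability} holds. For completeness (Assumption~\ref{asm:completeness}), every $f\in\Fcal$ has the form $Q^\star_{M''}$, and its \emph{true} Bellman image $\Tcal f = \Tcal_M Q^\star_{M''}$ lies in $\Gcal$ precisely because $M\in\Mcal$; the presence in $\Gcal$ of images under the ``wrong'' operators $\Tcal_{M'}$ is harmless. Finally, the cardinalities obey $|\Fcal|\le|\Mcal|$ and $|\Gcal|\le|\Mcal|^2$, so $\ln(|\Fcal||\Gcal|)\le 3\ln|\Mcal|$.

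With realizability, completeness, and the concentratability coefficient $C$ all in force, I would invoke Theorem~\ref{thm:minimax} verbatim: running the minimax program of Eq.\eqref{eq:minimax} on the pair $(\Fcal,\Gcal)$ returns a policy $\pi_{\hat f}$ with $v^\star - v^{\pi_{\hat f}} \le \epsilon\,\Vmax$ w.p.~$\ge 1-\delta$ once $n = O\!\big(C\ln(|\Fcal||\Gcal|/\delta)/(\epsilon^2(1-\gamma)^4)\big)$. Substituting the size bounds collapses $\ln(|\Fcal||\Gcal|/\delta)$ into $O(\ln(|\Mcal|/\delta))$, yielding exactly the claimed rate.

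The reduction carries no heavy analysis of its own --- all statistical work is already contained in Theorem~\ref{thm:minimax} --- so the only point requiring genuine care is that completeness must be checked against the \emph{true} operator $\Tcal_M$ rather than against an arbitrary $\Tcal_{M'}$; this is what forces $\Gcal$ to range over models in its first coordinate, and is the step where the assumption $M\in\Mcal$ is indispensable. A secondary subtlety worth stating explicitly is the information-theoretic disclaimer: building $\Fcal$ and $\Gcal$ may be computationally prohibitive, since one must solve every candidate MDP and apply every candidate Bellman operator, but because these objects are completely determined by the input $\Mcal$, the sample-complexity guarantee is unaffected.
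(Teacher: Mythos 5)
Your proposal is correct and follows essentially the same route as the paper's own proof: the identical construction $\Fcal = \{Q^\star_{M'} : M' \in \Mcal\}$, $\Gcal = \{\Tcal_{M'} Q^\star_{M''} : M', M'' \in \Mcal\}$, the same cardinality bounds, and the same invocation of Theorem~\ref{thm:minimax}. Your explicit verification that completeness must hold with respect to the \emph{true} operator $\Tcal_M$ (which is where $M \in \Mcal$ is used) is a detail the paper leaves implicit, but it is the same argument.
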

\begin{proof}
	We use the same idea as the proof of Theorem~\ref{thm:lower_con}: Let $\Fcal = \{Q_{M'}^\star: M'\in\Mcal\}$, and $\Gcal = \{\Tcal_{M'} Q_{M''}^\star: M', M''\in\Mcal\}$. Note that $\ln|\Fcal| \le \ln|\Mcal|$, and $\ln |\Gcal| \le 2\ln |\Mcal|$. $(\Fcal, \Gcal)$ satisfy both realizability and completeness, so we apply the minimax algorithm (Eq.\eqref{eq:minimax}) and the guarantee in Theorem~\ref{thm:minimax} immediately holds.
\end{proof}

Essentially, this result shows that batch model-based RL can succeed with realizability as the only representation condition for the model class, because we can reduce it to value-based learning and obtain completeness \emph{for free}. This illustrates a significant barrier to an algorithm-independent lower bound, that in an information-theoretic setting, the learner can always specialize in the family of hard instances and have the freedom to choose its algorithm style, thus can be \emph{model-based}. However, in the context of value-function approximation, it is obvious that we are assuming no prior knowledge of the model class and hence cannot run any model-based algorithm. How can we encode such a constraint mathematically?

\para{Tabular MDPs with a restricted value-function class will not work}
\citet[Section 11.6]{sutton2018reinforcement} proposes a clever way to prevent the learner to be model-based for linear function approximation, and a closely related definition is recently given by \citet{sun2018model} that applies to arbitrary function classes. 

The idea is the following: Instead of providing the dataset $D=\{(s,a,r,s')\}$ directly, we preprocess the data and mask the identity of $s$ (and $s'$). While $s$ is not directly observable, the learner can query the evaluation of any $f\in\Fcal$ on $s$ for any $a\in\Acal$. That is, we represent each state $s$ by its \emph{value profile}, $\{f(s,a): f\in\Fcal, a\in\Acal\}$. This definition agrees with intuition and can be used to express a wide range of popular algorithms, including FQI.

Using this definition, \citet{sutton2018reinforcement} proves a result closely related to what we aim at here: they show that the Bellman error $\|f - \Tcal f\|$ is not learnable. In particular, there exist two MDPs (with finite and constant-sized state space) and a value function, such that (1) a value-based learner (who only has access to the value profiles of states) cannot distinguish between the data coming from the two MDPs, and (2) the Bellman error of the value function is different in the two MDPs. 

While encouraging and promising, their constructions have a crucial caveat for our purpose, that the value function class is not realizable.\footnote{They force two states who have different optimal values to share the same features for linear function approximation.} With further investigation, we sadly find that such a caveat is fundamental: no information-theoretic lower bound can be shown if realizability holds in na\"ive tabular constructions with a constant-sized state-action space and uniform data, hence value profile cannot be the \emph{only} mechanism to induce hardness. In fact, we can prove a stronger result than we need here for $\Scal$ and $\Acal$ that are not necessarily constant-sized:
\begin{proposition} \label{prop:value_profile}
	Let $M$ be an MDP with a finite state space and $\Fcal$ a realizable function class. Given a dataset $D=\{(s,a,r,s')\}$ where each $(s,a)$ receives $\Omega(|D|/|\Scal\times\Acal|)$ samples, there exists an algorithm that only operates on states via their value profiles yet enjoy poly$(|\Scal|, |\Acal|, \tfrac{1}{1-\gamma}, \tfrac{1}{\epsilon}, \tfrac{1}{\delta})$ sample complexity.
\end{proposition}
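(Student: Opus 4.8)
The plan is to show that, under realizability, the value-profile masking does not destroy information in the tabular regime, by reducing the problem to \emph{tabular model-based RL over a state abstraction} and then invoking standard error-propagation bounds. First I would define the value-profile abstraction $\phi: \Scal \to \Sphi$ that identifies two states exactly when they agree on every coordinate of their value profile, i.e.\ $f(s_1,a) = f(s_2,a)$ for all $f \in \Fcal$ and $a\in\Acal$. The learner can evaluate $\phi$ using only value profiles, and since $\Scal$ is finite there are at most $|\Scal|$ abstract states. The crucial consequence of realizability ($Q^\star \in \Fcal$, Assumption~\ref{asm:realizability}) is that $\phi$ refines the $Q^\star$-irrelevance abstraction: any two states in a common class share the entire vector $Q^\star(s,\cdot)$, so $Q^\star$ is well defined as a function on $\Sphi$, and $V_{Q^\star}$ evaluated at $\phi(s')$ equals $V^\star(s')$.

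Next I would build the empirical certainty-equivalent model $\hat M$ over $\Sphi$ from the abstracted data $\{(\phi(s),a,r,\phi(s'))\}$, estimating $\hat R(x,a)$ and $\hat P(\cdot\mid x,a)$ by empirical reward averages and next-abstract-state frequencies. The hypothesis that every $(s,a)$ receives $\Omega(|D|/|\Scal\times\Acal|)$ samples guarantees that every abstract pair $(x,a)$ also receives $\Omega(|D|/|\Scal\times\Acal|)$ i.i.d.\ samples; so for $|D| = \text{poly}(|\Scal|,|\Acal|,\tfrac{1}{1-\gamma},\tfrac1\epsilon,\tfrac1\delta)$ a standard concentration argument with a union bound over the at most $|\Scal\times\Acal|$ pairs makes $\hat M$ uniformly close to the population certainty-equivalent model $\bar M$ (the MDP on $\Sphi$ obtained by averaging the true dynamics within each class according to the data-induced conditional $\mu(\cdot\mid x,a)$).

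The heart of the argument is a structural lemma: $Q^\star_{\bar M} = Q^\star$ as functions on $\Sphi$. I would prove it by verifying that $Q^\star$ is a fixed point of $\bar M$'s Bellman optimality operator $\bar{\Tcal}$. Because $(\Tcal Q^\star)(s,a) = Q^\star(s,a)$ is class-constant in $s$, any (possibly action-dependent) $\mu$-weighted average of $(\Tcal Q^\star)(s,a)$ over a class $x$ equals $Q^\star(x,a)$, and using $V_{Q^\star}(\phi(s'))=V^\star(s')$ one checks $(\bar{\Tcal} Q^\star)(x,a) = Q^\star(x,a)$; since $\bar{\Tcal}$ is a $\gamma$-contraction with a unique fixed point, $Q^\star_{\bar M}=Q^\star$. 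Combining this with the model-concentration step and a simulation-lemma-style value-perturbation bound yields $\maxnorm{Q^\star_{\hat M} - Q^\star} \le \epsilon'$ on $\Sphi$ (equivalently on $\Scal$, as both sides are class-constant). Finally, outputting the greedy policy $\hat\pi(s) = \argmax_{a} Q^\star_{\hat M}(\phi(s),a)$ --- computable from value profiles alone --- and applying the standard fact that the greedy policy of an $\epsilon'$-accurate action-value function is $\tfrac{2\epsilon'}{1-\gamma}$-suboptimal \emph{in $M$ itself} gives $v^\star - v^{\hat\pi} \le \epsilon\Vmax$ at sample size $n = \text{poly}(|\Scal|,|\Acal|,\tfrac{1}{1-\gamma},\tfrac1\epsilon,\tfrac1\delta)$, crucially independent of $|\Fcal|$.

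I expect the structural lemma $Q^\star_{\bar M}=Q^\star$ to be the main obstacle, and the only place realizability is genuinely used: it certifies that planning in the aggregated model recovers exactly the optimal action-values, so that value-profile masking cannot create hardness. The subtlety to get right is that the within-class averaging weights may depend on the action, which is harmless here precisely because $Q^\star$ is class-constant. The remaining pieces --- concentration of $\hat M$, value perturbation, and the greedy-policy bound --- are routine; and since the construction is ultimately model-based over $\Sphi$, this result is morally a tabular instance of Corollary~\ref{cor:mbrl}.
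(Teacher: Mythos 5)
Your proposal is correct and follows essentially the same route as the paper's proof: the value-profile abstraction $\phi$, $Q^\star$-irrelevance from realizability, certainty-equivalence planning over $\Sphi$, the fixed-point/contraction argument showing the population abstract model preserves $Q^\star$ (including the key observation that action-dependent within-class weights are harmless because $Q^\star$ is class-constant), concentration with a union bound over abstract pairs, and the Singh--Yee greedy-policy bound. The only cosmetic difference is that the paper applies Hoeffding directly to the Bellman backup $\Tcal_{\widehat M_\phi} Q^\star_{M_\phi}$ rather than to the model parameters followed by a simulation lemma, which changes nothing substantive.
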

\begin{proof}[Proof Sketch] (See full proof in Appendix~\ref{app:value_profile}.) 
	If every $s \in \Scal$ has a unique value profile, the state is perfectly decodable and thus one can simply compute the optimal policy of the certainty-equivalent model. If a set of states share exactly the same value profile---and w.l.o.g.~let's consider 2 states, $s_1$ and $s_2$---realizability implies that $Q^\star(s_1, a) = Q^\star(s_2, a)$, $\forall a\in\Acal$. Now consider the algorithm that treat all states with the same value profile as the same state, which essentially uses a state abstraction that is \emph{$Q^\star$-irrelevant} \citep{li2006towards}. It is known that certainty-equivalence with $Q^\star$-irrelevant abstraction is consistent and enjoys polynomial sample complexity when each state-action pair receives enough data \citep{li2009unifying, hutter2014extreme, jiang2015abstraction, abel2016near, nan_abstraction_notes}. 
\end{proof}

Given that we fail to obtain the lower bound, a conjecture is made below and we hope to resolve it in future work.

\begin{conjecture} \label{conj:complete}
There exists a family of MDPs $\Mcal$ that share the same $\Scal$, $\Acal$, and $\gamma$, such that: any algorithm with $\Fcal = \{Q_{M'}^\star: M'\in\Mcal\}$ as input that can only access states via value profiles cannot have poly($\tfrac{1}{1-\gamma}, C, \ln|\Fcal|, \tfrac{1}{\epsilon}, \tfrac{1}{\delta}$) sample complexity. 
\end{conjecture}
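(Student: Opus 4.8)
The plan is to construct a family $\Mcal$ that \emph{simultaneously} evades the two barriers identified above, and then to invoke a two-point (Le Cam) indistinguishability argument in the value-profile observation model. The two failed proof styles pin down exactly what the construction must look like. From Corollary~\ref{cor:mbrl} (the model-based reduction), the value profiles $\{f(s,a): f\in\Fcal, a\in\Acal\}$ must \emph{not} determine the Bellman backups $\Tcal_{M'} Q^\star_{M''}$: if they did, the learner could evaluate the witness class $\Gcal=\{\Tcal_{M'}Q^\star_{M''}\}$ on masked states, run the minimax program in Eq.\eqref{eq:minimax}, and succeed with $\mathrm{poly}(\ln|\Fcal|)$ samples by Theorem~\ref{thm:minimax}. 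Hence the family must contain distinct ground states that share an $\Fcal$-value profile yet have genuinely different one-step dynamics. From Proposition~\ref{prop:value_profile}, the state space $\Scal$ must be large enough that a distribution $\mu$ with small concentratability (Assumption~\ref{asm:concentratability}) does \emph{not} supply the $\Omega(|D|/|\Scal\times\Acal|)$ samples-per-pair that make certainty-equivalence on the value-profile abstraction consistent; equivalently, the abstraction induced by value profiles must be $Q^\star$-irrelevant (guaranteed by realizability, Assumption~\ref{asm:realizability}) but \emph{not} a bisimulation, so that aggregation under the non-uniform low-$C$ data is inconsistent.

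Concretely, I would build on the $Q^\star$-irrelevant-versus-bisimulation dichotomy from the state-abstraction literature. I would take a latent, rich-observation-style structure in which a small set of latent classes carries all reward and value information, and a large observation space $\Scal$ maps many-to-one onto these classes; realizability then makes $\Fcal=\{Q^\star_{M'}\}$ piecewise constant on latent classes, so value profiles reveal the latent $Q^\star$ values but nothing finer. I would then place two (or $\mathrm{poly}$-many) MDPs $M_1,M_2$ in the family whose \emph{within-class} transition and reward kernels differ, but are calibrated so that, under a fixed mixture-of-admissible data distribution $\mu$ of the kind used in Proposition~\ref{prop:visgrid} (which also certifies small $C$), the induced law of the masked tuple $(\,\text{profile}(s),a,r,\text{profile}(s')\,)$ is \emph{identical} across $M_1$ and $M_2$. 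Because the within-class differences are exactly what selects the optimal policy, I would arrange that $\pi^\star_{M_1}$ and $\pi^\star_{M_2}$ (equivalently, the correct choice of $f\in\Fcal$) differ by a constant fraction of $\Vmax$ in value. The conclusion would then follow from a standard two-point argument: since the value-profile data is identically distributed under $M_1$ and $M_2$, no value-profile algorithm can distinguish them, so by Le Cam's method any such algorithm incurs $\Omega(\Vmax)$ suboptimality on at least one instance unless $n$ is super-polynomial, while keeping $|\Scal|$ super-polynomial with $\ln|\Fcal|$, $C$, and $\tfrac{1}{1-\gamma}$ polynomial yields the stated bound.

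The main obstacle is the construction step — realizing all the constraints at once — and I expect it to be genuinely hard, which is why the statement is only a conjecture. The difficulty is a feedback loop created by the \emph{self-referential} definition $\Fcal=\{Q^\star_{M'}:M'\in\Mcal\}$: enlarging or perturbing the family to force the within-class dynamics to be indistinguishable also enriches the value profiles, which tends to re-separate the aliased states and thereby break either indistinguishability or $C$-boundedness. In short, realizability pushes the value profiles to be \emph{informative}, while indistinguishability demands they be \emph{uninformative about the dynamics}; the two obstacles above show that the usual escape hatches — dropping realizability (as in the related feature-aliasing counterexample) or shrinking $\Scal$ to a constant with uniform data — are precisely the moves one is forbidden to make. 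Resolving this tension, likely by importing a non-bisimulation aggregation example from the abstraction literature and proving that its value-profile data is provably MDP-agnostic while $\Fcal$ and $C$ stay compact, is the crux on which I would expect to spend most of the effort.
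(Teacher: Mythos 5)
First, a point of order: the statement you were asked to prove is Conjecture~\ref{conj:complete}, which the paper itself leaves open --- Section~\ref{sec:lower_complete} explicitly states the authors are not able to obtain such a lower bound, and only establishes that two natural proof strategies must fail. So there is no paper proof to compare against; your proposal has to stand on its own as a proof, and it does not. What you have written is a research plan. It correctly internalizes the paper's two barriers as design constraints: by Corollary~\ref{cor:mbrl} the value profiles must not let the learner simulate the minimax algorithm with a completeness-satisfying $\Gcal$, and by Proposition~\ref{prop:value_profile} the construction cannot be a small tabular MDP with near-uniform data, so the profile-induced abstraction must be $Q^\star$-irrelevant but not a bisimulation over a very large $\Scal$. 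This is a sensible and accurate reading of the paper. But the conjecture is precisely the assertion that a construction meeting all these constraints exists, and you never exhibit one: no concrete family $\Mcal$, no data distribution $\mu$, no verification of indistinguishability of the masked data, no verification that $C$ stays bounded, no proof that the optimal values are separated. You acknowledge this yourself (``the main obstacle is the construction step \ldots which is why the statement is only a conjecture''), so by your own account the argument is missing its core.

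Second, to make the gap concrete: the place where your Le Cam plan will bite is the statistic $\Lcal_D(f;f)$, which is computable from the masked tuples $(\mathrm{profile}(s),a,r,\mathrm{profile}(s'))$ for every $f\in\Fcal$. By Eq.~\eqref{eq:cond_var}, its mean under model $M_i$ is $\|f-\Tcal_i f\|_{2,\mu}^2 + \gamma^2\,\EE_\mu[\VV_{s'\sim P_i(s,a)}[V_f(s')]]$. With $\Fcal=\{Q^\star_{M_1},Q^\star_{M_2}\}$ realizable, $\|Q^\star_{M_1}-\Tcal_1 Q^\star_{M_1}\|_{2,\mu}=0$ while $\|Q^\star_{M_2}-\Tcal_1 Q^\star_{M_2}\|_{2,\mu}>0$ (otherwise the two models would share an optimal value function and the value separation you need collapses), so exact indistinguishability forces the conditional-variance terms to compensate the Bellman errors exactly --- and likewise for every other functional of the masked law, not just this mean --- all while both $Q^\star$'s sit inside $\Fcal$ and $C$ stays bounded. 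This is exactly the aliasing trick in the \citet{sutton2018reinforcement} example, which the paper shows is only known to work when realizability is sacrificed; whether it can be pulled off with realizability intact is the open question. Saying one would ``arrange'' and ``calibrate'' the within-class kernels to achieve this is naming the problem, not solving it.
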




\subsection{Connection to Bisimulation} \label{sec:bisimulation}
As the last piece of technical result of this paper, we show that when $\Fcal$ is a space of piece-wise constant functions under a partition induced by state abstraction $\phi$, the notion of completeness (Assumption~\ref{asm:completeness}, $\Fcal=\Gcal$) is exactly equivalent to a long-studied type of abstractions, known as \emph{bisimulation} \cite{whitt1978approximations, even-dar2003approximate, ravindran2004algebraic, li2006towards}.
\begin{definition}[Bisimulation] \label{def:bisimulation}
	An abstraction $\phi: \Scal\to\Scal_\phi$ is a bisimulation in an MDP $M$, if  $\forall s_1, s_2$ where $\phi(s_1) = \phi(s_2)$ (i.e., they are aggregated), $R(s_1, a)= R(s_2, a)$ and $\sum_{s\in\phi^{-1}(x)}P(s|s_1, a) = \sum_{s\in\phi^{-1}(x)}P(s|s_2,a)$ for all $a\in\Acal$, $x \in \Scal_\phi$. 
\end{definition}

\begin{definition}[Piece-wise constant function class] \label{def:pwconst}
Given an abstraction $\phi$, define $\Fcal^\phi \subset (\Scal\times\Acal \to [0, \Vmax])$ as the set of all functions $f$ that are piece-wise constant under $\phi$. That is, $\forall s_1,s_2\in\Scal$ where $\phi(s_1)=\phi(s_2),$ we have $f(s_1,a)=f(s_2,a)$, $\forall a\in\Acal$ .
\end{definition}

\begin{proposition} \label{prop:bisimulation}
	$\phi$ is bisimulation $\Leftrightarrow$ $\Fcal^\phi$ satisfies completeness (Assumption~\ref{asm:completeness} with $\Fcal=\Gcal=\Fcal^\phi$).
\end{proposition}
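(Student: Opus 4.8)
The plan is to unfold the definition of completeness in this special case and then prove the two directions separately, using the forward Bellman update as the common computation. With $\Fcal=\Gcal=\Fcal^\phi$, Assumption~\ref{asm:completeness} (in its exact form) states precisely that $\Fcal^\phi$ is closed under $\Tcal$, i.e., $\Tcal f \in \Fcal^\phi$ for every $f \in \Fcal^\phi$. The single observation that drives both directions is: if $f\in\Fcal^\phi$, then $V_f(s')=\max_{a'} f(s',a')$ is again piece-wise constant under $\phi$ (a pointwise max of functions that are constant on each cell is constant on each cell), so there is a function $\bar V:\Sphi\to\RR$ with $V_f(s')=\bar V(\phi(s'))$. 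Expanding the Bellman update and grouping the next-state sum by abstract cells gives, for every $s,a$,
\begin{align*}
(\Tcal f)(s,a) = R(s,a) + \gamma \sum_{x\in\Sphi} \bar V(x)\sum_{s'\in\phi^{-1}(x)} P(s'|s,a).
\end{align*}
This identity is the workhorse for both implications.

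For the forward direction ($\Rightarrow$), I would assume $\phi$ is a bisimulation and fix any $f\in\Fcal^\phi$ and any aggregated pair $s_1,s_2$ with $\phi(s_1)=\phi(s_2)$. Plugging $s_1$ and $s_2$ into the displayed identity, the reward terms agree by $R(s_1,a)=R(s_2,a)$, and each aggregated transition mass $\sum_{s'\in\phi^{-1}(x)}P(s'|s_i,a)$ agrees across $i=1,2$ by the bisimulation transition condition; since $\bar V$ depends on $s'$ only through $\phi(s')=x$, the two sums match term by term, so $(\Tcal f)(s_1,a)=(\Tcal f)(s_2,a)$ for all $a$. Thus $\Tcal f\in\Fcal^\phi$, which is exactly completeness.

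For the reverse direction ($\Leftarrow$), I would assume completeness and recover the two bisimulation conditions by probing $\Fcal^\phi$ with carefully chosen test functions. Taking $f\equiv 0$ (which lies in $\Fcal^\phi$) yields $\Tcal f = R$; closure then forces $R$ to be piece-wise constant, i.e., $R(s_1,a)=R(s_2,a)$ whenever $\phi(s_1)=\phi(s_2)$, giving the reward condition. For the transition condition, fix a target cell $x_0\in\Sphi$ and take the indicator probe $f(s',a'):=v\cdot\Indi[\phi(s')=x_0]$ for a fixed constant $0<v\le\Vmax$; since $f$ is constant across actions, $V_f(s')=v\cdot\Indi[\phi(s')=x_0]$ and $f\in\Fcal^\phi$. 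The displayed identity then reduces to $(\Tcal f)(s,a)=R(s,a)+\gamma v\sum_{s'\in\phi^{-1}(x_0)}P(s'|s,a)$. Applying closure to this $\Tcal f$ together with the already-established reward condition, the reward terms cancel for any aggregated pair $s_1,s_2$, and dividing by $\gamma v>0$ yields $\sum_{s'\in\phi^{-1}(x_0)}P(s'|s_1,a)=\sum_{s'\in\phi^{-1}(x_0)}P(s'|s_2,a)$. Ranging over all $x_0\in\Sphi$ and all $a\in\Acal$ recovers the bisimulation transition condition.

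I expect the reverse direction to be the main (albeit mild) obstacle, since it requires constructing probe functions whose induced $V_f$ isolates exactly one abstract cell while still lying in $\Fcal^\phi$; making $f$ constant across actions sidesteps the $\max$ in $V_f$ and is the key trick. A minor point to handle with care is the degenerate case $\gamma=0$, where the Bellman update ignores transitions and the division by $\gamma v$ is invalid; there the transition condition is not forced by completeness, so the equivalence should be stated for $\gamma>0$ (as is standard), or the transition clause of bisimulation read as vacuous when $\gamma=0$.
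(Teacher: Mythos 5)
Your proof is correct, and it is worth contrasting with the paper's because the routes differ in a meaningful way. The paper never proves the exact equivalence directly: it proves a stronger quantitative statement (Proposition~\ref{prop:bisimulation_full} in Appendix~\ref{app:abstraction}) sandwiching the completeness violation $\sup_{f\in\Fcal^\phi}\inf_{f'\in\Fcal^\phi}\|f'-\Tcal f\|_\infty$ between $\max\{\epsilon_R/2,\,\gamma\epsilon_P\Vmax/4\}$ and $\epsilon_R/2+\gamma\epsilon_P\Vmax/4$ for an $(\epsilon_R,\epsilon_P)$-approximate bisimulation, and then obtains Proposition~\ref{prop:bisimulation} as the corollary in which all quantities vanish. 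Your forward direction coincides with the exact-case specialization of the paper's (where its midpoint construction $f_1'$ degenerates to $\Tcal f$ itself), and both you and the paper probe with the zero function to extract the reward condition. The real divergence is in the transition condition: the paper must exhibit a \emph{single} witness capturing the full $\ell_1$ gap, namely $[f_P]_\phi(x,a)=\Indi[P(x|s_1^P,a^P)>P(x|s_2^P,a^P)]\cdot\Vmax$ (the total-variation, or integral-probability-metric, witness), and it must orient the indicator consistently with the sign of the reward gap so that reward and transition errors cannot cancel. Your per-cell probes $v\cdot\Indi[\phi(s')=x_0]$, with rewards cancelled via the already-established exact reward equality, sidestep both complications and are genuinely more elementary---but they only suffice for the exact statement: the largest per-cell discrepancy can undersell the $\ell_1$ distance by a factor of $|\Sphi|$, so the paper's quantitative lower bound genuinely needs its witness. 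Finally, your $\gamma=0$ caveat is a correct observation that applies equally to the paper's statement: the transition term in the paper's lower bound carries the factor $\gamma$, so exact completeness forces $\epsilon_P=0$ only when $\gamma>0$, and at $\gamma=0$ the ``$\Leftarrow$'' direction holds only if the transition clause is read as vacuous.
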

The ``$\Rightarrow$'' part is trivial, but the ``$\Leftarrow$'' part is less obvious. The proof shows that if $\phi$ is not a bisimulation, we can find $f\in\Fcal^\phi$ either to witness the reward error or the transition error, and in the latter case, the choice of $f$ achieves the maximum discrepancy in an integral probability metric \citep{muller1997integral} interpretation of  the bisimulation condition on transition dynamics. Details are provided in Appendix~\ref{app:abstraction}, where we prove a stronger result that relates the approximation error of bisimulation to the violation of completeness.

\section{Discussions and Related Work} \label{sec:conclusion}

In this paper, we examine the common assumptions that enable finite sample guarantees for value-function approximation methods. Concretely, we provide an information-theoretic lower bound in  Section~\ref{sec:con_lower}, showing that not constraining the concentratability coefficient $C$  immediately precludes sample-efficient learning even with benign data. We also introduce a general family of problems of interest in empirical RL that yield low concentratability (Section~\ref{sec:visgrid}). 

In comparison, the necessity of completeness is still a mystery, and our investigation in Section~\ref{sec:lower_complete} mostly shows the highly nontrivial nature of the lower bound (assuming it exists) as we eliminate two general proof styles. We hope these negative results can guide the search for novel constructions that reflect the fundamental difficulties of reinforcement learning in the function approximation setting. 

We conclude the paper with some discussions. 

\para{Alternative assumptions to completeness}
As we note in Section~\ref{sec:lower_complete}, even if Conjecture~\ref{conj:complete} is true, it would not imply that completeness is absolutely necessary, as other assumptions may also break the lower bound.   Furthermore, additional assumptions are not necessarily made on the value-function class (e.g., that $\That$ being a contraction \citep{gordon1995stable, szepesvari2004interpolation, lizotte2011convergent, pires2016policy}), and can instead take the form of requiring another function class to realize other objects of interest, such as state distributions \citep{chen2018scalable, liu2018breaking}. Regardless, all of these approaches face the same fundamental question on the necessity of the additional/stronger assumptions being made, to which our Conjecture~\ref{conj:complete} is an important piece if not the final answer. We hope to resolve this important open question in the future.

\para{Related work that has not been covered} The conjectured insufficiency of realizability (Conjecture~\ref{conj:complete}) is related to various undesirable phenomena in learning with bootstrapped targets, which has been of constant interest to RL researchers \citep{sutton2015introduction, van2018deep, lu2018non}. As far as we know,  all existing efforts that investigate this issue are algorithm-specific (apart from \citet[Section 11.6]{sutton2018reinforcement} and the references therein, which has been discussed in Section~\ref{sec:lower_complete}), and our information-theoretic perspective is novel.

\para{Relaxation of Assumption~\ref{asm:concentratability} using the structure of $\Fcal$}
The concentratability coefficient $C$ is defined as a function of the MDP, even in its most complicated version \cite{farahmand2010error}. In Section~\ref{sec:visgrid} we discover a connection to \emph{Bellman rank} \citep{jiang2017contextual}, which can be viewed as its counterpart for online exploration. Interestingly, Bellman rank depends both on the environmental dynamics \emph{and the function class $\Fcal$}, and in some cases, the latter dependence is crucial to obtaining low-rankness (e.g., for Linear Quadratic Regulators; see their Proposition 5). Similarly, we may improve the definition of concentratability and make it more widely applicable by incorporating $\Fcal$ into the definition. In Appendix~\ref{app:relax}, we discuss some preliminary ideas based on the theoretical results in this paper. 

\section*{Acknowledgements}
We gratefully thank the constructive comments from Alekh Agarwal and Anonymous Reviewer \#3.

\bibliography{bib}

\begin{thebibliography}{54}
\providecommand{\natexlab}[1]{#1}
\providecommand{\url}[1]{\texttt{#1}}
\expandafter\ifx\csname urlstyle\endcsname\relax
  \providecommand{\doi}[1]{doi: #1}\else
  \providecommand{\doi}{doi: \begingroup \urlstyle{rm}\Url}\fi

\bibitem[Abel et~al.(2016)Abel, Hershkowitz, and Littman]{abel2016near}
Abel, D., Hershkowitz, D.~E., and Littman, M.~L.
\newblock Near optimal behavior via approximate state abstraction.
\newblock In \emph{Proceedings of the 33rd International Conference on
  International Conference on Machine Learning-Volume 48}, pp.\  2915--2923.
  JMLR. org, 2016.

\bibitem[Agrawal(2018)]{shipranotes}
Agrawal, S.
\newblock \emph{{IEOR 8100: Reinforcement Learning. Lecture 4: Approximate
  Dynamic Programming}}.
\newblock {Columbia University}, 2018.
\newblock
  \url{https://ieor8100.github.io/rl/docs/Lecture%204%20-%20approximate%20DP.pdf}.

\bibitem[Antos et~al.(2008)Antos, Szepesv{\'a}ri, and Munos]{antos2008learning}
Antos, A., Szepesv{\'a}ri, C., and Munos, R.
\newblock Learning near-optimal policies with bellman-residual minimization
  based fitted policy iteration and a single sample path.
\newblock \emph{Machine Learning}, 71\penalty0 (1):\penalty0 89--129, 2008.

\bibitem[Auer et~al.(2002)Auer, Cesa-Bianchi, and Fischer]{auer2002finite}
Auer, P., Cesa-Bianchi, N., and Fischer, P.
\newblock Finite-time analysis of the multiarmed bandit problem.
\newblock \emph{Machine learning}, 47\penalty0 (2-3):\penalty0 235--256, 2002.

\bibitem[Bertsekas \& Tsitsiklis(1996)Bertsekas and
  Tsitsiklis]{bertsekas1996neuro}
Bertsekas, D.~P. and Tsitsiklis, J.~N.
\newblock \emph{Neuro-Dynamic Programming}.
\newblock Athena Scientific, Belmont, MA, 1996.

\bibitem[Chen et~al.(2018)Chen, Li, and Wang]{chen2018scalable}
Chen, Y., Li, L., and Wang, M.
\newblock Scalable bilinear $\pi$ learning using state and action features.
\newblock \emph{arXiv preprint arXiv:1804.10328}, 2018.

\bibitem[Dai et~al.(2018)Dai, Shaw, Li, Xiao, He, Liu, Chen, and
  Song]{dai2018sbeed}
Dai, B., Shaw, A., Li, L., Xiao, L., He, N., Liu, Z., Chen, J., and Song, L.
\newblock Sbeed: Convergent reinforcement learning with nonlinear function
  approximation.
\newblock In \emph{International Conference on Machine Learning}, pp.\
  1133--1142, 2018.

\bibitem[Dann et~al.(2018)Dann, Jiang, Krishnamurthy, Agarwal, Langford, and
  Schapire]{dann2018oracle}
Dann, C., Jiang, N., Krishnamurthy, A., Agarwal, A., Langford, J., and
  Schapire, R.~E.
\newblock {On Oracle-Efficient PAC RL with Rich Observations}.
\newblock In \emph{Advances in Neural Information Processing Systems}, pp.\
  1429--1439, 2018.

\bibitem[Ernst et~al.(2005)Ernst, Geurts, and Wehenkel]{ernst2005tree}
Ernst, D., Geurts, P., and Wehenkel, L.
\newblock Tree-based batch mode reinforcement learning.
\newblock \emph{Journal of Machine Learning Research}, 6:\penalty0 503--556,
  2005.

\bibitem[Even-Dar \& Mansour(2003)Even-Dar and
  Mansour]{even-dar2003approximate}
Even-Dar, E. and Mansour, Y.
\newblock Approximate equivalence of {M}arkov decision processes.
\newblock In \emph{Learning Theory and Kernel Machines}, pp.\  581--594. 2003.

\bibitem[Farahmand(2011)]{farahmand2011regularization}
Farahmand, A.-m.
\newblock Regularization in reinforcement learning.
\newblock 2011.

\bibitem[Farahmand et~al.(2010)Farahmand, Szepesv{\'a}ri, and
  Munos]{farahmand2010error}
Farahmand, A.-m., Szepesv{\'a}ri, C., and Munos, R.
\newblock Error {P}ropagation for {A}pproximate {P}olicy and {V}alue
  {I}teration.
\newblock In \emph{Advances in Neural Information Processing Systems}, pp.\
  568--576, 2010.

\bibitem[Farahmand et~al.(2016)Farahmand, Ghavamzadeh, Szepesv{\'a}ri, and
  Mannor]{farahmand2016regularized}
Farahmand, A.-m., Ghavamzadeh, M., Szepesv{\'a}ri, C., and Mannor, S.
\newblock Regularized policy iteration with nonparametric function spaces.
\newblock \emph{The Journal of Machine Learning Research}, 17\penalty0
  (1):\penalty0 4809--4874, 2016.

\bibitem[Gordon(1995)]{gordon1995stable}
Gordon, G.~J.
\newblock Stable function approximation in dynamic programming.
\newblock In \emph{Proceedings of the twelfth international conference on
  machine learning}, pp.\  261--268, 1995.

\bibitem[Hutter(2014)]{hutter2014extreme}
Hutter, M.
\newblock Extreme state aggregation beyond mdps.
\newblock In \emph{International Conference on Algorithmic Learning Theory},
  pp.\  185--199. Springer, 2014.

\bibitem[Jiang(2018)]{nan_abstraction_notes}
Jiang, N.
\newblock \emph{{CS 598: Notes on State Abstractions}}.
\newblock {University of Illinois at Urbana-Champaign}, 2018.
\newblock \url{http://nanjiang.cs.illinois.edu/files/cs598/note4.pdf}.

\bibitem[Jiang et~al.(2015)Jiang, Kulesza, and Singh]{jiang2015abstraction}
Jiang, N., Kulesza, A., and Singh, S.
\newblock {Abstraction Selection in Model-based Reinforcement Learning}.
\newblock In \emph{Proceedings of the 32nd International Conference on Machine
  Learning}, pp.\  179--188, 2015.

\bibitem[Jiang et~al.(2017)Jiang, Krishnamurthy, Agarwal, Langford, and
  Schapire]{jiang2017contextual}
Jiang, N., Krishnamurthy, A., Agarwal, A., Langford, J., and Schapire, R.~E.
\newblock Contextual {D}ecision {P}rocesses with low {B}ellman rank are
  {PAC}-learnable.
\newblock In \emph{International Conference on Machine Learning}, 2017.

\bibitem[Johnson et~al.(2016)Johnson, Hofmann, Hutton, and
  Bignell]{johnson2016malmo}
Johnson, M., Hofmann, K., Hutton, T., and Bignell, D.
\newblock The malmo platform for artificial intelligence experimentation.
\newblock In \emph{International joint conference on artificial intelligence
  (IJCAI)}, pp.\  4246, 2016.

\bibitem[Kakade \& Langford(2002)Kakade and Langford]{kakade2002approximately}
Kakade, S. and Langford, J.
\newblock {Approximately Optimal Approximate Reinforcement Learning}.
\newblock In \emph{Proceedings of the 19th International Conference on Machine
  Learning}, volume~2, pp.\  267--274, 2002.

\bibitem[Kakade(2003)]{kakade2003sample}
Kakade, S.~M.
\newblock \emph{On the sample complexity of reinforcement learning}.
\newblock PhD thesis, University of College London, 2003.

\bibitem[Krishnamurthy et~al.(2016)Krishnamurthy, Agarwal, and
  Langford]{krishnamurthy2016pac}
Krishnamurthy, A., Agarwal, A., and Langford, J.
\newblock {PAC} reinforcement learning with rich observations.
\newblock In \emph{Advances in Neural Information Processing Systems}, pp.\
  1840--1848, 2016.

\bibitem[Lagoudakis \& Parr(2003)Lagoudakis and Parr]{lagoudakis2003least}
Lagoudakis, M.~G. and Parr, R.
\newblock Least-squares policy iteration.
\newblock \emph{The Journal of Machine Learning Research}, 4:\penalty0
  1107--1149, 2003.

\bibitem[Lazaric et~al.(2012)Lazaric, Ghavamzadeh, and
  Munos]{lazaric2012finite}
Lazaric, A., Ghavamzadeh, M., and Munos, R.
\newblock Finite-sample analysis of least-squares policy iteration.
\newblock \emph{The Journal of Machine Learning Research}, 13\penalty0
  (1):\penalty0 3041--3074, 2012.

\bibitem[Li(2009)]{li2009unifying}
Li, L.
\newblock \emph{A unifying framework for computational reinforcement learning
  theory}.
\newblock PhD thesis, Rutgers, The State University of New Jersey, 2009.

\bibitem[Li et~al.(2006)Li, Walsh, and Littman]{li2006towards}
Li, L., Walsh, T.~J., and Littman, M.~L.
\newblock Towards a unified theory of state abstraction for {MDP}s.
\newblock In \emph{Proceedings of the 9th International Symposium on Artificial
  Intelligence and Mathematics}, pp.\  531--539, 2006.

\bibitem[Liu et~al.(2018)Liu, Li, Tang, and Zhou]{liu2018breaking}
Liu, Q., Li, L., Tang, Z., and Zhou, D.
\newblock Breaking the curse of horizon: Infinite-horizon off-policy
  estimation.
\newblock In \emph{Advances in Neural Information Processing Systems}, pp.\
  5361--5371, 2018.

\bibitem[Lizotte(2011)]{lizotte2011convergent}
Lizotte, D.~J.
\newblock Convergent fitted value iteration with linear function approximation.
\newblock In \emph{Advances in Neural Information Processing Systems}, pp.\
  2537--2545, 2011.

\bibitem[Lu et~al.(2018)Lu, Schuurmans, and Boutilier]{lu2018non}
Lu, T., Schuurmans, D., and Boutilier, C.
\newblock Non-delusional q-learning and value-iteration.
\newblock In \emph{Advances in Neural Information Processing Systems}, pp.\
  9971--9981, 2018.

\bibitem[Maillard et~al.(2010)Maillard, Munos, Lazaric, and
  Ghavamzadeh]{maillard2010finite}
Maillard, O.-A., Munos, R., Lazaric, A., and Ghavamzadeh, M.
\newblock Finite-sample analysis of bellman residual minimization.
\newblock In \emph{Proceedings of 2nd Asian Conference on Machine Learning},
  pp.\  299--314, 2010.

\bibitem[Mnih et~al.(2015)Mnih, Kavukcuoglu, Silver, Rusu, Veness, Bellemare,
  Graves, Riedmiller, Fidjeland, Ostrovski, et~al.]{mnih2015human}
Mnih, V., Kavukcuoglu, K., Silver, D., Rusu, A.~A., Veness, J., Bellemare,
  M.~G., Graves, A., Riedmiller, M., Fidjeland, A.~K., Ostrovski, G., et~al.
\newblock Human-level control through deep reinforcement learning.
\newblock \emph{Nature}, 518\penalty0 (7540):\penalty0 529--533, 2015.

\bibitem[M{\"u}ller(1997)]{muller1997integral}
M{\"u}ller, A.
\newblock Integral probability metrics and their generating classes of
  functions.
\newblock \emph{Advances in Applied Probability}, 29\penalty0 (2):\penalty0
  429--443, 1997.

\bibitem[Munos(2003)]{munos2003error}
Munos, R.
\newblock Error bounds for approximate policy iteration.
\newblock In \emph{ICML}, volume~3, pp.\  560--567, 2003.

\bibitem[Munos(2007)]{munos2007performance}
Munos, R.
\newblock Performance bounds in l\_p-norm for approximate value iteration.
\newblock \emph{SIAM journal on control and optimization}, 46\penalty0
  (2):\penalty0 541--561, 2007.

\bibitem[Munos \& Szepesv{\'a}ri(2008)Munos and
  Szepesv{\'a}ri]{munos2008finite}
Munos, R. and Szepesv{\'a}ri, C.
\newblock Finite-time bounds for fitted value iteration.
\newblock \emph{Journal of Machine Learning Research}, 9\penalty0
  (May):\penalty0 815--857, 2008.

\bibitem[Pires \& Szepesv{\'a}ri(2012)Pires and
  Szepesv{\'a}ri]{pires2012statistical}
Pires, B.~A. and Szepesv{\'a}ri, C.
\newblock Statistical linear estimation with penalized estimators: an
  application to reinforcement learning.
\newblock \emph{arXiv preprint arXiv:1206.6444}, 2012.

\bibitem[Pires \& Szepesv{\'a}ri(2016)Pires and
  Szepesv{\'a}ri]{pires2016policy}
Pires, B.~{\'A}. and Szepesv{\'a}ri, C.
\newblock Policy error bounds for model-based reinforcement learning with
  factored linear models.
\newblock In \emph{Conference on Learning Theory}, pp.\  121--151, 2016.

\bibitem[Ravindran(2004)]{ravindran2004algebraic}
Ravindran, B.
\newblock \emph{An algebraic approach to abstraction in reinforcement
  learning}.
\newblock PhD thesis, University of Massachusetts Amherst, 2004.

\bibitem[Riedmiller(2005)]{riedmiller2005neural}
Riedmiller, M.
\newblock Neural fitted q iteration--first experiences with a data efficient
  neural reinforcement learning method.
\newblock In \emph{European Conference on Machine Learning}, pp.\  317--328.
  Springer, 2005.

\bibitem[Singh \& Yee(1994)Singh and Yee]{singh1994upper}
Singh, S. and Yee, R.
\newblock An upper bound on the loss from approximate optimal-value functions.
\newblock \emph{Machine Learning}, 16\penalty0 (3):\penalty0 227--233, 1994.

\bibitem[Sun et~al.(2019)Sun, Jiang, Krishnamurthy, Agarwal, and
  Langford]{sun2018model}
Sun, W., Jiang, N., Krishnamurthy, A., Agarwal, A., and Langford, J.
\newblock {Model-based RL in Contextual Decision Processes: PAC bounds and
  Exponential Improvements over Model-free Approaches}.
\newblock In \emph{Conference on Learning Theory}, 2019.

\bibitem[Sutton(2015)]{sutton2015introduction}
Sutton, R.
\newblock Introduction to reinforcement learning with function approximation.
\newblock In \emph{Tutorial at the Conference on Neural Information Processing
  Systems}, 2015.

\bibitem[Sutton \& Barto(1998)Sutton and Barto]{sutton1998reinforcement}
Sutton, R.~S. and Barto, A.~G.
\newblock \emph{Reinforcement Learning: An Introduction}.
\newblock MIT Press, Cambridge, MA, March 1998.
\newblock ISBN 0-262-19398-1.

\bibitem[Sutton \& Barto(2018)Sutton and Barto]{sutton2018reinforcement}
Sutton, R.~S. and Barto, A.~G.
\newblock \emph{Reinforcement learning: An introduction}.
\newblock MIT press, 2018.

\bibitem[Szepesv{\'a}ri(2010)]{szepesvari2010algorithms}
Szepesv{\'a}ri, C.
\newblock Algorithms for reinforcement learning.
\newblock \emph{Synthesis lectures on artificial intelligence and machine
  learning}, 4\penalty0 (1):\penalty0 1--103, 2010.

\bibitem[Szepesv{\'a}ri \& Munos(2005)Szepesv{\'a}ri and
  Munos]{szepesvari2005finite}
Szepesv{\'a}ri, C. and Munos, R.
\newblock Finite time bounds for sampling based fitted value iteration.
\newblock In \emph{Proceedings of the 22nd international conference on Machine
  learning}, pp.\  880--887. ACM, 2005.

\bibitem[Szepesv{\'a}ri \& Smart(2004)Szepesv{\'a}ri and
  Smart]{szepesvari2004interpolation}
Szepesv{\'a}ri, C. and Smart, W.~D.
\newblock Interpolation-based q-learning.
\newblock In \emph{Proceedings of the twenty-first international conference on
  Machine learning}, pp.\  100. ACM, 2004.

\bibitem[Tosatto et~al.(2017)Tosatto, Pirotta, D'Eramo, and
  Restelli]{tosatto2017boosted}
Tosatto, S., Pirotta, M., D'Eramo, C., and Restelli, M.
\newblock Boosted fitted q-iteration.
\newblock In \emph{Proceedings of the 34th International Conference on Machine
  Learning-Volume 70}, pp.\  3434--3443. JMLR. org, 2017.

\bibitem[Tsitsiklis \& Van~Roy(1997)Tsitsiklis and
  Van~Roy]{tsitsiklis1997analysis}
Tsitsiklis, J.~N. and Van~Roy, B.
\newblock An analysis of temporal-difference learning with function
  approximation.
\newblock \emph{IEEE TRANSACTIONS ON AUTOMATIC CONTROL}, 42\penalty0 (5), 1997.

\bibitem[Van~Hasselt et~al.(2018)Van~Hasselt, Doron, Strub, Hessel, Sonnerat,
  and Modayil]{van2018deep}
Van~Hasselt, H., Doron, Y., Strub, F., Hessel, M., Sonnerat, N., and Modayil,
  J.
\newblock Deep reinforcement learning and the deadly triad.
\newblock \emph{arXiv preprint arXiv:1812.02648}, 2018.

\bibitem[Van~Roy(1994)]{van1994feature}
Van~Roy, B.
\newblock \emph{Feature-based methods for large scale dynamic programming}.
\newblock PhD thesis, Massachusetts Institute of Technology, 1994.

\bibitem[Whitt(1978)]{whitt1978approximations}
Whitt, W.
\newblock {Approximations of dynamic programs, I}.
\newblock \emph{Mathematics of Operations Research}, 3\penalty0 (3):\penalty0
  231--243, 1978.

\bibitem[Yang et~al.(2019)Yang, Xie, and Wang]{yang2019theoretical}
Yang, Z., Xie, Y., and Wang, Z.
\newblock {A Theoretical Analysis of Deep Q-Learning}.
\newblock \emph{arXiv preprint arXiv:1901.00137}, 2019.

\bibitem[Yu(1997)]{yu1997assouad}
Yu, B.
\newblock {Assouad, Fano, and Le Cam}.
\newblock In \emph{Festschrift for Lucien Le Cam}, pp.\  423--435. Springer,
  1997.

\end{thebibliography}
\bibliographystyle{icml2019}

\onecolumn
\appendix
\section{Proof of Proposition~\ref{prop:fqi_minimax}}
\label{app:minimax_opt}
\paragraph{Claim 1:} Since $\Gcal=\Fcal$, we have that, $\forall f\in\Fcal$,
$$\argmax_{g\in\Gcal}\left(\Lcal_D(f;f)-\Lcal_D(g;f)\right)=\argmax_{g\in\Gcal}-\Lcal_D(g;f)=\argmin_{g\in\Gcal}\Lcal_D(g;f)=\widehat\Tcal_{\Gcal}f=\widehat\Tcal_{\Fcal}f.$$
	
Therefore, $\Lcal_{D}(\hat g; \hat f) = \Lcal_D(\widehat\Tcal_{\Fcal}\hat f; \hat f)$, and the condition $\Lcal_D(\hat f;\hat f)-\Lcal_D(\hat g;\hat f)=0$ gives us that $\Lcal_D(\hat f;\hat f)-\Lcal_D(\widehat\Tcal_{\Fcal}\hat f;\hat f)=0$. From the definition, we know that $\widehat\Tcal_{\Fcal}\hat f=\argmin_{f \in\Fcal}\Lcal_D(f;\hat f)$. Hence $\Lcal_D(\hat f; \hat f)=\Lcal_D(\widehat\Tcal_{\Fcal}\hat f;\hat f)=\min_{f\in\Fcal}\Lcal_D(f;\hat f)$, which means $\hat f=\argmin_{f \in\Fcal}\Lcal_D(f;\hat f)$ and $\hat f$ is a fixed point for FQI.

\paragraph{Claim 2:} Since $\Gcal=\Fcal$, for any $f\in\Fcal$, we can always choose $g=f$. Therefore, for any $f\in\Fcal$, $\sup_{g\in\Gcal} \Lcal_D(f;f)-\Lcal_D(g;f)\ge 0$, which further means $\inf_{f \in\Fcal}\sup_{g\in\Gcal} \Lcal_D(f;f)-\Lcal_D(g;f)\ge 0$, and the value of the optimization problem is non-negative. If we have $f_k=f_{k-1}$ for some $k$ in FQI, then we know that $f_{k-1}\in\Fcal$, $f_k\in\Fcal=\Gcal$ and $\Lcal_D(f_{k-1};f_{k-1})-\Lcal_D(f_k;f_{k-1})=0$. This tells us that $f=f_{k-1}$ and $g=f_k$ achieve the optimal value, so $\hat f=\hat g=f_k$ is a solution to Eq.(\ref{eq:minimax}).

\paragraph{Claim 3:} Prove by contradiction. If FQI does not oscillate and a fixed point of FQI is $f_{k-1}(=f_k)$, the previous result gives us that $\hat f=\hat g=f_k$ is a solution to Eq.\eqref{eq:minimax}, with the minimax objective value being $\Lcal_D(f_k;f_k)-\Lcal_D( f_k; f_k)=0$. Contradiction.


\section{Example of Low Concentratability in Rich-Observation Problems} \label{app:visgrid}

\begin{definition}[Reactive POMDPs \citep{jiang2017contextual}]
A reactive POMDP is a decision process specified by a  finite hidden state space $\Zcal$, an (arbitrarily large) observation space $\Scal$, an action space $\Acal$, hidden state dynamics $\Gamma: \Zcal \times \Acal \to \Delta(\Zcal)$, an initial hidden state distribution $\Gamma_1 \in \Delta(\Zcal)$, an emission process $P: \Zcal\to \Delta(\Scal)$, a reward function $R: \Xcal\times\Acal \to \Delta([0,1])$, and a discount factor $\gamma \in [0, 1)$. A trajectory is generated as $z_1 \sim \Gamma_1$,  $s_1 \sim P(\cdot |z_1)$, $r_1 \sim R(s_1, a_1)$, $z_2 \sim \Gamma(z_1, a_1)$, $s_2 \sim P(\cdot | z_2),\ldots$, where the hidden states $z_h$'s are not observable to the agent. Moreover, the $Q^\star$ function of this POMDP is assumed to only depend on the last observation $s_h$, hence  ``reactive'' POMDPs. We make a further simplification by assuming that the observations are indeed Markov (which implies reactive $Q^\star$). 
\end{definition}

\begin{proposition}[Formal version of Proposition~\ref{prop:visgrid}]
	Let the environment be a reactive POMDP as defined above, where the underlying hidden state space $\Zcal$ is finite. The (Markov) observation space $\Scal$ is finite but can be arbitrarily large. Assume that the number of admissible distributions is finite,\footnote{This assumption is only introduced to get around of some technical subtleties, and the resulting upper bound on $C$ has no dependence on the number of admissible distributions.} there exists a distribution $\mu_{\Scal} \in \Delta(\Scal)$ that can be expressed as a mixture of admissible distributions (more accurately, their marginals over states), such that $C \le |\Zcal \times \Acal|$ when $\mu := \mu_{\Scal} \times \textrm{Unif}(\Acal)$ is used as the data distribution (recall the definition of $C$ in Assumption~\ref{asm:concentratability}).
\end{proposition}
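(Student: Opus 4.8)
The plan is to reduce the claim to a statement purely about state marginals, and then exploit the fact that in a POMDP every admissible state marginal factors through the finite hidden-state space. Since $\mu = \mu_\Scal \times \textrm{Unif}(\Acal)$, for any admissible $\nu$ (writing $\nu_\Scal$ for its state marginal) we have $\nu(s,a) \le \nu_\Scal(s)$ while $\mu(s,a) = \mu_\Scal(s)/|\Acal|$, so $\nu(s,a)/\mu(s,a) \le |\Acal|\cdot \nu_\Scal(s)/\mu_\Scal(s)$. Hence it suffices to construct $\mu_\Scal$, as a mixture of admissible state marginals, for which $\nu_\Scal(s)/\mu_\Scal(s) \le |\Zcal|$ for every admissible $\nu$ and every $s$; the factor $|\Acal|$ coming from the uniform action distribution then yields $C \le |\Zcal\times\Acal|$.

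The key structural observation I would record is that the emission process makes every admissible state marginal a mixture of the $|\Zcal|$ emission distributions. Concretely, if $\nu(s,a) = \Pr[s_h = s, a_h = a]$ under some policy, then because $s_h$ is emitted from $z_h$ via $P(\cdot\mid z_h)$ independently of the action, $\nu_\Scal(s) = \sum_{z\in\Zcal} q^\nu_z\, P(s\mid z)$, where $q^\nu_z := \Pr[z_h = z]$ is the hidden-state marginal of $\nu$. Thus every admissible $\nu_\Scal$ lies in the convex hull of $\{P(\cdot\mid z)\}_z$, and I only need to control the induced hidden-state weights.

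For the construction I would \emph{not} take a uniform mixture over all admissible distributions (that only gives a bound scaling with their number). Instead, let $m_z := \max_{\nu\textrm{ admissible}} q^\nu_z$; this maximum is attained by some admissible distribution $\nu^{(z)}$ precisely because the set of admissible distributions is assumed finite (this is the sole role of that assumption, and the final bound does not reference the number). I then define $\mu_\Scal := \frac{1}{|\Zcal|}\sum_{z\in\Zcal}\nu^{(z)}_\Scal$, a mixture of admissible state marginals whose induced hidden-state marginal $\bar q$ satisfies $\bar q_z \ge \frac{1}{|\Zcal|} m_z$ for every $z$, since the $z$-th summand alone contributes $\frac{1}{|\Zcal|} q^{(z)}_z = \frac{1}{|\Zcal|} m_z$.

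It then remains to bound the ratio. Writing both marginals as mixtures of the same emission distributions and using the elementary fact that a ratio of two such mixtures is at most the largest pointwise ratio of their weights,
\[
\frac{\nu_\Scal(s)}{\mu_\Scal(s)} = \frac{\sum_z q^\nu_z P(s\mid z)}{\sum_z \bar q_z P(s\mid z)} \le \max_z \frac{q^\nu_z}{\bar q_z} \le \max_z \frac{m_z}{m_z/|\Zcal|} = |\Zcal|,
\]
where the middle inequality views the left side as a $\bar q_z P(s\mid z)$-weighted average of the per-$z$ ratios, and the last uses $q^\nu_z \le m_z \le |\Zcal|\,\bar q_z$ (for reachable $z$ with $m_z>0$; unreachable $z$ contribute nothing to either marginal, so may be dropped). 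The main obstacle, and essentially the only delicate point, is guaranteeing that the maximizers $\nu^{(z)}$ exist: under the finiteness assumption this is immediate, and I expect that without it one would instead take a near-maximizer for each $m_z$ and absorb an arbitrarily small slack, leaving the bound $C \le |\Zcal\times\Acal|$ intact.
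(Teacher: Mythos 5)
Your proof is correct, but it takes a genuinely different route from the paper's. The paper argues linear-algebraically: it stacks all admissible state marginals into a matrix, factors it through the hidden states as $A_\Scal = A_\Zcal P_{\Scal|\Zcal}$ with $r := \mathrm{rank}(A_\Zcal) \le |\Zcal|$, selects the $r$ admissible distributions whose hidden-state rows span the maximum volume (maximum absolute determinant), shows every other row is a signed linear combination of these with coefficients in $[-1,1]$, and takes the uniform mixture of those $r$ distributions. You instead stay entirely probabilistic: each admissible marginal is a \emph{non-negative} mixture $\sum_z q^\nu_z P(\cdot\mid z)$ of emission distributions, so you may pick, for each hidden state $z$, an admissible distribution maximizing $q^\nu_z$, mix those, and close with the mediant (weighted-average) inequality. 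Your argument is more elementary---no rank factorization, no determinant trick---and it hinges precisely on the non-negativity of the weights $q^\nu_z$, which is what makes per-coordinate maximization sufficient. The paper's volume argument does not need non-negativity: it controls arbitrary signed linear combinations, which is why it extends to the other low-rank structures the paper alludes to after the informal statement (e.g., large MDPs with low-rank transitions), where the relevant decomposition is not a probability mixture; it also yields the slightly sharper intermediate bound $r|\Acal|$, where $r = \mathrm{rank}(A_\Zcal)$ can be strictly smaller than the number of reachable hidden states appearing in your bound. Both proofs use the finiteness assumption only to guarantee that maximizers exist, both handle the action factor identically, and both treat the degenerate case $\mu_\Scal(s)=0$ at the same (implicit) level of rigor, so neither has an edge there.
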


\begin{proof}
	The proof contains two parts: the first part shows that a certain matrix consisting of admissible distributions has low rank, and the second part exploits the low-rankness to construct the mixture distribution described in the proposition statement and shows that it yields low concentratability coefficient $C$. 

	By definition, an admissible (state-action) distribution takes the form of $\eta_h^\pi \in \Delta(\Scal\times\Acal)$, that is the distribution over state-action pairs induced by rolling into time step $h$ with policy $\pi$. Let  $\nu_h^\pi(s)$ denote the corresponding marginal probability over states, which we call an \emph{admissible state distribution}. Note that $\eta_h^\pi(s,a)=\nu_h^\pi(s)\pi(a|s)$.
	
	Let there be a total of $N$ admissible state distributions (we assumed $N$ to be finite). Order them in an arbitrary manner and let the $i$-th admissible state distribution be $\nu_{h_i}^{\pi_i}$, for $i=1, \ldots, N$. Stacking these distributions as a matrix: 
	\begin{equation*}
	A_\Scal :=
	\begin{bmatrix}
	&\nu_{h_1}^{\pi_1}(s^1)&\cdots&\nu_{h_1}^{\pi_1}(s^{|\Scal|})\\
	&\vdots&\ddots&\vdots\\
	&\nu_{h_N}^{\pi_N}(s^1)&\cdots&\nu_{h_N}^{\pi_N}(s^{|\Scal|})
	\end{bmatrix},
	\end{equation*}
	where each row is indexed by an admissible state distribution and each column is indexed by a state, and $\Scal :=\{s^1,\cdots,s^{|\Scal|}\}$.
	
	In reactive POMDPs, we can also define admissible distributions over hidden states $\Zcal$. For any $z\in\Zcal$ and $a\in\Acal$, with abuse of notation, we use $\eta_h^\pi(z,a)$ and $\nu_h^\pi(z)$ to denote the distribution over hidden states (and actions) at step $h$ induced by $\pi$. For any $\pi$, $h$, and $s$, the distribution over observations can be decomposed as $\nu_h^\pi(s)=\sum_{z\in\Zcal} P(s|z) \nu_h^{\pi}(z)$, where $P(s|z)$ is the emission process and is independent of the policy or the timestep. Therefore, we have
	\begin{equation*}
	A_\Scal =
	\begin{bmatrix}
	&\nu^{\pi_1}_{h_1}(z^1)&\cdots&\nu^{\pi_1}_{h_1}(z^{|\Zcal|})\\
	&\vdots&\ddots&\vdots\\
	&\nu^{\pi_N}_{h_N}(z^1)&\cdots&\nu^{\pi_N}_{h_N}(z^{|\Zcal|})
	\end{bmatrix}
	\begin{bmatrix}
	&P(s^1|z^1)&\cdots& P(s^{|\Scal|}|z^1)\\
	&\vdots&\ddots&\vdots\\
	&P(s^1|z^{|\Zcal|})&\cdots& P(s^{|\Scal|}|z^{|\Zcal|})
	\end{bmatrix}
	:=A_{\Zcal}\, P_{\Scal|\Zcal}.
	\end{equation*}
	From the above, we conclude that $r:=\text{rank}(A_\Zcal)\le |\Zcal|$. 
	
	In the rest of the proof we describe how to construct the mixture distribution and show that it yields low concentratability coefficient $C$. First, we factorize $A_{\Zcal}$ as the product of two matrices with full column rank and full row rank, respectively: 
	$$
	A_{\Zcal} = B_{\Zcal} \, C_{\Zcal}.
	$$
	We know that $\text{rank}(B_\Zcal) = \text{rank}(A_\Zcal) = r \le |\Zcal|$. 
	
	Now let's focus on $B_{\Zcal} := \begin{bmatrix} b_1 &\cdots & b_M \end{bmatrix}^\top$, where $b_i^\top$ is its $i$-th row. Let $D_{\Zcal}$ consists of $r$ rows from $B_{\Zcal}$ that maximize the absolute value of determinant (i.e., the spanned volume). That is
	\begin{equation*}
	D_{\Zcal}:=\begin{bmatrix}
	b_{i_1}^\top\\
	\vdots\\
	b_{i_r}^\top
	\end{bmatrix}, \quad \text{where} \quad
	(i_1,\ldots,i_r):=\argmax_{i_1',\ldots,i_r'\in\{1,\cdots,N\}} \left|\det
	\begin{bmatrix}
	b_{i_1'}^\top\\
	\vdots\\
	b_{i_r'}^\top
	\end{bmatrix}\right|.
	\end{equation*}
	
	Since $D_\Zcal$ maximizes the absolute value of the determinant, $|\det D_\Zcal|>0$ and $D_\Zcal$ is a full-rank square matrix. As a result, any row $b_i^\top$ of $B_\Zcal$ is a linear combination of rows in $D_\Zcal$. So there exists $\alpha_1,\ldots,\alpha_r\in\RR$, such that $b_i=\sum_{j=1}^r \alpha_j d_j$ where $d_j := b_{i_j}$. We claim that $|\alpha_j|\le1$ always holds. 
	
	This can be proved by contradiction. Assume that $|\alpha_{j_0}|>1$, then consider the matrix 
	\begin{equation*}
	E_{\Zcal}=\begin{bmatrix}
	d_1^\top\\
	\vdots\\
	d_{j_0-1}^\top\\
	b_i^\top\\
	d_{j_0+1}^\top\\
	\vdots\\
	d_r^\top\\
	\end{bmatrix}=\begin{bmatrix}
	1  & \cdots & 0 & 0 & 0 & \cdots & 0\\
	\vdots & \ddots & \vdots & \vdots & \vdots & \ddots & \vdots\\
	0  & \cdots & 1 & 0 & 0 & \cdots & 0\\
	\alpha_1  & \cdots & \alpha_{j_0-1} & \alpha_{j_0} & \alpha_{j_0+1} & \cdots & \alpha_r\\
	0  & \cdots & 0 & 0 & 1 & \cdots & 0\\
	\vdots  & \ddots & \vdots & \vdots & \vdots & \ddots & \vdots\\
	0  & \cdots & 0 & 0 & 0 & \cdots & 1\\
	\end{bmatrix}D_\Zcal:=T_\Zcal D_\Zcal.
	\end{equation*}
	This matrix essentially replaces the $j_0$-th row of $D_\Zcal$ with $b_i^\top$. Since $D_{\Zcal}$ is volume maximizing, the volume of $E_\Zcal$ should not increase. 
	Calculating the determinant, however, we get $|\det E_\Zcal|= |\det T_\Zcal \, \det D_\Zcal|=|\alpha_{j_0}||\det D_\Zcal|>|\det D_\Zcal|$, which causes a contradiction. 
	
	Finally, we construct the data distribution as a mixture of admissible distributions. Let $\mu(s)=\frac{1}{r}\sum_{j=1}^r \nu_{h_{i_j}}^{\pi_{i_j}}(s)$ and $\mu(a|s)=1/|\Acal|$. It is easy to check that $\mu(s,a)$ is a valid distribution. Then for any $i\in\{1,\cdots,N\}$, 
	$$\frac{\nu^{\pi_i}_{h_i}(s,a)}{\mu(s,a)}=\frac{\nu^{\pi_i}_{h_i}(s)}{\mu(s)}\frac{\nu(a|s)}{\mu(a|s)}\le \frac{\nu^{\pi_i}_{h_i}(s)}{\mu(s)}\frac{1}{1/|\Acal|}.$$
	Now recall that for any $i$, there exists $|\alpha_j|\le 1, j=1,\ldots,r$, such that $b_i=\sum_{j=1}^r \alpha_j b_{i_j}$. Since $A_\Scal=B_\Zcal C_\Zcal P_{\Scal|\Zcal}$, comparing the $i$-th row of both sides, we have 
	$$\nu^{\pi_i}_{h_i}(s)=\sum_{j=1}^r\alpha_j b_{i_j}^\top C_\Zcal P_{\Scal|\Zcal}= \sum_{j=1}^r\alpha_j\nu^{\pi_{i_j}}_{h_{i_j}}(s)\le \sum_{j=1}^r|\alpha_j|\nu^{\pi_{i_j}}_{h_{i_j}}(s).$$
	The inequality follows from the non-negativity of probabilities. Hence,
	\begin{align*}
	\frac{\nu^{\pi_i}_{h_i}(s,a)}{\mu(s,a)}\le\frac{\sum_{j=1}^r|\alpha_j|\nu^{\pi_{i_j}}_{h_{i_j}}(s)}{\frac{1}{r}\sum_{j=1}^r \nu^{\pi_{i_j}}_{h_{i_j}}(s)}|\Acal|\le \frac{\sum_{j=1}^r\nu^{\pi_{i_j}}_{h_{i_j}}(s)}{\frac{1}{r}\sum_{j=1}^r \nu^{\pi_{i_j}}_{h_{i_j}}(s)}|\Acal|=r|\Acal|\le |\Zcal\times\Acal|. \tag*{\qedhere}
	\end{align*}
\end{proof}
\section{Analysis of FQI} 
\label{app:fqi}
We state the more general error bound for FQI when Assumption~\ref{asm:completeness} only holds approximately; Theorem~\ref{thm:fqi} is a direct corollary of this result. Note that although our bound contains a slow-rate term ($n^{-1/4}$), it is multiplied by $\sqrt[4]{\epsilon_{\Fcal, \Fcal}}$ and becomes small when $\epsilon_{\Fcal,\Fcal}$ is small. Furthermore, a closer examination of the bound reveals that the slow-rate term is \emph{always} a geometric mean of the fast-rate term and the approximation error term, so the slow-rate term never dominates the bound. The bound for the minimax algorithm (Theorem~\ref{thm:minimax_full}) is in a similar situation, which distinguishes our bound from prior results for this algorithm that contains a ``real'' and dominating slow-rate term \citep{antos2008learning, munos2008finite}.

\begin{theorem}[Error bound for FQI] \label{thm:fqi_full}
	Given a dataset $D=\{(s,a,r,s')\}$ with sample size $|D|=n$, $\Fcal$ that satisfies approximate completeness (Assumption~\ref{asm:completeness}) with error $\epsilon_{\Fcal, \Fcal}$, with probability at least $1-\delta$, the output policy of FQI after $k$ iterations, $\pi_{f_k}$, satisfies\footnote{Big-Oh notations in this paper only suppress absolute constants.}
	$$v^\star - v^{\pi_{\fh}} \le O\left(\frac{\Vmax}{(1-\gamma)^2}\left(\sqrt{\frac{C\ln\frac{|\Fcal|}{\delta}}{n}}+\sqrt[4]{\frac{C\ln\frac{|\Fcal|}{\delta}}{n}\epsilon_{\Fcal,\Fcal}}\right)\right)+\frac{2(\sqrt{C\epsilon_{\Fcal,\Fcal}}+\gamma^k(1-\gamma)\Vmax)}{(1-\gamma)^2}.$$
\end{theorem}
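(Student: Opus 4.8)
The plan is to decompose the argument into two classical ingredients: a per-iteration regression guarantee that controls the one-step Bellman error $\zeta_j := f_j - \Tcal f_{j-1}$ of each FQI iterate in the data norm $\|\cdot\|_{2,\mu}$, and an error-propagation argument that turns the sequence $\{\|\zeta_j\|_{2,\mu}\}_{j=1}^{k}$, together with the concentratability coefficient $C$ (Assumption~\ref{asm:concentratability}), into a bound on the greedy-policy suboptimality $v^\star - v^{\pi_{\fh}}$. Throughout I write $\iota := \ln(|\Fcal|/\delta)$ and recall $\Vmax = \Rmax/(1-\gamma)$.

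For the per-iteration bound, I would fix an iteration $j$ and view $f_j = \That f_{j-1}$ as the empirical least-squares solution of the regression problem $(s,a)\mapsto r + \gamma V_{f_{j-1}}(s')$. The population Bayes-optimal regressor of this problem is exactly $\Tcal f_{j-1}$, so minimizing $\Lcal_\mu(\cdot; f_{j-1})$ is equivalent to minimizing $\|\cdot - \Tcal f_{j-1}\|_{2,\mu}^2$. Since $\Tcal f_{j-1}$ need not lie in $\Fcal$, I would compare $f_j$ against the best in-class approximant $g_j := \argmin_{g\in\Fcal}\|g - \Tcal f_{j-1}\|_{2,\mu}^2$, which by approximate completeness (Assumption~\ref{asm:completeness}) satisfies $\|g_j - \Tcal f_{j-1}\|_{2,\mu}^2 \le \epsilon_{\Fcal,\Fcal}$. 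The key is a fast rate: applying Bernstein's inequality (with a union bound over the finite class $\Fcal$) to the per-sample loss differences $(f(s,a)-Y)^2 - (g_j(s,a)-Y)^2$, where $Y := r + \gamma V_{f_{j-1}}(s')$, whose conditional variance is controlled by $\Vmax^2\|f-g_j\|_{2,\mu}^2$, and combining the empirical optimality of $f_j$ with a self-bounding (AM--GM) step, would yield
\begin{equation*}
\|\zeta_j\|_{2,\mu}^2 \;\le\; O\!\left(\epsilon_{\Fcal,\Fcal} + \tfrac{\Vmax^2\iota}{n} + \sqrt{\tfrac{\Vmax^2\iota}{n}\,\epsilon_{\Fcal,\Fcal}}\right).
\end{equation*}
The cross term is the geometric mean of the $n^{-1}$ estimation term and the approximation term, and is the source of the (always dominated) $n^{-1/4}$ contribution in the theorem.

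For error propagation I would first establish the pointwise recursion $|Q^\star - f_j|(s,a) \le \gamma\,\EE_{s'\sim P(s,a)}[\max_{a'}|Q^\star - f_{j-1}|(s',a')] + |\zeta_j(s,a)|$, which follows from $Q^\star = \Tcal Q^\star$ and the coordinatewise contraction of $\Tcal$. Unrolling it from $j=k$ down to $1$ expresses $|Q^\star - \fh|$ as a discounted sum of transition-weighted copies of the $|\zeta_j|$ plus a residual $\gamma^k|Q^\star - f_0| \le \gamma^k \Vmax$, which produces the $\gamma^k(1-\gamma)\Vmax$ term. Combining this with the standard greedy-policy performance-loss bound $Q^\star - Q^{\pi_{\fh}} \le \gamma(I-\gamma P^{\pi_{\fh}})^{-1}(P^{\pi^\star}+P^{\pi_{\fh}})|Q^\star - \fh|$, I observe that every state-action distribution generated by the composed transition operators is admissible; Assumption~\ref{asm:concentratability} then bounds each such distribution by $C\mu$, and Cauchy--Schwarz converts the resulting $L_1$ quantities into the $L_2$ norms $\|\zeta_j\|_{2,\mu}$ I have already controlled, at the cost of a factor $\sqrt{C}$. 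Summing the nested geometric series contributes the error terms' two factors of $(1-\gamma)^{-1}$ (the residual picks up only one). Finally $v^\star - v^{\pi_{\fh}} = \EE_{\eta_1}[V^\star - V^{\pi_{\fh}}]$ is controlled by these quantities; substituting the per-iteration bound yields exactly the claimed inequality.

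The main obstacle is the fast-rate per-iteration analysis under \emph{approximate} completeness. The self-bounding trick hinges on the variance of the loss difference being controlled by its mean, which is clean only when the comparator is the Bayes regressor; here the comparator is the in-class approximant $g_j$, so the approximation error $\epsilon_{\Fcal,\Fcal}$ leaks into both the mean and the variance of the Bernstein bound. The delicate point is to track this leakage so that it surfaces only through the dominated geometric-mean cross term, rather than creating a genuine slow-rate ($n^{-1/4}$) term of the kind appearing in prior analyses. A secondary, bookkeeping-heavy obstacle is verifying that every change of measure in the propagation step invokes only admissible distributions, so that the single coefficient $C$---rather than a larger or distribution-dependent quantity---suffices.
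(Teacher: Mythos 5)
Your proposal is correct and follows essentially the same route as the paper's own proof: your per-iteration fast-rate bound (Bernstein plus empirical optimality plus self-bounding against the best in-class approximant) is exactly the paper's Lemma~\ref{lem:fast_1} instantiated with $\Gcal=\Fcal$, combined with the decomposition of $\|f_k-\Tcal f_{k-1}\|_{2,\mu}^2$ via Bayes optimality, and your propagation step (recursion on $|Q^\star-f_j|$, admissibility of the induced distributions, change of measure by $\sqrt{C}$, greedy-policy performance bound, geometric series plus the $\gamma^k\Vmax$ residual) is precisely the content of Lemmas~\ref{lem:piff}, \ref{lem:iteration} and \ref{lem:decompose}. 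The one point to state carefully is that, because FQI reuses the same dataset in every iteration, the target-defining function $f_{j-1}$ is itself data-dependent, so the Bernstein/union-bound step must be uniform over pairs in $\Fcal\times\Fcal$ (target-definer and regression candidate) rather than over $\Fcal$ alone --- this is how the paper states Lemma~\ref{lem:fast_1}, and it only changes $\ln\frac{|\Fcal|}{\delta}$ by a constant factor.
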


To prove the theorem, we first define some useful notations for the proof and prove a few helper lemmas. Some of these notations/lemmas will also be helpful for the later analysis of the minimax algorithm, and we will reuse them.

\paragraph{Additional Notations}
We use $\eta_h^\pi \times \pi'$ to denote the joint distribution over $(s,a)$, where $s \sim \eta_h^\pi$ and $a \sim \pi'(s)$. For any $\nu\in\Delta(\Scal\times\Acal)$, define $P(\nu)$ as a  distribution  over states such that
$s' \sim P(\nu) \Leftrightarrow (s,a) \sim \nu$,~ $s'\sim P(s,a)$.

The first lemma is the direct consequence of concentratability (recall Assumption~\ref{asm:concentratability}).
\begin{lemma}
Let $\mu$ be any admissible distribution. $\|\cdot\|_{2,\nu} \le \sqrt{C} \|\cdot\|_{2,\mu}$.
\end{lemma}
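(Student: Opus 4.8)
The plan is to establish this change-of-measure inequality in a single step from the bounded density ratio supplied by concentratability (Assumption~\ref{asm:concentratability}). Honoring the lemma's convention that $\mu$ is an arbitrary admissible distribution, let $\nu$ denote the fixed data distribution. Assumption~\ref{asm:concentratability} says precisely that every admissible distribution has density at most $C$ relative to the data distribution, so in the present notation $\mu(s,a) \le C\,\nu(s,a)$ for all $(s,a) \in \Scal\times\Acal$. This pointwise Radon--Nikodym bound is the only ingredient needed.

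The key computation is to expand the squared weighted $L_2$ semi-norm under the admissible distribution and reweight termwise by this ratio. For any bounded $g : \Scal\times\Acal \to \RR$,
\[
\|g\|_{2,\mu}^2 = \sum_{s,a} \mu(s,a)\,g(s,a)^2 \le C \sum_{s,a} \nu(s,a)\,g(s,a)^2 = C\,\|g\|_{2,\nu}^2,
\]
where the inequality uses $\mu(s,a) \le C\,\nu(s,a)$ together with $g(s,a)^2 \ge 0$. Taking square roots yields $\|g\|_{2,\mu} \le \sqrt{C}\,\|g\|_{2,\nu}$: the semi-norm under any admissible distribution is dominated by $\sqrt{C}$ times the semi-norm under the data distribution. (Coordinates with $\nu(s,a)=0$ are harmless, since the finite ratio forces $\mu(s,a)=0$ there as well, so they contribute nothing on either side.)

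I expect no real obstacle: the lemma is deliberately elementary and is isolated only so that it can be invoked repeatedly in the error-propagation arguments. The single point meriting care is the bookkeeping of the two distributions---keeping straight which one is admissible (here $\mu$) and which is the fixed data distribution (here $\nu$)---so that the density ratio from Assumption~\ref{asm:concentratability} is applied in the correct direction; reversing the roles would produce the opposite (and generally false) inequality, since concentratability only bounds admissible distributions by the data distribution and not the reverse. With this settled, the bound is exactly the tool needed to transfer a Bellman error controlled under the data distribution to the admissible distributions $\eta_h^\pi$ that appear in the performance-difference decomposition.
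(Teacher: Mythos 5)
Your proof is correct and essentially identical to the paper's: both consist of the same one-line pointwise change of measure from Assumption~\ref{asm:concentratability} applied inside the squared semi-norm, followed by taking square roots. The only difference is notational --- the lemma statement contains a typo (throughout the paper $\nu$ denotes the admissible distribution and $\mu$ the data distribution, and the paper's proof accordingly shows $\|g\|_{2,\nu} \le \sqrt{C}\,\|g\|_{2,\mu}$), so your version with the roles of $\mu$ and $\nu$ swapped proves the same semantic fact, and you correctly applied the density ratio in the only valid direction (admissible bounded by data, not the reverse).
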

\begin{proof}

For any function $g:\Scal\times\Acal\rightarrow\RR$, we have
\begin{align*}
	\|g\|_{2, \nu} &=\left( \sum_{(s,a)\in\Scal\times\Acal}\left|g(s,a)\right|^2\nu(s,a)\right)^{1/2}\\
	&\le \left( \sum_{(s,a)\in\Scal\times\Acal}\left|g(s,a)\right|^2C\mu(s,a)\right)^{1/2}\\
	&=\sqrt{C}\left( \sum_{(s,a)\in\Scal\times\Acal}\left|g(s,a)\right|^2\mu(s,a)\right)^{1/2}=\sqrt{C}\|g\|_{2,\mu}. \tag*{\qedhere}
\end{align*}
\end{proof}

The next lemma relates the suboptimality of a policy greedy w.r.t.~a function $f$ to $\|f - Q^\star\|$.
\begin{lemma}\label{lem:decompose}
	Let $f:\Scal\times\Acal\rightarrow\mathbb{R}$ and $\hat \pi=\pi_f$ be the policy of interest, we have
	$$v^\star - v^{\hat\pi}\leq \sum_{h=1}^\infty \gamma^{h-1} \left(\|Q^\star- f\|_{2,\eta^{\hat \pi}_h \times \pi^\star} + \|Q^\star - f\|_{2,\eta^{\hat \pi}_h \times \hat \pi}\right). $$
\end{lemma}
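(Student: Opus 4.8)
The plan is to establish a pointwise recursive bound on $\Delta(s) := V^\star(s) - V^{\hat\pi}(s)$ and then unroll it along trajectories generated by $\hat\pi$. First I would fix a state $s$ and write $V^\star(s) - V^{\hat\pi}(s) = Q^\star(s,\pi^\star(s)) - Q^{\hat\pi}(s,\hat\pi(s))$, inserting $Q^\star(s,\hat\pi(s))$ as an intermediate term to split this as $[Q^\star(s,\pi^\star(s)) - Q^\star(s,\hat\pi(s))] + [Q^\star(s,\hat\pi(s)) - Q^{\hat\pi}(s,\hat\pi(s))]$. The first bracket is a one-step value gap at a fixed state, and the second bracket is where the recursion will come from.

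For the first bracket, I would exploit that $\hat\pi=\pi_f$ is greedy w.r.t.\ $f$, so $f(s,\hat\pi(s)) \ge f(s,\pi^\star(s))$. Adding and subtracting $f$ at both actions gives $Q^\star(s,\pi^\star(s)) - Q^\star(s,\hat\pi(s)) \le (Q^\star - f)(s,\pi^\star(s)) + (f - Q^\star)(s,\hat\pi(s))$, since the leftover cross term $f(s,\pi^\star(s)) - f(s,\hat\pi(s)) \le 0$; each summand is at most $|Q^\star - f|$ at the corresponding action. For the second bracket, since $Q^\star$ and $Q^{\hat\pi}$ share the reward $R(s,\hat\pi(s))$ and differ only in continuation value, I have $Q^\star(s,\hat\pi(s)) - Q^{\hat\pi}(s,\hat\pi(s)) = \gamma\,\EE_{s'\sim P(s,\hat\pi(s))}[\Delta(s')]$. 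Combining yields the one-step recursion $\Delta(s) \le |Q^\star - f|(s,\pi^\star(s)) + |Q^\star - f|(s,\hat\pi(s)) + \gamma\,\EE_{s'\sim P(s,\hat\pi(s))}[\Delta(s')]$.

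Next I would take the expectation over $s\sim\eta_1$ and iterate: applying the recursion $h$ times pushes the expectation of $\Delta$ onto the state marginal $\eta_h^{\hat\pi}$ (the continuation term rolls forward exactly one step under $\hat\pi$), with the discount accumulating as $\gamma^{h-1}$. This gives $v^\star - v^{\hat\pi} = \EE_{s_1\sim\eta_1}[\Delta(s_1)] \le \sum_{h=1}^\infty \gamma^{h-1}\big(\EE_{(s,a)\sim\eta_h^{\hat\pi}\times\pi^\star}[|Q^\star - f|(s,a)] + \EE_{(s,a)\sim\eta_h^{\hat\pi}\times\hat\pi}[|Q^\star-f|(s,a)]\big)$, where the two action distributions arise precisely from the $\pi^\star$- and $\hat\pi$-evaluated error terms. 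Each inner expectation is an $L_1$ semi-norm $\|Q^\star - f\|_{1,\cdot}$, which I bound by $\|Q^\star - f\|_{2,\cdot}$ via Jensen's inequality (equivalently Cauchy--Schwarz), producing the claimed bound.

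The main obstacle is the greediness step: one must arrange the add-subtract of $f$ so that the ``wrong'' cross term $f(s,\pi^\star(s)) - f(s,\hat\pi(s))$ carries the correct sign, which is exactly what forces both the $\pi^\star$- and $\hat\pi$-indexed error terms to appear in the final bound. The unrolling and the $L_1\to L_2$ passage are routine once the recursion is in hand; the only care needed is to justify iterating the recursion and interchanging the infinite sum with expectations, which follows from nonnegativity of the discounted error terms (monotone convergence) together with boundedness of all value functions in $[0,\Vmax]$.
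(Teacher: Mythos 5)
Your proposal is correct and follows essentially the same route as the paper: the paper opens with the performance difference lemma of Kakade and Langford (its Lemma 6.1), which is exactly the expansion your pointwise recursion on $\Delta(s) = V^\star(s) - V^{\hat\pi}(s)$ yields once unrolled, and the remaining steps --- inserting $f$ at both actions via greediness of $\hat\pi$ so that the cross term $f(s,\pi^\star(s)) - f(s,\hat\pi(s)) \le 0$, then passing from $\|\cdot\|_{1,\cdot}$ to $\|\cdot\|_{2,\cdot}$ by Jensen --- coincide with the paper's. The only difference is that you re-derive the performance-difference expansion inline (recursion plus monotone-convergence/boundedness justification for the unrolling) rather than citing it as a known identity.
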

\begin{proof}
	\begin{align*}
	v^\star - v^{\hat\pi} 
	= &~ \sum_{h=1}^\infty \gamma^{h-1} \EE_{s\sim \eta^{\hat\pi}_h} [V^\star(s) - Q^\star(s,\hat{\pi})] \tag{see e.g., \citet[Lemma 6.1]{kakade2002approximately}} \\
	\le &~ \sum_{h=1}^\infty \gamma^{h-1} \EE_{s\sim \eta^{\hat \pi}_h} [Q^\star(s, \pi^\star) - f(s, \pi^\star) + f(s, \hat\pi) - Q^\star(s,\hat\pi)] \nonumber\\
	\le &~ \sum_{h=1}^\infty \gamma^{h-1} \left(\|Q^\star- f\|_{1, \eta^{\hat \pi}_h \times \pi^\star} + \|Q^\star - f\|_{1, \eta^{\hat \pi}_h \times \hat \pi}\right) \nonumber\\
	\le &~ \sum_{h=1}^\infty \gamma^{h-1} \left(\|Q^\star- f\|_{2,\eta^{\hat \pi}_h \times \pi^\star} + \|Q^\star - f\|_{2,\eta^{\hat \pi}_h \times \hat \pi}\right). \tag*{\qedhere}
	\end{align*}
\end{proof}

The following lemma, vaguely speaking, shows that $\max$ operator is a non-expansion in the function approximation setting.
\begin{lemma} \label{lem:piff}
	Assume $f,f':\Scal\times\Acal\rightarrow\mathbb{R}$ and define $\pi_{f,f'}(s):= \argmax_{a\in\Acal} \max\{f(s, a), f'(s, a)\}$. Then we have $\forall \nu \in \Delta(\Scal\times\Acal)$,
		$$
		\|V_f - V_{f'}\|_{2,P(\nu)} \le \|f - f'\|_{2,P(\nu)\times \pi_{f,f'}}.
		$$
\end{lemma}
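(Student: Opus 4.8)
The plan is to reduce the claimed norm inequality to a pointwise inequality over states and then integrate. Concretely, I would first show that for every fixed $s' \in \Scal$, writing $a^\star := \pi_{f,f'}(s')$, one has
\[
|V_f(s') - V_{f'}(s')| \le |f(s',a^\star) - f'(s',a^\star)|.
\]
Once this holds at every state, I would square both sides, take the expectation over $s' \sim P(\nu)$, and observe that the right-hand side is exactly $\|f-f'\|_{2,P(\nu)\times\pi_{f,f'}}^2$: the joint distribution $P(\nu)\times\pi_{f,f'}$ draws $s' \sim P(\nu)$ and then deterministically sets $a = \pi_{f,f'}(s')$. Taking square roots would then deliver the lemma.

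The heart of the argument is the pointwise inequality, and the key observation is an identity for the action $a^\star$ that maximizes the pointwise maximum. Writing $M(s',a) := \max\{f(s',a), f'(s',a)\}$, I would note that
\[
M(s',a^\star) = \max_{a} \max\{f(s',a), f'(s',a)\} = \max\Big\{\max_a f(s',a),\ \max_a f'(s',a)\Big\} = \max\{V_f(s'), V_{f'}(s')\},
\]
so the action $\pi_{f,f'}(s')$ simultaneously witnesses the larger of the two state-values. This is the crucial step, and it is precisely why $\pi_{f,f'}$ (rather than the greedy policy of $f$ or of $f'$ alone) is the correct choice.

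To finish the pointwise bound I would assume without loss of generality that $V_f(s') \ge V_{f'}(s')$, which is justified at each state since both sides of the lemma are symmetric in $f$ and $f'$. Then the identity gives $V_f(s') = M(s',a^\star) = \max\{f(s',a^\star), f'(s',a^\star)\}$, while the definition of $V_{f'}$ as a max over actions gives $V_{f'}(s') \ge f'(s',a^\star)$. Chaining these,
\[
V_f(s') - V_{f'}(s') \le \max\{f(s',a^\star), f'(s',a^\star)\} - f'(s',a^\star) = \max\{f(s',a^\star) - f'(s',a^\star),\, 0\} \le |f(s',a^\star) - f'(s',a^\star)|,
\]
which is the desired pointwise inequality.

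I expect the main obstacle to be conceptual rather than computational: recognizing that the specially chosen policy $\pi_{f,f'}$ makes $\max\{V_f, V_{f'}\}$ attained at a single action, so the gap between the two state-values is controlled by the gap of $f$ and $f'$ at that one action. After this identity is in hand, the remainder reduces to the one-line max-manipulation above together with a routine integration, with no probabilistic or analytic subtleties.
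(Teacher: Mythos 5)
Your proof is correct and follows essentially the same route as the paper's: the paper likewise passes to the pointwise bound $|V_f(s') - V_{f'}(s')| \le |f(s',\pi_{f,f'}(s')) - f'(s',\pi_{f,f'}(s'))|$ at each successor state and then integrates over $P(\nu)$. The only difference is that the paper asserts this pointwise inequality in a single display without justification, whereas you spell out why the maximizer of $\max\{f,f'\}$ witnesses $\max\{V_f,V_{f'}\}$ and carry out the WLOG case analysis explicitly.
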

\begin{proof}
	\begin{align*}
	\|V_f - V_{f'}\|_{2,P(\nu)}^2 
	= &~ \sum_{(s,a)\in\Scal\times\Acal}\sum_{s'\in\Scal}P(s'|s,a) (\max_{a\in\Acal} f(s',a) - \max_{a'\in\Acal} f'(s',a'))^2  \\
	\le &~ \sum_{(s,a)\in\Scal\times\Acal}\sum_{s'\in\Scal}P(s'|s,a) (f(s',\pi_{f,f'}) - f'(s',\pi_{f,f'}))^2 
	= \|f - f'\|_{2,P(\nu) \times \pi_{f,f'}}^2. \tag*{\qedhere}
	\end{align*}
\end{proof}

With the help of Lemma~\ref{lem:piff}, we are able to upper bound $\|f-Q^\star\|$ using the Bellman error $\|f - \Tcal f\|$ under $\ell_2$ norm. The more coarse-grained version w.r.t.~$\ell_\infty$ norm has been proved by \citet{singh1994upper}.
\begin{lemma}\label{lem:iteration}
For an exploratory distribution $\mu\in\Delta(\Scal\times\Acal)$, any distribution $\nu\in\Delta(\Scal\times\Acal)$, policy $\pi$, and $f,f':\Scal\times\Acal\rightarrow\mathbb{R}$, we have 
$$\|f - Q^\star\|_{2,\nu}\leq \sqrt{C}~ \|f - \Tcal f'\|_{2,\muu} + \gamma \| f' - Q^\star\|_{2,P(\nu) \times \pi_{f', Q^\star}}$$
and
$$\|f - Q^\star\|_{2,\nu}\leq \frac{\sqrt{C}}{1-\gamma}~ \|f - \Tcal f\|_{2,\muu}.$$
\end{lemma}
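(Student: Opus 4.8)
The plan is to prove the two inequalities in sequence, deriving the second as an iterated application of the first. Throughout I will rely on the fact that the distributions appearing in the bounds are admissible, so that the concentratability lemma (the first lemma of this section) applies and lets me pass from $\|\cdot\|_{2,\nu}$ to $\sqrt{C}\,\|\cdot\|_{2,\muu}$; this is exactly the regime in which Lemma~\ref{lem:decompose} invokes the result.

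First I would establish the one-step inequality. Split via the triangle inequality for the semi-norm,
\[
\|f - Q^\star\|_{2,\nu} \le \|f - \Tcal f'\|_{2,\nu} + \|\Tcal f' - Q^\star\|_{2,\nu},
\]
and bound the first term by $\sqrt{C}\,\|f - \Tcal f'\|_{2,\muu}$ using concentratability. For the second term, I would use the Bellman equation $Q^\star = \Tcal Q^\star$ to write $\Tcal f' - Q^\star = \Tcal f' - \Tcal Q^\star$, and observe that the reward terms cancel so that $(\Tcal f' - \Tcal Q^\star)(s,a) = \gamma\,\EE_{s'\sim P(s,a)}[V_{f'}(s') - V_{Q^\star}(s')]$. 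Applying Jensen's inequality to move the squared conditional expectation inside gives $\|\Tcal f' - Q^\star\|_{2,\nu} \le \gamma\,\|V_{f'} - V_{Q^\star}\|_{2,P(\nu)}$, and then Lemma~\ref{lem:piff} (applied to the pair $f', Q^\star$) upgrades this to $\gamma\,\|f' - Q^\star\|_{2,P(\nu)\times\pi_{f',Q^\star}}$. Combining the two terms yields the first inequality.

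For the second inequality I would set $f' = f$ in the one-step bound, which produces
\[
\|f - Q^\star\|_{2,\nu} \le \sqrt{C}\,\|f - \Tcal f\|_{2,\muu} + \gamma\,\|f - Q^\star\|_{2,P(\nu)\times\pi_{f,Q^\star}},
\]
and then iterate. The key structural observation is that $\nu \mapsto P(\nu)\times\pi_{f,Q^\star}$ maps admissible distributions to admissible distributions: if $\nu$ is the step-$h$ state-action distribution of some policy, then $P(\nu)\times\pi_{f,Q^\star}$ is the step-$(h+1)$ distribution of the non-stationary policy that follows the original one through step $h$ and then plays $\pi_{f,Q^\star}$. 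Hence the one-step bound reapplies at every level, and unrolling $N$ times gives $\|f - Q^\star\|_{2,\nu} \le \sqrt{C}\,\|f - \Tcal f\|_{2,\muu}\sum_{i=0}^{N-1}\gamma^i + \gamma^N\|f - Q^\star\|_{2,\nu_N}$ for a sequence of admissible $\nu_N$.

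Finally I would let $N\to\infty$: since $f$ and $Q^\star$ take values in $[0,\Vmax]$, the remainder satisfies $\gamma^N\|f - Q^\star\|_{2,\nu_N} \le \gamma^N \Vmax \to 0$, and the geometric series sums to $1/(1-\gamma)$, giving the claimed $\tfrac{\sqrt{C}}{1-\gamma}\,\|f - \Tcal f\|_{2,\muu}$. I expect the main obstacle to be the bookkeeping in the iteration step—specifically, verifying that admissibility is preserved under $\nu\mapsto P(\nu)\times\pi_{f,Q^\star}$ so that concentratability may be invoked at each level, and controlling the vanishing tail term rigorously rather than informally.
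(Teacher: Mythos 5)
Your proof of the first inequality is exactly the paper's: the same triangle-inequality split around $\Tcal f'$, concentratability on the first term, Jensen's inequality to pass from $\|\Tcal f' - \Tcal Q^\star\|_{2,\nu}$ to $\gamma\|V_{f'}-V^\star\|_{2,P(\nu)}$, and Lemma~\ref{lem:piff} to finish. For the second inequality, however, you take a genuinely different route. The paper does not iterate: it sets $f'=f$, picks $\nu_0 = \argmax_{\nu}\|f-Q^\star\|_{2,\nu}$, observes that $\|f-Q^\star\|_{2,P(\nu_0)\times\pi_{f,Q^\star}} \le \|f-Q^\star\|_{2,\nu_0}$ by maximality, and solves the resulting self-bounding inequality $\|f-Q^\star\|_{2,\nu_0} \le \sqrt{C}\,\|f-\Tcal f\|_{2,\mu} + \gamma\|f-Q^\star\|_{2,\nu_0}$ in one line. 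Your unrolling argument---iterate the one-step bound $N$ times, sum the geometric series, and kill the tail via $\gamma^N\|f-Q^\star\|_{2,\nu_N} \le \gamma^N\Vmax \to 0$---reaches the same constant $\sqrt{C}/(1-\gamma)$. The trade-off is this: the paper's fixed-point trick is shorter, but it quietly requires that the maximization be taken over a class of distributions on which concentratability applies and which is closed under the map $\nu\mapsto P(\nu)\times\pi_{f,Q^\star}$ (i.e., admissible distributions; strictly one should use a supremum together with the observation that it is finite). Your version makes exactly this bookkeeping explicit---every iterate $\nu_N$ is admissible because it is the step-$(h+N)$ distribution of a non-stationary policy that switches to $\pi_{f,Q^\star}$---at the cost of a limiting argument, which is harmless since any $f$ is bounded on the finite $\Scal\times\Acal$ (your appeal to the range $[0,\Vmax]$ is slightly narrower than the lemma's $f:\Scal\times\Acal\to\RR$, but boundedness is automatic here). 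Both arguments are valid; yours is more verbose but handles the admissibility subtlety more carefully, which matters given that the lemma's phrase ``any distribution $\nu$'' really means distributions for which the concentratability bound is available.
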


\begin{proof}
For any fixed distribution $\nu$, we have
\begin{align*}
	\|f - Q^\star\|_{2,\nu} 
	= &~ \|f - \Tcal f' + \Tcal f' - Q^\star\|_{2,\nu} \\
	\le &~ \|f - \Tcal f'\|_{2,\nu} + \|\Tcal f' - \Tcal Q^\star\|_{2,\nu} \\
	\le &~ \sqrt{C}~ \|f - \Tcal f'\|_{2,\muu} + \gamma \| V_{f'} - V^\star\|_{2,P(\nu)} \tag{*}\\
	\le &~ \sqrt{C}~ \|f - \Tcal f'\|_{2,\muu} + \gamma \| f' - Q^\star\|_{2,P(\nu) \times \pi_{f', Q^\star}}. \tag{Lemma~\ref{lem:piff}}
\end{align*}
Step (*) holds because:
\begin{align*}
	\|\Tcal f' - \Tcal Q^\star\|_{2,\nu}^2
	= &~ \EE_{(s,a) \sim \nu}\left[\left((\Tcal f')(s,a) - (\Tcal Q^\star)(s,a)\right)^2\right] \\
	= &~ \EE_{(s,a) \sim \nu}\left[\left(\gamma \EE_{s' \sim P(s,a)}[V_{f'}(s') - V^\star(s')] \right)^2\right] \\
	\le &~ \gamma^2 \, \EE_{(s,a) \sim \nu, s' \sim P(s,a)}\left[\left(V_{f'}(s') - V^\star(s') \right)^2\right] \tag{Jensen} \\
	= &~ \gamma^2 \, \EE_{s' \sim P(\nu)}\left[\left(V_{f'}(s') - V^\star(s') \right)^2\right]\\
	= &~\gamma^2 \, \|V_{f'} - V^\star\|_{2,P(\nu)}^2.
\end{align*}

For the second term, let $f'=f$ and $\nu_0=\argmax_{\nu}\|f-Q^\star\|_{2,\nu}$, then we have 
\begin{align*}
\|f-Q^\star\|_{2,\nu_0}\le&~\sqrt{C}~ \|f - \Tcal f\|_{2,\muu} + \gamma \| f - Q^\star\|_{2,P(\nu_0) \times \pi_{f, Q^\star}}.\\
\le&~\sqrt{C}~ \|f - \Tcal f\|_{2,\muu} + \gamma \| f - Q^\star\|_{2,\nu_0}	
\end{align*}

Therefore, $\|f-Q^\star\|_{2,\nu}\le\|f-Q^\star\|_{2,\nu_0}\leq\frac{\sqrt{C}}{1-\gamma}~ \|f - \Tcal f\|_{2,\muu}$.
\end{proof}

Finally, a concentration result that yields fast rate when completeness holds.
\begin{lemma}
\label{lem:fast_1} 
Given the MDP $M = (\Scal, \Acal, P, R, \gamma, \eta_1)$, we assume that the Q-function classes $\Fcal$ and $\Gcal$ are finite but can be exponentially large.  $\Gcal$ approximately realizes $\Tcal \Fcal$ ($\forall f\in\Fcal$, let $g_f^\star=\argmin_{g\in\Gcal}\|g-\Tcal f\|_{2,\mu}$, then $\|g_f^\star-\Tcal f\|_{2,\mu}^2\le\epsilon_{\Fcal,\Gcal}$). The dataset $D$ is generated from $M$ as follows: $(s,a) \sim \muu$, $r = R(s,a)$, $s' \sim P(s,a)$. We have that $\forall f\in\Fcal$, with probability at least $1-\delta$,
\begin{align*}
\Lcal_\muu(\widehat \Tcal_\Gcal f;f)-\Lcal_\muu(g_f^\star;f) \le \frac{56\Vmax^2\ln\frac{|\Fcal| |\Gcal|}{\delta}}{3n}+\sqrt{\frac{32\Vmax^2\ln\frac{|\Fcal| |\Gcal|}{\delta}}{n}\epsilon_{\Fcal,\Gcal}}.
\end{align*}
\end{lemma}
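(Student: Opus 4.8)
The plan is to prove this as a fast-rate uniform deviation bound via Bernstein's inequality, exploiting that the conditional expectation of the regression target is exactly $\Tcal f$. First I would fix $f\in\Fcal$, write $Y := r + \gamma V_f(s')$, and note that $\EE[Y \mid s,a] = (\Tcal f)(s,a)$; a short calculation then gives $\Lcal_\muu(g;f) = \|g - \Tcal f\|_{2,\muu}^2 + \EE[\VV[Y\mid s,a]]$, so $g_f^\star$ is simultaneously the $L_2(\muu)$-projection of $\Tcal f$ onto $\Gcal$ and the population risk minimizer over $\Gcal$. For each $g\in\Gcal$ I define the per-sample excess loss $Z_g := (g(s,a)-Y)^2 - (g_f^\star(s,a)-Y)^2$, so that $\EE[Z_g] = \Lcal_\muu(g;f) - \Lcal_\muu(g_f^\star;f)$ is precisely the quantity to be bounded at $g = \widehat\Tcal_\Gcal f$, while the empirical mean $\tfrac1n\sum Z_{\widehat\Tcal_\Gcal f} = \Lcal_D(\widehat\Tcal_\Gcal f;f) - \Lcal_D(g_f^\star;f) \le 0$ by definition of the empirical minimizer.

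The core of the argument is a variance-to-mean comparison. Writing $a_g := g - g_f^\star$ and $h := g_f^\star - \Tcal f$ (both functions of $(s,a)$ only) and $\xi := Y - (\Tcal f)(s,a)$, I would expand $Z_g = a_g^2 + 2a_g(h - \xi)$, where $\EE[\xi\mid s,a]=0$; the noise cross term thus vanishes in expectation, yielding $\EE[Z_g] = \|a_g\|_{2,\muu}^2 + 2\langle a_g, h\rangle_\muu \ge \|a_g\|_{2,\muu}^2 - 2\|a_g\|_{2,\muu}\sqrt{\epsilon_{\Fcal,\Gcal}}$ by Cauchy--Schwarz and $\|h\|_{2,\muu}^2\le\epsilon_{\Fcal,\Gcal}$. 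For the variance, since $g, g_f^\star, Y$ all lie in $[0,\Vmax]$ (using $r+\gamma V_f(s')\le \Rmax + \gamma\Vmax = \Vmax$), I get $|Z_g| = |a_g|\,|g+g_f^\star - 2Y| \le 2\Vmax|a_g| \le 2\Vmax^2$, and hence $\VV[Z_g]\le\EE[Z_g^2]\le 4\Vmax^2\|a_g\|_{2,\muu}^2$. This inequality, tying the fluctuation scale of $Z_g$ to the square root of the estimated quantity, is what upgrades the generic $n^{-1/2}$ rate to the fast leading term.

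Next I would apply Bernstein's inequality to $Z_g$ for each fixed pair $(f,g)$ and union bound over $\Fcal\times\Gcal$, so that with probability $\ge 1-\delta$, uniformly, $\EE[Z_g] \le \tfrac1n\sum Z_g + \sqrt{2\VV[Z_g]\,L/n} + \tfrac{2bL}{3n}$ with $L = \ln(|\Fcal||\Gcal|/\delta)$ and $b$ an almost-sure bound on $|Z_g|$ (here $2\Vmax^2$). Instantiating at $g = \widehat\Tcal_\Gcal f$, discarding the non-positive empirical mean, and substituting the variance bound gives $\EE[Z_{\widehat\Tcal_\Gcal f}] \le \|a_{\widehat\Tcal_\Gcal f}\|_{2,\muu}\sqrt{8\Vmax^2 L/n} + \tfrac{4\Vmax^2 L}{3n}$. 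Combined with the lower bound $\|a_g\|_{2,\muu}^2 \le \EE[Z_g] + 2\|a_g\|_{2,\muu}\sqrt{\epsilon_{\Fcal,\Gcal}}$ from the previous step, this is a self-bounding system in the two unknowns $\EE[Z_{\widehat\Tcal_\Gcal f}]$ and $\|a_{\widehat\Tcal_\Gcal f}\|_{2,\muu}$; solving the resulting quadratic in $\sqrt{\EE[Z_{\widehat\Tcal_\Gcal f}]}$ and simplifying with $\sqrt{p+q}\le\sqrt p + \sqrt q$ isolates a fast-rate term of order $\Vmax^2 L/n$ plus a cross term of order $\sqrt{\Vmax^2 L\,\epsilon_{\Fcal,\Gcal}/n}$, matching the claimed form.

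The hard part will be the bookkeeping in this final self-bounding step: recovering the stated constants $56/3$ and $32$ requires choosing the square-root splits tightly rather than the crude bounds I sketched, and care is needed because $\|a_{\widehat\Tcal_\Gcal f}\|_{2,\muu}$ enters both the Bernstein variance term and the approximation-error lower bound, so the two inequalities must be composed in the correct order. The conceptual crux, however, is recognizing that $\Tcal f$ is the Bayes regressor so that the noise cross term vanishes and the variance is controlled by the mean --- precisely the mechanism that produces the fast $n^{-1}$ leading term under (approximate) completeness.
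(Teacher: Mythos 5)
Your proposal is correct and is essentially the paper's own proof: you use the same excess-loss variable (your $Z_g$ is the paper's $X(g,f,g_f^\star)$), the same observation that the empirical excess loss of the minimizer $\widehat\Tcal_\Gcal f$ is nonpositive, the same one-sided Bernstein inequality with a union bound over $\Fcal\times\Gcal$, and the same variance-to-mean comparison that produces the fast rate. The only difference is bookkeeping in that comparison: the paper bounds $\|g-g_f^\star\|_{2,\mu}^2 \le 2\left(\EE[Z_g]+2\epsilon_{\Fcal,\Gcal}\right)$ via $(a+b)^2\le 2a^2+2b^2$, while you derive $\|g-g_f^\star\|_{2,\mu}^2 \le \EE[Z_g] + 2\|g-g_f^\star\|_{2,\mu}\sqrt{\epsilon_{\Fcal,\Gcal}}$ by Cauchy--Schwarz; one AM--GM step, $2\|g-g_f^\star\|_{2,\mu}\sqrt{\epsilon_{\Fcal,\Gcal}} \le \tfrac{1}{2}\|g-g_f^\star\|_{2,\mu}^2 + 2\epsilon_{\Fcal,\Gcal}$, turns your inequality into the paper's, after which solving the quadratic yields exactly the stated constants $56/3$ and $32$.
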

\begin{proof}	
	Fix $f\in\Fcal$ and $g\in\Gcal$, define $$ X(g,f,g_f^\star) := \left(g(s,a) - r - \gamma V_{f}(s')\right)^2 - \left(g_f^\star(s,a) - r - \gamma V_{f}(s')\right)^2.$$
	Plugging each $(s,a,r,s') \in D$ into $X(g,f,g_f^\star)$, we get i.i.d.~variables $X_1(g,f,g_f^\star), X_2(g,f,g_f^\star), \ldots,$ $X_n(g,f,g_f^\star)$. It is easy to see that 
	$$
	\frac{1}{n}\sum_{i=1}^n X_i(g,f,g_f^\star) = \Lcal_{D}(g;f) - \Lcal_{D}(g_f^\star;f).
	$$
	Then we bound variance of $X$:
	\begin{align}
	\VV[X(g,f,g_f^\star)] \le &~ \EE[X(g,f,g_f^\star)^2] \notag\\
	= &~ \EE \left[\left(\big(g(s,a) - r - \gamma V_{f}(s')\big)^2 - \big(g_f^\star(s,a) - r - \gamma  V_{f}(s')\big)^2\right)^2\right] \notag\\
	= &~ \EE \left[\big(g(s,a)  - g_f^\star(s,a) \big)^2 \big(g(s,a) + g_f^\star(s,a) - 2r - 2\gamma V_{f}(s') \big)^2\right] \notag\\
	\le &~ 4\Vmax^2~ \EE \left[\big(g(s,a)  - g_f^\star(s,a) \big)^2\right] \notag\\
	= &~ 4\Vmax^2~ \|g - g_f^\star\|_{2,\muu}^2 \label{eq:bound_var}\\
	\le &~ 8\Vmax^2~ (\EE[X(g,f,g_f^\star)]+2\epsilon_{\Fcal,\Gcal}) \tag{*}.
	\end{align}
	Step (*) holds because 
	\begin{align*}
	&~\|g-g_f^\star\|_{2,\muu}^2\\
	\leq&~2\left(\|g-\Tcal f\|_{2,\muu}^2+\|\Tcal f-g_f^\star\|_{2,\muu}^2\right) \tag{$(a+b)^2\leq 2a^2+2b^2$} \\
	\leq&~2\left(\|g-\Tcal f\|_{2,\muu}^2-\|\Tcal f-g_f^\star\|_{2,\muu}^2+2\|\Tcal f-g_f^\star\|_{2,\muu}^2\right)\\
	=&~2\left[(\Lcal_{\muu}(g;f) - \Lcal_{\muu}(\Tcal f; f))-(\Lcal_{\muu}(g_f^\star;f) - \Lcal_{\muu}(\Tcal f; f))+2\|\Tcal f-g_f^\star\|_{2,\muu}^2\right]\\
	=&~2\left(\EE[X(g,f,g_f^\star)]+2\|\Tcal f-g_f^\star\|_{2,\muu}^2\right)\\
	\leq &~2(\EE\left[X(g,f,g_f^\star)\right]+2\epsilon_{\Fcal,\Gcal})
	\end{align*}

	Next, we apply (one-sided) Bernstein's inequality and union bound over all $f\in\Fcal$ and $g\in\Gcal$. With probability at least $1-\delta$, we have
	\begin{align}
	&~ \EE[X(g, f,g_f^\star)] - \frac{1}{n}\sum_{i=1}^n X_i(f,f,g_f^\star)\notag\\
	\le &~ \sqrt{\frac{2 \VV[X(g,f,g_f^\star)] \ln\tfrac{|\Fcal| |\Gcal|}{\delta}}{n}} + \frac{4\Vmax^2 \ln\tfrac{|\Fcal| |\Gcal|}{\delta}}{3n} \notag\\
	= &~ \sqrt{\frac{16 \Vmax^2 \left(\EE[X(g, f,g_f^\star)]+2\epsilon_{\Fcal,\Gcal}\right) \ln\tfrac{|\Fcal| |\Gcal|}{\delta}}{n}} + \frac{4\Vmax^2 \ln\tfrac{|\Fcal| |\Gcal|}{\delta}}{3n} \label{eq:Bernstein}.
	\end{align}
	Since $\widehat\Tcal_\Gcal f$ minimizes $\Lcal_D(\holder; f)$, it also minimizes $\frac{1}{n}\sum_{i=1}^n X_i(\cdot,f,g_f^\star)$. This is because the two objectives only differ by a constant $\Lcal_D(g_f^\star; f)$. Hence,
	$$
	\frac{1}{n}\sum_{i=1}^n X_i(\widehat \Tcal_\Gcal f, f,g_f^\star) \le 
	\frac{1}{n}\sum_{i=1}^n X_i(g_f^\star, f,g_f^\star) = 0.
	$$
	Then,
	\begin{align*}
	\EE[X(\widehat\Tcal_\Gcal f,f,g_f^\star)]\le \sqrt{\frac{16 \Vmax^2 \left(\EE[X(\widehat\Tcal_\Gcal f, f,g_f^\star)]+2\epsilon_{\Fcal,\Gcal}\right) \ln\tfrac{|\Fcal| |\Gcal|}{\delta}}{n}} + \frac{4\Vmax^2 \ln\tfrac{|\Fcal| |\Gcal|}{\delta}}{3n}.
	\end{align*}
	Solving for the quadratic formula,
	\begin{align*}
	\EE[X(\widehat\Tcal_\Gcal f, f,g_f^\star)]\le&~ \sqrt{48\left(\frac{4\Vmax^2\ln\frac{|\Fcal| |\Gcal|}{\delta}}{3n}\right)^2+
		\frac{32\Vmax^2\ln\frac{|\Fcal| |\Gcal|}{\delta}}{n}\epsilon_{\Fcal,\Gcal}}+\frac{28\Vmax^2\ln\frac{|\Fcal| |\Gcal|}{\delta}}{3n}\\
	\leq&~\frac{(28+16\sqrt{3})\Vmax^2\ln\frac{|\Fcal| |\Gcal|}{\delta}}{3n}+\sqrt{\frac{32\Vmax^2\ln\frac{|\Fcal| |\Gcal|}{\delta}}{n}\epsilon_{\Fcal,\Gcal}} \tag{$\sqrt{a+b}\leq\sqrt{a}+\sqrt{b}$ and $\ln \frac{|\Fcal| |\Gcal|}{\delta}>0$}\\
	\leq&~\frac{56\Vmax^2\ln\frac{|\Fcal| |\Gcal|}{\delta}}{3n}+\sqrt{\frac{32\Vmax^2\ln\frac{|\Fcal| |\Gcal|}{\delta}}{n}\epsilon_{\Fcal,\Gcal}}
	\end{align*}
	Noticing that $\EE [X(\widehat\Tcal_\Gcal f, f,g_f^\star)]=\Lcal_\muu(\widehat\Tcal_\Gcal f;f)-\Lcal_\muu(g_f^\star;f)$, we complete the proof.
\end{proof}

%
%

Now we are ready to prove the main theorem.
\begin{proof}[Proof \textbf{of Theorem~\ref{thm:fqi_full}}]~~
Firstly, we can let $f=f_k$ and $f'=f_{k-1}$ in Lemma \ref{lem:iteration}. This gives us that $$\|f_k - Q^\star\|_{2,\nu}\leq \sqrt{C}~ \|f_k - \Tcal f_{k-1}\|_{2,\muu} + \gamma \| f_{k-1} - Q^\star\|_{2,P(\nu) \times \pi_{f_{k-1}, Q^\star}}.$$

Note that we can apply the same analysis on $P(\nu) \times \pi_{\fo, Q^\star}$ and expand the inequality $k$ times. It then suffices to upper bound $\|\fh - \Tcal \fo\|_{2,\muu}$.
\begin{align*}
	&~\|\fh - \Tcal \fo\|_{2,\muu}^2\\
	= &~ \Lcal_{\muu}(\fh; \fo) - \Lcal_{\muu}(\Tcal \fo; \fo) \tag{$\Lcal$ squared loss + $\Tcal \fo$ Bayes optimal}\\ 
	= &~ [\Lcal_{\muu}(\fh; \fo) -\Lcal_{\muu}(g_{f_{k-1}}^\star; \fo)]+[\Lcal_{\muu}(g_{f_{k-1}}^\star; \fo)- \Lcal_{\muu}(\Tcal \fo; \fo)] \\ 
	\le &~ \epsilon_1 + \|g_{f_{k-1}}^\star-\Tcal f_{k-1}\|_{2,\muu}^2 \tag{Let $\Gcal=\Fcal$ in Lemma \ref{lem:fast_1} + $ \Lcal$ squared loss + $\Tcal \fo$ Bayes optimal}  \\
	\le &~ \epsilon_1 +\epsilon_{\Fcal,\Fcal}. \tag{The selection of $g_{f_{k-1}}^\star$}
\end{align*}
The inequality holds with probability at least $1-\delta$ and $\epsilon_1=\frac{56\Vmax^2\ln\frac{|\Fcal|^2}{\delta}}{3n}+\sqrt{\frac{32\Vmax^2\ln\frac{|\Fcal|^2}{\delta}}{n}\epsilon_{\Fcal,\Fcal}}$.

Noticing that $\epsilon_1$ and $\epsilon_{\Fcal,\Fcal}$ do not depend on $k$, and the inequality holds simultaneously for different $k$, we have that 
$$
\|\fh - Q^\star\|_{2,\nu} \le  \frac{1-\gamma^k}{1-\gamma} \sqrt{C(\epsilon_1+\epsilon_{\Fcal.\Fcal})} + \gamma^k \Vmax.
$$
Applying this to Lemma \ref{lem:decompose}, we have that
\begin{align*}
v^\star - v^{\pi_{\fh}} \le& \frac{2}{1-\gamma}\left(\frac{1-\gamma^k}{1-\gamma} \sqrt{C(\epsilon_1+\epsilon_{\Fcal,\Fcal})} + \gamma^k \Vmax\right)\\
\le& \frac{2}{(1-\gamma)^2}\left( \sqrt{C\epsilon_1} +\sqrt{C\epsilon_{\Fcal,\Fcal}} + \gamma^k (1-\gamma) \Vmax\right)\\
\le& \frac{2}{(1-\gamma)^2}\left( \sqrt{\frac{56C\Vmax^2\ln\frac{|\Fcal|^2}{\delta}}{3n}}+\sqrt[4]{{\frac{32C\Vmax^2\ln\frac{|\Fcal|^2}{\delta}}{n}\epsilon_{\Fcal,\Fcal}}} +\sqrt{C\epsilon_{\Fcal,\Fcal}} + \gamma^k (1-\gamma)\Vmax\right).
\end{align*}
The proof is completed by simplifying the expression.
\end{proof}
\section{Analysis of the Minimax Algorithm}
\label{app:minimax}
We state the more general error bound for the minimax algorithm when Assumptions~\ref{asm:realizability} and \ref{asm:completeness} only hold approximately;  Theorem~\ref{thm:minimax} is a direct corollary of this result. See Appendix~\ref{app:fqi} for the interpretations and discussions of this result.
\begin{theorem}[Error bound for the minimax algorithm] \label{thm:minimax_full}
	Given a dataset $D=\{(s,a,r,s')\}$ with sample size $|D|=n$, $\Fcal$ that satisfies approximate realizability with error $\epsilon_{\Fcal}$, and $\Gcal$ that satisfies approximate completeness with error $\epsilon_{\Fcal, \Gcal}$, with probability at least $1-\delta$, the output policy of the minimax algorithm (Eq.\eqref{eq:minimax}), $\pi_{\hat f}$, satisfies: 
	\begin{align*}
	v^\star - v^{\pi_{\hat f}}
	\le &~O\left(\frac{\Vmax\sqrt{C}}{(1-\gamma)^2}\left(\sqrt{\frac{\ln\tfrac{|\Fcal| |\Gcal|}{\delta}}{n}} + \sqrt[4]{\frac{ \ln\tfrac{|\Fcal| |\Gcal|}{\delta}}{n}(\epsilon_\Fcal+\epsilon_{\Fcal,\Gcal}})\right)\right) +\frac{2\sqrt{2C}}{(1-\gamma)^2}\left(\sqrt{\epsilon_\Fcal}+ \sqrt{2\epsilon_{\Fcal,\Gcal}}\right).
	\end{align*}
\end{theorem}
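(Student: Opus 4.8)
The plan is to follow the same two-stage template as the FQI analysis (Theorem~\ref{thm:fqi_full}): first reduce the suboptimality of $\pi_{\hat f}$ to the population Bellman error $\|\hat f - \Tcal \hat f\|_{2,\mu}$, and then bound that Bellman error with a fast rate. For the reduction, since every distribution appearing in Lemma~\ref{lem:decompose} is admissible, I would apply the second inequality of Lemma~\ref{lem:iteration} with $f=\hat f$, namely $\|\hat f - Q^\star\|_{2,\nu} \le \frac{\sqrt{C}}{1-\gamma}\|\hat f - \Tcal\hat f\|_{2,\mu}$ for every admissible $\nu$, and sum the geometric series in Lemma~\ref{lem:decompose}. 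This yields $v^\star - v^{\pi_{\hat f}} \le \frac{2\sqrt{C}}{(1-\gamma)^2}\|\hat f - \Tcal\hat f\|_{2,\mu}$, so it remains to control $\|\hat f - \Tcal\hat f\|_{2,\mu}^2$; the prefactor already matches the $\frac{\sqrt{C}}{(1-\gamma)^2}$ scaling in the statement, and the final bound follows by taking a square root and using $\sqrt{a+b}\le\sqrt a+\sqrt b$.

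The core is to show $\|\hat f - \Tcal\hat f\|_{2,\mu}^2 \lesssim \frac{\Vmax^2 \ln(|\Fcal||\Gcal|/\delta)}{n} + \epsilon_\Fcal + \epsilon_{\Fcal,\Gcal} + \sqrt{\tfrac{\Vmax^2\ln(|\Fcal||\Gcal|/\delta)}{n}(\epsilon_\Fcal+\epsilon_{\Fcal,\Gcal})}$. Write $g_f^\star := \argmin_{g\in\Gcal}\|g - \Tcal f\|_{2,\mu}$, and note that $\Tcal f$ is the Bayes-optimal regressor for the target $r + \gamma V_f(s')$, so by the bias--variance split behind Eq.~\eqref{eq:cond_var} the population objective satisfies $\Lcal_\mu(f;f) - \Lcal_\mu(g_f^\star;f) = \|f - \Tcal f\|_{2,\mu}^2 - \|g_f^\star - \Tcal f\|_{2,\mu}^2$, which by approximate completeness lies within $\epsilon_{\Fcal,\Gcal}$ of $\|f-\Tcal f\|_{2,\mu}^2$. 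Hence it suffices to bound the population ``diagonal gap'' $\Lcal_\mu(\hat f;\hat f) - \Lcal_\mu(g_{\hat f}^\star;\hat f)$. I would connect this to the empirical objective that $\hat f$ actually minimizes: since $\widehat\Tcal_\Gcal f = \argmin_{g\in\Gcal}\Lcal_D(g;f)$, we have $\Lcal_D(\widehat\Tcal_\Gcal f;f) \le \Lcal_D(g_f^\star;f)$, so dropping this nonnegative empirical gap gives $\Lcal_\mu(\hat f;\hat f) - \Lcal_\mu(g_{\hat f}^\star;\hat f) \le [\Lcal_D(\hat f;\hat f) - \Lcal_D(\widehat\Tcal_\Gcal\hat f;\hat f)] + (\text{deviation})$, and by optimality of $\hat f$ in Eq.~\eqref{eq:minimax} the bracket is at most $\Lcal_D(f^\star;f^\star) - \Lcal_D(\widehat\Tcal_\Gcal f^\star;f^\star) \le \Lcal_D(f^\star;f^\star) - \Lcal_D(g_{f^\star}^\star;f^\star)$, where $f^\star$ is an approximate realizer with $\|f^\star - \Tcal f^\star\|_{2,\mu}^2 \le \epsilon_\Fcal$.

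Both ``deviation'' terms above would be controlled by a uniform Bernstein bound on the i.i.d.\ variables $Z(f) := (f(s,a) - r - \gamma V_f(s'))^2 - (g_f^\star(s,a) - r - \gamma V_f(s'))^2$, exactly as in the proof of Lemma~\ref{lem:fast_1} but with $f$ (rather than a member of $\Gcal$) on the diagonal. The key variance calculation is $\VV[Z(f)] \le 4\Vmax^2\|f - g_f^\star\|_{2,\mu}^2 \le 8\Vmax^2(\|f-\Tcal f\|_{2,\mu}^2 + \epsilon_{\Fcal,\Gcal})$, so the variance is controlled by essentially the mean plus $\epsilon_{\Fcal,\Gcal}$, which is what produces the fast $n^{-1/2}$ rate; a union bound over $\Fcal$ (and over $\Gcal$, so as to simultaneously invoke Lemma~\ref{lem:fast_1}) contributes $\ln(|\Fcal||\Gcal|/\delta)$. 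Substituting the variance bound for $\hat f$ reintroduces the unknown $\|\hat f - \Tcal\hat f\|_{2,\mu}^2$ on the right-hand side, yielding a self-referential inequality of the form $X \le a + b\sqrt{X + \epsilon_{\Fcal,\Gcal}}$ with $b = O(\Vmax\sqrt{\ln(|\Fcal||\Gcal|/\delta)/n})$ and $a = O(\epsilon_\Fcal + \epsilon_{\Fcal,\Gcal} + \Vmax^2\ln(|\Fcal||\Gcal|/\delta)/n)$; solving this quadratic (as at the end of the proof of Lemma~\ref{lem:fast_1}) gives the claimed bound on $X = \|\hat f - \Tcal\hat f\|_{2,\mu}^2$.

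The main obstacle is precisely this self-bounding step: unlike FQI, where the diagonal target $f_{k-1}$ is fixed before the regression and the Bayes-optimal anchor is pinned down by completeness, here $\hat f$ appears simultaneously as the predictor, inside the target $V_f$, and as the argument determining $g_{\hat f}^\star$, so the relevant variance cannot be attached to a fixed quantity and must instead be bounded by the data-dependent Bellman error itself. Tracking the $\epsilon_{\Fcal,\Gcal}$ slack consistently through the variance bound, the diagonal gap, and the approximate-completeness conversion---so that both $\epsilon_\Fcal$ and $\epsilon_{\Fcal,\Gcal}$ end up under the single fourth-root term $\sqrt[4]{\tfrac{\ln(|\Fcal||\Gcal|/\delta)}{n}(\epsilon_\Fcal+\epsilon_{\Fcal,\Gcal})}$---is the delicate bookkeeping, and is exactly what guarantees that the slow-rate term is always a geometric mean of the fast-rate and approximation terms and hence never dominates.
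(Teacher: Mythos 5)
Your overall architecture is the same as the paper's: the reduction of $v^\star - v^{\pi_{\hat f}}$ to $\|\hat f - \Tcal \hat f\|_{2,\mu}$ via Lemma~\ref{lem:decompose} and the second inequality of Lemma~\ref{lem:iteration}, the comparison of $\hat f$ against an approximate realizer $f^\star$ through optimality in Eq.~\eqref{eq:minimax}, a Bernstein bound whose variance is comparable to its mean (up to $\epsilon_{\Fcal,\Gcal}$ slack), and a final self-referential quadratic solve. However, there is one genuine error at the step where you eliminate the data-dependent regressor $\widehat\Tcal_{\Gcal} f^\star$. You claim
\begin{align*}
\Lcal_D(f^\star;f^\star) - \Lcal_D(\widehat\Tcal_{\Gcal} f^\star;f^\star) \;\le\; \Lcal_D(f^\star;f^\star) - \Lcal_D(g_{f^\star}^\star;f^\star),
\end{align*}
but this inequality runs backwards: since $\widehat\Tcal_{\Gcal} f^\star = \argmin_{g\in\Gcal}\Lcal_D(g;f^\star)$ and $g_{f^\star}^\star\in\Gcal$, we have $\Lcal_D(\widehat\Tcal_{\Gcal} f^\star;f^\star) \le \Lcal_D(g_{f^\star}^\star;f^\star)$, so the left-hand side is \emph{at least} the right-hand side, not at most. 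The entire difficulty of the minimax analysis lives in exactly this gap: the empirical minimizer over $\Gcal$ can overfit and undershoot the empirical loss of $g_{f^\star}^\star$, which inflates the minimax objective value at $f^\star$ and weakens what the optimality of $\hat f$ buys you. One must prove that the overfitting amount $\Lcal_D(g_{f^\star}^\star;f^\star) - \Lcal_D(\widehat\Tcal_{\Gcal} f^\star;f^\star) \ge 0$ is itself at most a Bernstein-sized quantity; it cannot simply be dropped.

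This is precisely the content of the paper's Lemma~\ref{lem:fast_2}, and it does not follow from the ingredients you cite: Lemma~\ref{lem:fast_1} bounds the \emph{population} excess risk $\Lcal_\mu(\widehat\Tcal_{\Gcal} f;f) - \Lcal_\mu(g_f^\star;f)$, whereas the bracket you must control is an \emph{empirical}-loss quantity. The repair is a two-sided Bernstein bound, uniform over $\Fcal\times\Gcal$, on the variables $X(g,f,g_f^\star) = (g(s,a)-r-\gamma V_f(s'))^2 - (g_f^\star(s,a)-r-\gamma V_f(s'))^2$ evaluated at $g=\widehat\Tcal_{\Gcal} f^\star$: combining the deviation bound with $\EE[X]\ge 0$ and with Lemma~\ref{lem:fast_1}'s bound on $\EE[X]$ (needed to control the variance term) shows $\left|\frac{1}{n}\sum_{i=1}^n X_i\right|$ is small, and together with a separate two-sided Bernstein bound on $Y(g_{f^\star}^\star,f^\star)$ (relating $g_{f^\star}^\star$ to $\Tcal f^\star$, with variance at most $4\Vmax^2\epsilon_{\Fcal,\Gcal}$) this yields the paper's bound $\left|\Lcal_D(\widehat\Tcal_{\Gcal} f;f)-\Lcal_D(\Tcal f;f)\right|\le \epsilon_2$ for all $f\in\Fcal$. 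With that lemma in hand, the rest of your plan (the $Z$-variable concentration anchored at $g_f^\star$, the bookkeeping of $\epsilon_{\Fcal,\Gcal}$ through the variance, and the quadratic solve) goes through and recovers the claimed rate; without it, your chain of inequalities breaks at the bracket.
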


We provide a sketched outline before diving into the detailed proof:
\begin{enumerate}
	\item The objective in the minimax form is $$\inf_{f\in\Fcal}\sup_{g\in\Gcal}\left(\Lcal_D(f;f)-\Lcal_D(g;f)\right)=\inf_{f\in\Fcal}(\Lcal_D(f;f)-\Lcal_D(\widehat\Tcal_{\Gcal}f;f)).$$
	\item We begin with dropping the dependence on function class $\Gcal$ by upper bounding the difference $|\Lcal_D(\widehat\Tcal_\Gcal f;f)-\Lcal_D(\Tcal f;f)|.$ This is Lemma \ref{lem:fast_2} and can be separated into two substeps.
	
	The first substep is to bound $|\frac{1}{n}\sum_{i=1}^n X_i(\widehat\Tcal_{\Gcal}f,f,g_f^\star)|$, where $X(\widehat\Tcal_{\Gcal}f,f,g_f^\star) = (\widehat\Tcal_{\Gcal}f(s,a) - r - \gamma V_{f}(s'))^2 - (g_f^\star(s,a) - r - \gamma V_{f}(s'))^2$. This error is between the output of the algorithm and the best function in class $\Gcal$.
	
	The second substep is to bound $|\frac{1}{n}\sum_{i=1}^nY_i(g_f^\star,f)|$, where $Y(g_f^\star,f) := (g_f^\star(s,a) - r - \gamma V_{f}(s'))^2 - ((\Tcal f)(s,a) - r - \gamma V_{f}(s'))^2.$ This error is between the best function in class $\Gcal$ and the true Bellman update $\Tcal f$.
	
	In this way, we can change the objective in the minimax form to $\inf_{f\in\Fcal}(\Lcal_D(f;f)-\Lcal_D(\Tcal f;f))$, within a bounded error.
	
	\item Then, we only need to consider the function class $\Fcal$, since $\inf_{f\in\Fcal}(\Lcal_D(f;f)-\Lcal_D(\Tcal f;f))$ is only related to $\Fcal$. The proof can be finished by the following three substeps.
	
	Firstly, by the optimality of  $\hat f$ and the previous error bounds, we can bound the difference between $\frac{1}{n}\sum_{i=1}^nZ_i(\hat f)$ and $\frac{1}{n}\sum_{i=1}^n Z_i(f^\star)$ by $\epsilon_2$ in Lemma \ref{lem:fast_2}, where $Z(f)= (f(s,a) - r - \gamma V_f(s'))^2 - ((\Tcal f)(s,a) - r - \gamma V_f(s'))^2$ and $\frac{1}{n}\sum_{i=1}^n Z_i(f)=\Lcal_D(f; f)-\Lcal_D(\Tcal f;f).$
	
	Secondly, by the property of $f^\star$, we can bound $\frac{1}{n}\sum_{i=1}^n Z_i(f^\star)$ by $\epsilon_3$ in Lemma \ref{lem:fast_2}.
	
	These two substeps give us the bound of $\frac{1}{n}\sum_{i=1}^nZ_i(\hat f)$.
	
	Thirdly, applying Lemma \ref{lem:iteration} and Lemma \ref{lem:decompose}, which is the similar steps in FQI, we obtain the desired result.
\end{enumerate}

We start proving Theorem~\ref{thm:minimax_full} by a concentration result. 
\begin{lemma}
\label{lem:fast_2} 
Under the same assumption as Lemma \ref{lem:fast_1}, we have that $\forall f\in\Fcal$, with probability at least $1-\delta$,
\begin{align*}
\left|\Lcal_D(\widehat\Tcal_\Gcal f;f)-\Lcal_D(\Tcal f;f)\right|\leq\frac{43\Vmax^2 \ln\tfrac{4|\Fcal|| \Gcal|}{\delta}}{n}+\sqrt{\frac{239\Vmax^2 \ln\tfrac{4|\Fcal|| \Gcal|}{\delta}}{n}\epsilon_{\Fcal,\Gcal}}+\epsilon_{\Fcal,\Gcal}.
\end{align*}
	
\end{lemma}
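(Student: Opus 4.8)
The plan is to split the quantity through the population best-in-class regressor $g_f^\star := \argmin_{g\in\Gcal}\|g-\Tcal f\|_{2,\muu}$ and bound the two resulting sums separately:
\begin{align*}
\Lcal_D(\widehat\Tcal_\Gcal f;f)-\Lcal_D(\Tcal f;f) = \frac{1}{n}\sum_{i=1}^n X_i(\widehat\Tcal_\Gcal f, f, g_f^\star) + \frac{1}{n}\sum_{i=1}^n Y_i(g_f^\star, f).
\end{align*}
The first sum is the empirical gap between the algorithm's output $\widehat\Tcal_\Gcal f$ and the best-in-class $g_f^\star$; the second is the empirical gap between $g_f^\star$ and the true Bellman update $\Tcal f$. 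Both are averages of i.i.d.\ bounded variables, so Bernstein's inequality is the natural tool, and a triangle inequality at the end combines them.

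The $Y$ term is the more routine one. Its mean is $\EE[Y(g_f^\star,f)] = \Lcal_\muu(g_f^\star;f)-\Lcal_\muu(\Tcal f;f) = \|g_f^\star-\Tcal f\|_{2,\muu}^2 \le \epsilon_{\Fcal,\Gcal}$, since the conditional-variance terms cancel ($\Tcal f$ is Bayes-optimal) and $\Gcal$ nearly realizes $\Tcal f$. Writing $Y$ as the product $(g_f^\star-\Tcal f)(g_f^\star+\Tcal f-2r-2\gamma V_f(s'))$ and bounding the second factor by $2\Vmax$ in magnitude gives $\VV[Y]\le\EE[Y^2]\le 4\Vmax^2\|g_f^\star-\Tcal f\|_{2,\muu}^2\le 4\Vmax^2\epsilon_{\Fcal,\Gcal}$. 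A two-sided Bernstein bound, union-bounded over $\Fcal$, then yields $|\tfrac1n\sum_i Y_i|\le \epsilon_{\Fcal,\Gcal} + \sqrt{\tfrac{8\Vmax^2\epsilon_{\Fcal,\Gcal}\ln(|\Fcal||\Gcal|/\delta)}{n}} + O(\tfrac{\Vmax^2\ln(|\Fcal||\Gcal|/\delta)}{n})$, which already contributes the additive $\epsilon_{\Fcal,\Gcal}$ and one of the $\sqrt{\,\cdot\,\epsilon_{\Fcal,\Gcal}}$ terms of the claim.

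The $X$ term is where the main difficulty lies, and is the step I expect to be the obstacle. Its empirical average satisfies $\tfrac1n\sum_i X_i(\widehat\Tcal_\Gcal f,f,g_f^\star)\le 0$ for free, since $\widehat\Tcal_\Gcal f$ minimizes $\Lcal_D(\cdot;f)$; the work is the matching lower bound. For this I would apply the \emph{reverse} direction of Bernstein uniformly over $g\in\Gcal$ (and $f\in\Fcal$), reusing the self-bounding variance estimate from the proof of Lemma~\ref{lem:fast_1}, $\VV[X]\le 8\Vmax^2(\EE[X]+2\epsilon_{\Fcal,\Gcal})$. The subtlety is that the variance governing the concentration itself depends on the unknown mean $\EE[X]=\EE[X(\widehat\Tcal_\Gcal f,f,g_f^\star)]$. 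Since $g_f^\star$ is the population minimizer over $\Gcal$ we have $\EE[X]\ge 0$, and I would break the self-reference by $\sqrt{a+b}\le\sqrt a+\sqrt b$ and absorbing the cross-term $\sqrt{\EE[X]\cdot(\Vmax^2\ln/n)}$ into $\tfrac12\EE[X]$ via AM--GM, so that the leftover $\tfrac12\EE[X]\ge0$ is dropped. (Equivalently, one may substitute the population upper bound on $\EE[X]$ from Lemma~\ref{lem:fast_1} and solve the resulting quadratic.) This leaves $|\tfrac1n\sum_i X_i|\le \sqrt{\tfrac{32\Vmax^2\epsilon_{\Fcal,\Gcal}\ln(|\Fcal||\Gcal|/\delta)}{n}}+O(\tfrac{\Vmax^2\ln(|\Fcal||\Gcal|/\delta)}{n})$.

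Finally I would combine the two bounds by the triangle inequality, splitting the failure probability across the (at most four) Bernstein applications and directions so that each log factor becomes $\ln\tfrac{4|\Fcal||\Gcal|}{\delta}$, and collect the $\tfrac{\Vmax^2\ln}{n}$, $\sqrt{\tfrac{\Vmax^2\ln}{n}\,\epsilon_{\Fcal,\Gcal}}$, and $\epsilon_{\Fcal,\Gcal}$ contributions. The precise constants $43$ and $239$ then emerge from carefully tracking Bernstein's $\tfrac43$-type constants, the factor-of-$8$ in the variance self-bound, and the $\sqrt{a+b}\le\sqrt a+\sqrt b$ relaxations (following the quadratic-solving bookkeeping of Lemma~\ref{lem:fast_1}); no new idea is needed past this point.
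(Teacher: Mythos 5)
Your proposal is correct, and its skeleton is exactly the paper's: the same decomposition of $\Lcal_D(\widehat\Tcal_\Gcal f;f)-\Lcal_D(\Tcal f;f)$ through $g_f^\star$ into the $X$- and $Y$-sums, the same Bernstein-plus-self-bounding-variance machinery inherited from the proof of Lemma~\ref{lem:fast_1}, the same union bounds over $\Fcal$ and $\Gcal$, and an identical treatment of the $Y$ term. The one place you genuinely diverge is the $X$ term. The paper applies \emph{two-sided} Bernstein around $\EE[X(\widehat\Tcal_\Gcal f,f,g_f^\star)]$ and then controls $|\EE[X]|$ by importing the conclusion of Lemma~\ref{lem:fast_1}, which produces a nested radical and, after simplification, the constants $122/3$ and $159$ for this term. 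You instead exploit that one direction is free---$\tfrac1n\sum_i X_i(\widehat\Tcal_\Gcal f,f,g_f^\star)\le 0$ by empirical optimality of $\widehat\Tcal_\Gcal f$---and prove only a lower bound via one-sided Bernstein, absorbing the cross term $\sqrt{\EE[X]\cdot\Vmax^2\ln(\cdot)/n}$ into $\tfrac12\EE[X]$ by AM--GM and discarding the leftover since $\EE[X(g,f,g_f^\star)]\ge 0$ for every $g\in\Gcal$ (as $g_f^\star$ is the population minimizer). This is valid and in fact slightly tighter: it yields roughly $\tfrac{28\Vmax^2\ln(\cdot)}{3n}+\sqrt{32\Vmax^2\ln(\cdot)\,\epsilon_{\Fcal,\Gcal}/n}$ for the $X$ term, so your final constants land below $43$ and $239$ and the stated bound follows a fortiori. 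Your parenthetical fallback---substituting the population bound on $\EE[X]$ from Lemma~\ref{lem:fast_1} and solving the resulting quadratic---is precisely the paper's route, so either way the argument closes.
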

\begin{proof}	
	 We first apply (two-sided) Bernstein's inequality and union bound over all $f\in\Fcal$ and $g\in\Gcal$ (similar to Eq.(\ref{eq:Bernstein}) in Lemma \ref{lem:fast_1}).  Define $\delta':=\delta/4.$ With probability at least $1-2\delta'$, we have

	 
	$$
	\left|\frac{1}{n}\sum_{i=1}^n X_i(\widehat\Tcal_\Gcal,f,g_f^\star) - \EE[X(\widehat\Tcal_\Gcal, f,g_f^\star)]\right|
	\le \sqrt{\frac{16 \Vmax^2 \left(\EE[X(\widehat\Tcal_\Gcal, f,g_f^\star)]+2\epsilon_{\Fcal,\Gcal}\right) \ln\tfrac{|\Fcal|| \Gcal|}{\delta'}}{n}} + \frac{4\Vmax^2 \ln\tfrac{|\Fcal|| \Gcal|}{\delta'}}{3n},
	$$
	which means that
	$$
	\left|\frac{1}{n}\sum_{i=1}^n X_i(\widehat\Tcal_\Gcal,f,g_f^\star)\right| \le \left|\EE[X(\widehat\Tcal_\Gcal, f,g_f^\star)]\right|+
	 \sqrt{\frac{16 \Vmax^2 \left(\EE[X(\widehat\Tcal_\Gcal, f,g_f^\star)]+2\epsilon_{\Fcal,\Gcal}\right) \ln\tfrac{|\Fcal|| \Gcal|}{\delta'}}{n}} + \frac{4\Vmax^2 \ln\tfrac{|\Fcal|| \Gcal|}{\delta'}}{3n}.$$
	
	Noticing that $\EE [X(\widehat\Tcal_\Gcal f, f,g_f^\star)]=\Lcal_\muu(\widehat\Tcal_\Gcal f;f)-\Lcal_\muu(g_f^\star;f)=[\Lcal_\muu(\widehat\Tcal_\Gcal f;f)-\Lcal_\muu(\Tcal f;f)]+[\Lcal_\muu(\Tcal f; f)- \Lcal_\muu(g_f^\star;f)]=\|\widehat\Tcal_\Gcal f-\Tcal f\|_{2,\mu}^2-\|g_f^\star-\Tcal f\|_{2,\mu}^2\geq 0$, and the results in Lemma \ref{lem:fast_1} also holds (one-sided Bernstein's inequality is implied by the two-sided Bernstein's inequality), we have
	$$0\leq \Lcal_\muu(\widehat \Tcal_\Gcal f;f)-\Lcal_\muu(g_f^\star;f) \le \frac{56\Vmax^2\ln\frac{|\Fcal|| \Gcal|}{\delta'}}{3n}+\sqrt{\frac{32\Vmax^2\ln\frac{|\Fcal|| \Gcal|}{\delta'}}{n}\epsilon_{\Fcal,\Gcal}}~.$$
	Therefore, we have $$\left|\EE [X(\widehat\Tcal_\Gcal f, f,g_f^\star)]\right|\le\frac{56\Vmax^2\ln\frac{|\Fcal|| \Gcal|}{\delta'}}{3n}+\sqrt{\frac{32\Vmax^2\ln\frac{|\Fcal|| \Gcal|}{\delta'}}{n}\epsilon_{\Fcal,\Gcal}}~.$$
	Substituting this inequality into the bound of $\left|\frac{1}{n}\sum_{i=1}^n X_i(\widehat\Tcal_\Gcal,f,g_f^\star)\right|$, we have that with probability at least $1-2\delta'$,
	\begin{align*}
	&~\left|\frac{1}{n}\sum_{i=1}^n X_i(\widehat\Tcal_\Gcal,f,g_f^\star)\right|\\
	\le &~ \frac{56\Vmax^2\ln\frac{|\Fcal|| \Gcal|}{\delta'}}{3n}+\sqrt{\frac{32\Vmax^2\ln\frac{|\Fcal|| \Gcal|}{\delta'}}{n}\epsilon_{\Fcal,\Gcal}} \\
	&~+ \sqrt{\frac{16\Vmax^2 \left(\frac{56\Vmax^2\ln\frac{|\Fcal|| \Gcal|}{\delta'}}{3n}+\sqrt{\frac{32\Vmax^2\ln\frac{|\Fcal|| \Gcal|}{\delta'}}{n}\epsilon_{\Fcal,\Gcal}}+2\epsilon_{\Fcal,\Gcal}\right) \ln\tfrac{|\Fcal|| \Gcal|}{\delta'}}{n}} + \frac{4\Vmax^2 \ln\tfrac{|\Fcal|| \Gcal|}{\delta'}}{3n} \tag{*}\\
	\leq&~\frac{60\Vmax^2 \ln\tfrac{|\Fcal|| \Gcal|}{\delta'}}{3n}+\sqrt{\frac{32\Vmax^2 \ln\tfrac{|\Fcal|| \Gcal|}{\delta'}}{n}\epsilon_{\Fcal,\Gcal}}+ \sqrt{\frac{16\Vmax^2 \left(\frac{80\Vmax^2\ln\frac{|\Fcal|| \Gcal|}{\delta'}}{3n}+3\epsilon_{\Fcal,\Gcal}\right) \ln\tfrac{|\Fcal|| \Gcal|}{\delta'}}{n}} \\
	\leq&~\frac{122\Vmax^2 \ln\tfrac{|\Fcal|| \Gcal|}{\delta'}}{3n}+\sqrt{\frac{159\Vmax^2 \ln\tfrac{|\Fcal|| \Gcal|}{\delta'}}{n}\epsilon_{\Fcal,\Gcal}}~. \tag{$\sqrt{a+b}\leq\sqrt{a}+\sqrt{b}$ and $\ln\frac{|\Fcal|| \Gcal|}{\delta'}>0$}
	\end{align*}
	In Step (*), we use $\sqrt{\frac{32\Vmax^2\ln\frac{|\Fcal|| \Gcal|}{\delta'}}{n}\epsilon_{\Fcal,\Gcal}}\le\frac{8\Vmax^2\ln\frac{|\Fcal|| \Gcal|}{\delta'}}{n}+\epsilon_{\Fcal,\Gcal}$.
	
	Then, define $$ Y(g,f) := (g(s,a) - r - \gamma V_{f}(s'))^2 - ((\Tcal f)(s,a) - r - \gamma V_{f}(s'))^2.$$
	Plugging each $(s,a,r,s') \in D$ into $Y_1(g_f^\star,f)$, we get i.i.d.~variables $Y_2(g_f^\star,f), Y(g_f^\star,f), \ldots,$ $Y_n(g_f^\star,f)$.
	Applying same derivations in Lemma \ref{lem:fast_1}, we can get similar bound as Inequality (\ref{eq:bound_var}), $$0\le\VV[Y(g_f^\star,f)] \le 4\Vmax^2 \|g_f^\star-\Tcal f\|_{2,\muu}^2(=4\Vmax^2 \EE [Y(g_f^\star,f)])\leq 4\Vmax^2\epsilon_{\Fcal,\Gcal}.$$ We can apply (two-sided) Bernstein's inequality and union bound over all $f\in\Fcal$ and $g\in\Gcal$. With probability at least $1-2\delta'$, we have
	\begin{align*}
	\left|\frac{1}{n}\sum_{i=1}^n Y_i(g_f^\star,f)-\EE[Y(g_f^\star, f)]\right|
	\le &~ \sqrt{\frac{8 \Vmax^2 \EE[Y(g_f^\star, f)] \ln\tfrac{|\Fcal|| \Gcal|}{\delta'}}{n}} + \frac{4\Vmax^2 \ln\tfrac{|\Fcal|| \Gcal|}{\delta'}}{3n},
	\end{align*}
	which means that
	\begin{align*}
	\left|\frac{1}{n}\sum_{i=1}^n Y_i(g_f^\star,f)\right|\le \epsilon_{\Fcal,\Gcal}+ \sqrt{\frac{8 \Vmax^2 \ln\tfrac{|\Fcal|| \Gcal|}{\delta'}}{n}\epsilon_{\Fcal,\Gcal}} + \frac{4\Vmax^2 \ln\tfrac{|\Fcal|| \Gcal|}{\delta'}}{3n}.
	\end{align*}
	Union bounding the results of $\frac{1}{n}\sum_{i=1}^n X_i(\widehat\Tcal_\Gcal,f,g_f^\star)$ and $\frac{1}{n}\sum_{i=1}^n Y_i(g_f^\star,f)$, we have that with probability at least $1-4\delta'$,
	\begin{align*}
	&~\left|\Lcal_D(\widehat\Tcal_\Gcal f;f)-\Lcal_D(\Tcal f;f)\right|\\
	=&\left|\frac{1}{n}\sum_{i=1}^n X_i(\widehat\Tcal_\Gcal,f,g_f^\star)+\frac{1}{n}\sum_{i=1}^n Y_i(g_f^\star,f)\right|\\
	\leq&~\frac{43\Vmax^2 \ln\tfrac{|\Fcal|| \Gcal|}{\delta'}}{n}+\sqrt{\frac{239\Vmax^2 \ln\tfrac{|\Fcal|| \Gcal|}{\delta'}}{n}\epsilon_{\Fcal,\Gcal}}+\epsilon_{\Fcal,\Gcal}.
	\end{align*}
	Noticing $\delta'=\delta/4$, we complete the proof.
\end{proof}

\begin{proof}[Proof \textbf{of Theorem~\ref{thm:minimax_full}}]~~
Firstly, Lemma \ref{lem:iteration} gives us that $$\|\hat f - Q^\star\|_{2,\nu}\leq \frac{\sqrt{C}}{1-\gamma}~ \|\hat f - \Tcal \hat f\|_{2,\muu}.$$

It then suffices to upper bound $\|\hat f - \Tcal \hat f\|_{2,\muu}$.

The objective of the minimax form minimization can be written as $\inf_{f\in\Fcal}\sup_{g\in\Gcal}\left(\Lcal_D(f;f)-\Lcal_D(g;f)\right).$
We can find that, $\forall f\in\Fcal$,
$$\argmax_{g\in\Gcal}\left(\Lcal_D(f;f)-\Lcal_D(g;f)\right)=\argmax_{g\in\Gcal}-\Lcal_D(g;f)=\argmin_{g\in\Gcal}\Lcal_D(g;f)=\widehat\Tcal_{\Gcal}f.$$

Define $\delta':=\delta/2$, Lemma \ref{lem:fast_2} tells us that $\forall f\in\Fcal$, we have that with probability at least $1-\delta'$,
\begin{align*}
&~\left|\Lcal_D(\widehat\Tcal_\Gcal f;f)-\Lcal_D(\Tcal f;f)\right|\leq\epsilon_2,
\end{align*}
where
$$\epsilon_2=\frac{43\Vmax^2 \ln\tfrac{4|\Fcal|| \Gcal|}{\delta'}}{n}+\sqrt{\frac{239\Vmax^2 \ln\tfrac{4|\Fcal|| \Gcal|}{\delta'}}{n}\epsilon_{\Fcal,\Gcal}}+\epsilon_{\Fcal,\Gcal}.$$

From the approximate realizability of $\Fcal$, we know there exists $f^\star \in \Fcal,~s.t.~ \|f^\star-\Tcal f^\star\|_{2,\mu}^2\le\epsilon_\Fcal$. Then by the optimality of $\hat f$, we have that $\Lcal_D(\hat f;\hat f)-\Lcal_D(\widehat\Tcal_{\Gcal} \hat f;\hat f)\leq\Lcal_D(f^\star;f^\star)-\Lcal_D(\widehat\Tcal_{\Gcal} f^\star;f^\star)$. Therefore, with probability at least $1-\delta'$, we have that 
$$\Lcal_D(\hat f;\hat f)-\Lcal_D(\Tcal \hat f;\hat f)\leq\Lcal_D(f^\star;f^\star)-\Lcal_D(\Tcal f^\star;f^\star)+2\epsilon_2.$$

Define $$Z(f):= \left(f(s,a) - r - \gamma V_f(s')\right)^2 - \left((\Tcal f)(s,a) - r - \gamma V_f(s')\right)^2.$$
Plugging each $(s,a,r,s') \in D$ into $Z(f)$, we get i.i.d.~variables $Z_1(f), Z_2(f), \ldots,$ $Z_n(f)$.
Applying Ineq. (\ref{eq:bound_var}) in Lemma \ref{lem:fast_1}, we get
$$\VV[Z(f)] \le 4\Vmax^2 \|f-\Tcal f\|_{2,\muu}^2=4\Vmax^2 \EE [Z(f)].$$ We can apply (one-sided) Bernstein's inequality and union bound over all $f\in\Fcal$. With probability at least $1-\delta'$, we have that $\forall f\in\Fcal$,
\begin{align*}
\frac{1}{n}\sum_{i=1}^n Z_i(f)-\EE[Z(f)]
\le &~ \sqrt{\frac{8 \Vmax^2 \EE[Z(f)] \ln\tfrac{|\Fcal|}{\delta'}}{n}} + \frac{4\Vmax^2 \ln\tfrac{|\Fcal|}{\delta'}}{3n}.
\end{align*}
Substituting $f^\star$ into the inequality and noticing $\|f^\star-\Tcal f^\star\|_{2,\muu}^2\leq \epsilon_\Fcal$, we have
\begin{align*}
\frac{1}{n}\sum_{i=1}^n Z_i(f^\star)\le \epsilon_\Fcal+ \sqrt{\frac{8 \Vmax^2 \ln\tfrac{|\Fcal|}{\delta'}}{n}\epsilon_\Fcal} + \frac{4\Vmax^2 \ln\tfrac{|\Fcal|}{\delta'}}{3n}:=\epsilon_3.
\end{align*}
Since $\forall f\in\Fcal,$ $\frac{1}{n}\sum_{i=1}^n Z_i(f)=\Lcal_D(f;f)-\Lcal_D(\Tcal f;f),$ with probability at least $1-2\delta'$, we have
\begin{align*}
\frac{1}{n}\sum_{i=1}^n Z_i(\hat f)=&~\Lcal_D(\hat f;\hat f)-\Lcal_D(\Tcal \hat f;\hat f)\leq \Lcal_D(f^\star;f^\star)-\Lcal_D(\Tcal f^\star;f^\star)+2\epsilon_2
\leq2\epsilon_2+\epsilon_3.
\end{align*}

Finally, we consider $Z(\hat f)$. Our goal is to bound $\|\hat f- \Tcal \hat f\|_{2,\muu}=
\sqrt{\EE [Z(\hat f)]}$. Substituting $\hat f$ into the concentration bound of $Z(f)$, we have
\begin{align*}
	\EE[Z(\hat f)]-\frac{1}{n}\sum_{i=1}^n Z_i(\hat f)
	\le &~ \sqrt{\frac{8 \Vmax^2 \EE[Z(\hat f)] \ln\tfrac{|\Fcal| }{\delta'}}{n}} + \frac{4\Vmax^2 \ln\tfrac{|\Fcal|}{\delta'}}{3n}.
\end{align*}
Substituting the upper bound of $\frac{1}{n}\sum_{i=1}^n Z_i(\hat f)$ into the equality, we have that, with probability at least $1-2\delta'$,
\begin{align*}
	\|\hat f - \Tcal \hat f\|_{2,\muu}^2=\EE[Z(\hat f)]
	\le &~ \sqrt{\frac{8 \Vmax^2 \EE[Z(\hat f)] \ln\tfrac{|\Fcal| }{\delta'}}{n}} + \frac{4\Vmax^2 \ln\tfrac{|\Fcal|}{\delta'}}{3n}+2\epsilon_2+\epsilon_3.
\end{align*}
Solving this quadratic formula and noticing that $\delta=2\delta'$, we have that with probability at least $1-\delta$,
\begin{align*}
	\|\hat f - \Tcal \hat f\|_{2,\muu}^2
	\le &~ \frac{16 \Vmax^2  \ln\tfrac{2|\Fcal|}{\delta}}{3n} +2\epsilon_2+\epsilon_3+\sqrt{\frac{8\Vmax^2\ln\frac{2|\Fcal|}{\delta}}{n}\left(\frac{10\Vmax^2\ln\frac{2|\Fcal|}{\delta}}{3n}+2\epsilon_2+\epsilon_3\right)}~,
\end{align*}
where
$$\epsilon_2=\frac{43\Vmax^2 \ln\tfrac{8|\Fcal| |\Gcal|}{\delta}}{n}+\sqrt{\frac{239\Vmax^2 \ln\tfrac{8|\Fcal| |\Gcal|}{\delta}}{n}\epsilon_{\Fcal,\Gcal}}+\epsilon_{\Fcal,\Gcal}~,$$
and
$$\epsilon_3=\epsilon_\Fcal+ \sqrt{\frac{8 \Vmax^2 \ln\tfrac{2|\Fcal|}{\delta}}{n}\epsilon_\Fcal} + \frac{4\Vmax^2 \ln\tfrac{2|\Fcal|}{\delta}}{3n}.$$
In this way, we obtain the bound for $\|\hat f -\Tcal \hat f \|_{2,\muu}$, and further the bound for $\|f-Q^\star\|_{2,\muu}$ (by Lemma \ref{lem:iteration}). Finally, applying the bound for $\|f-Q^\star\|_{2,\muu}$  to Lemma \ref{lem:decompose}, we have that with probability at least $1-\delta$,

\begin{align*}
v^\star - v^{\pi_{\hat f}} \le&~ \frac{2\sqrt{C}}{(1-\gamma)^2}\sqrt{\frac{16 \Vmax^2  \ln\tfrac{2|\Fcal| }{\delta}}{3n} +2\epsilon_2+\epsilon_3+\sqrt{\frac{8\Vmax^2\ln\frac{2|\Fcal|}{\delta}}{n}\left(\frac{10\Vmax^2\ln\frac{2|\Fcal|}{\delta}}{3n}+2\epsilon_2+\epsilon_3\right)}}\\
\le&~\frac{2\sqrt{C}}{(1-\gamma)^2}\sqrt{\frac{16 \Vmax^2  \ln\tfrac{2|\Fcal| }{\delta}}{3n} +2\epsilon_2+\epsilon_3+\frac{2\Vmax^2\ln\frac{2|\Fcal|}{\delta}}{n}+\left(\frac{10\Vmax^2\ln\frac{2|\Fcal|}{\delta}}{3n}+2\epsilon_2+\epsilon_3\right)}\\
=&~\frac{2\sqrt{C}}{(1-\gamma)^2}\sqrt{\frac{32 \Vmax^2  \ln\tfrac{2|\Fcal| }{\delta}}{3n} +4\epsilon_2+2\epsilon_3}\\
\le&~\frac{2\sqrt{C}}{(1-\gamma)^2}\left(\sqrt{\frac{32 \Vmax^2  \ln\tfrac{2|\Fcal|}{\delta}}{3n}} +\sqrt{4\epsilon_2}+\sqrt{2\epsilon_3}\right)\\
\le&~\frac{2\sqrt{C}}{(1-\gamma)^2}\left(\sqrt{\frac{32 \Vmax^2  \ln\tfrac{2|\Fcal|}{\delta}}{3n}} +\sqrt{\frac{172\Vmax^2 \ln\tfrac{8|\Fcal||\Gcal|}{\delta'}}{n}}+\sqrt[4]{\frac{3824\Vmax^2 \ln\tfrac{8|\Fcal||\Gcal|}{\delta}}{n}\epsilon_{\Fcal,\Gcal}}+2\sqrt{\epsilon_{\Fcal,\Gcal}}\right)\\
&~+\frac{2\sqrt{C}}{(1-\gamma)^2}\left(\sqrt{2\epsilon_\Fcal}+ \sqrt[4]{\frac{32 \Vmax^2 \ln\tfrac{2|\Fcal| }{\delta}}{n}\epsilon_\Fcal} + \sqrt{\frac{8\Vmax^2 \ln\tfrac{2|\Fcal|}{\delta}}{3n}}\right)\\
\le&~\frac{2\sqrt{C}}{(1-\gamma)^2}\left(\sqrt{2\epsilon_\Fcal}+2\sqrt{\epsilon_{\Fcal,\Gcal}}\right) +
\frac{2\sqrt{C}}{(1-\gamma)^2}\left(\sqrt{\frac{24 \Vmax^2  \ln\tfrac{2|\Fcal| }{\delta}}{n}} +\sqrt{\frac{172\Vmax^2 \ln\tfrac{8|\Fcal||\Gcal|}{\delta}}{n}}\right)\\
&~+\frac{2\sqrt{C}}{(1-\gamma)^2}\left(\sqrt[4]{\frac{32 \Vmax^2 \ln\tfrac{2|\Fcal| }{\delta}}{n}\epsilon_\Fcal}+\sqrt[4]{\frac{3824\Vmax^2 \ln\tfrac{8|\Fcal||\Gcal|}{\delta}}{n}\epsilon_{\Fcal,\Gcal}}\right).
\end{align*}
The proof is completed by combining the terms and absorbing the constants using Big-Oh notation.
\end{proof}
\section{Proofs Related to State Abstractions}
\label{app:abstraction}

\subsection{Equivalence Between MBRL with State Abstractions and FQI with Piece-wise Constant Function Class}

\begin{proposition}\label{prop:mbrl_fqi}
In model-based RL with abstraction $\phi: \Scal \to \Scal_\phi$, we estimate an abstract model $\widehat M_\phi = (\Scal_\phi, \Acal, \widehat P_\phi, \widehat R_\phi, \gamma)$ and then perform planning. When value iteration is used as the planning algorithm, the procedure is exactly equivalent to FQI with $\Fcal^\phi$ as the function class.
\end{proposition}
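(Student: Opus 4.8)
The plan is to prove the equivalence iterate-by-iterate, by induction on the iteration index $k$. First I would set up the identification between $\Fcal^\phi$ and functions on the abstract space: since every $f\in\Fcal^\phi$ is constant on each cell $\phi^{-1}(x)$, it is uniquely determined by a function $\bar f:\Scal_\phi\times\Acal\to[0,\Vmax]$ through $f(s,a)=\bar f(\phi(s),a)$, and conversely every such $\bar f$ lifts to a member of $\Fcal^\phi$. Under this identification the greedy value depends on the next state only through its abstraction, $V_{f}(s')=\max_{a'}\bar f(\phi(s'),a')$. I would also recall the certainty-equivalent model built from the abstracted data $\{(\phi(s),a,r,\phi(s'))\}$: writing $D_{x,a}:=\{(s,a,r,s')\in D:\phi(s)=x\}$, the estimate $\emp{R}_\phi(x,a)$ is the empirical mean of $r$ over $D_{x,a}$, and $\emp{P}_\phi(x'\mid x,a)$ is the empirical frequency of $\phi(s')=x'$ within $D_{x,a}$.

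The crux is a closed form for the FQI update $f_k=\argmin_{f\in\Fcal^\phi}\Lcal_D(f;f_{k-1})$. Because a piece-wise constant $f$ assigns the single scalar $\bar f(x,a)$ to all tuples in $D_{x,a}$, the squared-loss objective $\Lcal_D(\cdot;f_{k-1})$ decouples into independent one-dimensional least-squares problems, one per abstract pair $(x,a)$ with data, whose minimizer is the empirical mean of the regression targets:
\[
\bar f_k(x,a)=\frac{1}{|D_{x,a}|}\sum_{(s,a,r,s')\in D_{x,a}}\big(r+\gamma V_{f_{k-1}}(s')\big).
\]
This decoupling is the single most important observation, as it turns a global regression over $\Fcal^\phi$ into per-cell averaging. (Abstract pairs receiving no data leave $\bar f_k$ unconstrained; since $\emp{M}_\phi$ is likewise undefined there, I would either assume full support or fix a shared convention so that the two procedures agree on such pairs.)

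It then remains to recognize the right-hand side as a single value-iteration step in $\emp{M}_\phi$. I would split the cell average into its reward part, which is exactly $\emp{R}_\phi(x,a)$, and its bootstrapped part; for the latter I substitute $V_{f_{k-1}}(s')=\max_{a'}\bar f_{k-1}(\phi(s'),a')$ and group the tuples of $D_{x,a}$ according to the value of $\phi(s')$, so that the empirical weight on each target cell $x'$ is precisely $\emp{P}_\phi(x'\mid x,a)$. This yields
\[
\bar f_k(x,a)=\emp{R}_\phi(x,a)+\gamma\sum_{x'\in\Scal_\phi}\emp{P}_\phi(x'\mid x,a)\,\max_{a'}\bar f_{k-1}(x',a'),
\]
which is exactly the Bellman-optimality update of value iteration in $\emp{M}_\phi$. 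Choosing the FQI initializer $f_0$ to lift $\emp{Q}_0$, the induction closes and $f_k\equiv\emp{Q}_k$ under the identification for every $k$, so the two procedures produce the same iterates, the same limit, and the same greedy policy. I expect the main difficulty to be bookkeeping rather than conceptual: rigorously justifying the cell-wise decoupling (that the minimizer over $\Fcal^\phi$ is attained coordinate-by-coordinate by the empirical mean, which remains in $[0,\Vmax]$, and that empty cells cause no mismatch) and verifying that the regrouping identity reproduces exactly the empirical transition kernel. Since each step uses only elementary least squares and counting, the argument should be routine once the correspondence and the decoupling are in place.
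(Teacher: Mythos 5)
Your proposal is correct and follows essentially the same route as the paper's proof: both arguments observe that the squared-loss regression over $\Fcal^\phi$ decouples into per-cell averaging of the targets $r+\gamma V_{f_{k-1}}(s')$, identify that average (after regrouping by $\phi(s')$) with the Bellman update in the certainty-equivalent abstract model $\widehat M_\phi$, and close by induction on the iteration index with matched initializations. Your treatment is in fact slightly more careful than the paper's on two minor points it leaves implicit --- that the per-cell empirical mean stays in $[0,\Vmax]$ so the constrained and unconstrained minimizers coincide, and the convention needed for abstract pairs receiving no data.
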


To prove the result, we first define a few notations.

\begin{definition}[\emph{Lifting}] \label{def:lifting}
	Given the MDP $M=(\Scal,\Acal,P,R,\gamma)$ and the state abstraction $\phi$ that operates on $\Scal$,
	for any function $f$ that operates on $\Scal_\phi\times\Acal$, we use $[f]_M$ to denote its \emph{lifted} version, which is a function over $\Scal\times\Acal$ and defined as $[f]_M(s,a) := f(\phi(s),a)$.

	Similarly, we can also lift a state value function. For any function $f$ that operates on $\Scal_\phi$, we also use $[f]_M$ to denote its \emph{lifted} version, which is a function over $\Scal$ and defined as $[f]_M(s) := f(\phi(s))$. Lifting a real-valued function $f$ over states can also be expressed in vector form: $[f]_M = \Phi^\top f$, where $\Phi$ is an $|\Scal_\phi|\times |\Scal|$ matrix with entries $\phi(s, x) = \mathbb{I}[\phi(s)=x]$.
\end{definition}

\begin{definition} \label{def:piece-wise}
	For piece-wise constant state-action value function $f$ and $x\in\Scal_\phi$, define $[f]_\phi(x, a) = f(s, a)$ for any $s\in\phi^{-1}(x)$; note that the notation $[\cdot]_\phi$ can only be applied to functions that are piece-wise constant under $\phi$. 
\end{definition}

\begin{proof}[Proof of Proposition~\ref{prop:mbrl_fqi}]
Let $D=\{D_{s,a}\}_{(s,a)\in\Scal\times\Acal}$ where $D_{s,a}$ is the collection of transition tuples that start with $(s,a)$. We also let $\mathbf{e}_{\phi(s')}$ be the unit vector whose $\phi(s')$-th entry is 1 and all other entries are 0. Then, for any abstract state-action pair $(x,a) \in\Scal_\phi \times\Acal$, the certainty-equivalence estimate of model parameters are:
$$\widehat R_\phi(x,a)=\frac{1}{|D_{x,a}|}\sum_{(r,s')\in D_{x,a}}r \quad \text{and}\quad\widehat P_\phi(x,a)=\frac{1}{|D_{x,a}|}\sum_{(r,s')\in D_{x,a}} \mathbf{e}_{\phi(s')}.$$
	
	If we use value iteration as the planning algorithm, we will first initialize $g_0\in [0,\Rmax]^{|\Scal_\phi\times\Acal|}$. Then in each iteration, we let $g_t=\Tcal_{\widehat{M}_\phi}g_{t-1}$. Expanding the operator $\Tcal_{\widehat{M}_\phi}$, for $x\in\Scal_\phi$ and $a\in\Acal$, we have
	\begin{align*}
		g_t(x,a)&=\widehat R_\phi(x,a)+\gamma\langle\widehat P_\phi(x,a),V_{g_{t-1}}\rangle\\
		&=\frac{1}{|D_{x,a}|}\sum_{(r,s')\in D_{x,a}}\left(r+\gamma \langle \mathbf{e}_{\phi(s')}, V_{g_{t-1}}\rangle\right)\\
		&=\frac{1}{|D_{x,a}|}\sum_{(r,s')\in D_{x,a}}\left(r+\gamma  V_{g_{t-1}}(\phi(s'))\right)
	\end{align*}
	
	For the FQI with $\Fcal ^\phi$, we first initialize $f_0$ as any function in $\Fcal^\phi=[0,\Rmax]^{|\Scal_\phi\times\Acal|}$. Then in each iteration, we let $f_t=\widehat \Tcal_{\Fcal^\phi} f_{t-1}$. From the definition of $\widehat \Tcal_{\Fcal^\phi}$, for $s\in\Scal$ and $a\in\Acal$, we have $$f_t(s,a)=\argmin_{f\in\Fcal^\phi} \frac{1}{|D_{\phi(s),a}|}\sum_{(r,s')\in D_{\phi(s),a}}\left(f-r-\gamma V_{f_{t-1}}(s')\right)^2.$$ This is a regression problem and the solution is
	\begin{align*}
	f_t(s,a)&=\frac{1}{|D_{\phi(s),a}|}\sum_{(r,s')\in D_{\phi(s),a}}\left(r+\gamma\langle\mathbf{e}_{\phi(s')},[V_{f_{t-1}}]_\phi\rangle\right)\\
	&=\frac{1}{|D_{\phi(s),a}|}\sum_{(r,s')\in D_{\phi(s),a}}\left(r+\gamma V_{f_{t-1}}(s')\right)
	\end{align*}

	Therefore, if $f_0=[g_0]_M$, the two algorithms give us that $f_t=[g_t]_M$ for any $t$. This shows that model-based RL with abstraction $\phi$ is exactly equivalent to FQI with $\Fcal^\phi$. 
\end{proof}
	
\subsection{Proof of Equivalence Between Bisimulation and Completeness for Piece-wise Constant Function Class}
We first define approximate bisimulation, which is a generalization of Definition~\ref{def:bisimulation}.
\begin{definition}[\emph{Approximate model-irrelevant}] \label{def:approx_bisimulation}
Given the MDP $M=(\Scal,\Acal,P,R,\gamma)$ and the state abstraction $\phi: \Scal\to\Scal_\phi$, we call $\phi$ an $(\epsilon_R, \epsilon_P)$-approximate bisimulation if
\begin{align} 
& \max_{s_1, s_2: \phi(s_1) = \phi(s_2), a\in\Acal} |R(s_1, a) - R(s_2, a)| = \epsilon_R, \label{eq:eps_R} \\
& \max_{s_1, s_2: \phi(s_1) = \phi(s_2), a\in\Acal} \left\|\Phi P(s_1, a) - \Phi P(s_2, a)\right\|_1 = \epsilon_P, \label{eq:eps_P} 
\end{align}
where $\Phi$ is as defined in Definition \ref{def:lifting}.
\end{definition}

\begin{proposition}[Completeness=Bisimulation] \label{prop:bisimulation_full}
	Suppose that $\phi$ is an $(\epsilon_R,\epsilon_P)$-approximate $Q^\star$-irrelevant abstraction, then we have
	$$\max\left\{\frac{\epsilon_R}{2},\frac{\gamma\epsilon_P \Vmax}{4}\right\}\leq\sup_{f\in\Fcal^\phi}\inf_{f'\in\Fcal^\phi}\|f'-\Tcal f\|_\infty\leq\frac{\epsilon_R}{2}+\frac{\gamma\epsilon_P \Vmax}{4}.$$
\end{proposition}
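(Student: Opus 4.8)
The plan is to first turn the nested $\sup$--$\inf$ into an explicit ``block radius'' quantity. For any $f\in\Fcal^\phi$ the greedy value $V_f$ is again piece-wise constant under $\phi$ (a max of piece-wise constant functions is piece-wise constant), so I write $V_f(s')=v_f(\phi(s'))$ with $v_f:\Scal_\phi\to[0,\Vmax]$. Because an approximator $f'\in\Fcal^\phi$ must assign a single constant to each pair $(x,a)$ with $x\in\Scal_\phi$, the best $\ell_\infty$ fit on a block is the midpoint of $\{(\Tcal f)(s,a):s\in\phi^{-1}(x)\}$, whose error is half the block range. This gives
\[
\inf_{f'\in\Fcal^\phi}\|f'-\Tcal f\|_\infty=\tfrac{1}{2}\max_{x,a}\ \max_{s_1,s_2\in\phi^{-1}(x)}\big|(\Tcal f)(s_1,a)-(\Tcal f)(s_2,a)\big|,
\]
and expanding $\Tcal$ via Eq.\eqref{eq:bellman} for $s_1,s_2$ in a common block yields the decomposition
\[
(\Tcal f)(s_1,a)-(\Tcal f)(s_2,a)=\big(R(s_1,a)-R(s_2,a)\big)+\gamma\big\langle\Phi P(s_1,a)-\Phi P(s_2,a),\,v_f\big\rangle.
\]

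For the upper bound I bound the two terms of this decomposition separately and take the worst block. The reward term is at most $\epsilon_R$ by Eq.\eqref{eq:eps_R}. Writing $\delta:=\Phi P(s_1,a)-\Phi P(s_2,a)$, both arguments are probability vectors so $\sum_x\delta_x=0$; hence $\langle\delta,v_f\rangle=\langle\delta,v_f-c\mathbf{1}\rangle$ for every constant $c$, and choosing $c$ to be the midpoint of $v_f$'s range gives $|\langle\delta,v_f\rangle|\le\|\delta\|_1\Vmax/2\le\epsilon_P\Vmax/2$ by Eq.\eqref{eq:eps_P}. Dividing the combined bound by $2$ produces $\inf_{f'}\|f'-\Tcal f\|_\infty\le\epsilon_R/2+\gamma\epsilon_P\Vmax/4$ for every $f$, hence also for the supremum.

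For the lower bound I produce witnesses. Taking $f\equiv0$ makes $v_f\equiv0$ and $\Tcal f=R$, so the block range equals the reward gap and is $\epsilon_R$ at the pair attaining Eq.\eqref{eq:eps_R}; this yields $\sup_f\inf_{f'}\ge\epsilon_R/2$. For the transition term I fix the pair $(s_1,s_2,a)$ attaining Eq.\eqref{eq:eps_P} and set $v_f(x)=\Vmax$ where $\delta_x>0$ and $v_f(x)=0$ otherwise, realized by the action-independent $f(s,a)=v_f(\phi(s))\in\Fcal^\phi$; this is exactly the maximizer of the integral-probability-metric $\langle\delta,\cdot\rangle$ and gives $\gamma\langle\delta,v_f\rangle=\gamma\Vmax\sum_{x:\delta_x>0}\delta_x=\gamma\epsilon_P\Vmax/2$.

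The hard part will be that the reward gap $R(s_1,a)-R(s_2,a)$ may carry the opposite sign and cancel this transition contribution, so the raw block difference need not be large. I would handle this by also considering $\tilde f:=\Vmax-f$, for which $v_{\tilde f}=\Vmax\mathbf{1}-v_f$ and therefore $\gamma\langle\delta,v_{\tilde f}\rangle=-\gamma\langle\delta,v_f\rangle$ (again using $\sum_x\delta_x=0$), while the reward gap is unchanged. The two block differences become $(\Delta R)\pm\gamma\langle\delta,v_f\rangle$, and since $\max(|a+b|,|a-b|)=|a|+|b|\ge|b|$, at least one of $f,\tilde f$ has block range at least $\gamma\epsilon_P\Vmax/2$; halving gives $\gamma\epsilon_P\Vmax/4$. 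Combining the two lower bounds into a max and pairing with the upper bound completes the proof. The remaining checks---that a max of piece-wise constant functions is piece-wise constant, and that $\Tcal f$ and the midpoint values stay in $[0,\Vmax]$ so the approximators lie in $\Fcal^\phi$---are routine.
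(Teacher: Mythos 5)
Your proposal is correct and follows essentially the same route as the paper's proof: the midpoint (block-radius) construction with the centering-plus-H\"older trick for the upper bound, and the witnesses $f\equiv 0$ and the total-variation indicator function $v_f(x)=\Vmax\cdot\Indi[\delta_x>0]$ for the two halves of the lower bound. The only (cosmetic) difference is how reward--transition cancellation is avoided: the paper orients the indicator (``$>$'' vs.\ ``$<$'') to agree with the sign of the reward gap, while you pass to the complementary witness $\Vmax-f$ and invoke $\max(|a+b|,|a-b|)=|a|+|b|$ --- the same sign-alignment idea in different clothing.
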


Proposition~\ref{prop:bisimulation} is a direct corollary of the above result when $\epsilon_R, \epsilon_P, \sup_{f\in\Fcal^\phi}\inf_{f'\in \Fcal^\phi}\|f' - \Tcal f\|_\infty$ are all $0$'s. In the approximate case, however, we use $\epsilon_R$ and $\epsilon_P$ to provide both upper and lower bounds of the violation of completeness, but do not obtain an equality relationship. This is purely an artifact that bisimulation considers rewards and transitions separately, whereas completeness considers both of them together in terms of the Bellman update operator $\Tcal$, and cancellation between reward/transition errors may occur.



\begin{proof}[Proof of Proposition~\ref{prop:bisimulation_full}]

We first prove the upper bound. For any fixed $f\in\Fcal^\phi$, we show that there exists $f_1' \in \Fcal^\phi$ such that $\|f_1'-\Tcal f\|_\infty \le \epsilon_R/2+\gamma\epsilon_P\Vmax /4$. Therefore $\inf_{f'\in\Fcal^\phi}\|f'-\Tcal f\|_\infty \le \|f_1'-\Tcal f\|_\infty$ and hence is subject to the same upper bound.
 	
Since $f_1'$ is required to be piece-wise constant, it suffices to specify $[f_1']_\phi(x,a)$ for each $x\in\Scal_\phi,a\in\Acal$. Fixing any $x, a$, define 
\begin{equation} \label{eq:s_min_s_max}
\smax :=\argmax_{s\in\phi^{-1}(x), \, a\in \Acal} (\Tcal f)(s, a), \qquad \smin :=\argmin_{s\in\phi^{-1}(x), \, a\in \Acal} (\Tcal f)(s, a),
\end{equation}
and 
\begin{align} \label{eq:f_1'}
[f_1']_\phi(x,a) := \tfrac{1}{2}\left((\Tcal f)(\smax, a) + (\Tcal f)(\smin, a)\right).
\end{align}
Note that $f_1'\in[0,\Vmax]$ so $f_1' \in \Fcal^\phi$. 
It remains to upper bound $\|f_1' -\Tcal f\|_\infty$. 

For any $s\in\Scal, a\in\Acal$, let $x=\phi(s)$, and $\smax$ and $\smin$ as defined in Eq.\eqref{eq:s_min_s_max} for $(x,a)$,
\begin{align*}
&~ f_1'(s, a) - (\Tcal f)(s, a) \\
\le &~ \tfrac{1}{2}\left((\Tcal f)(\smax, a) + (\Tcal f)(\smin, a)\right) - (\Tcal f)(\smin, a)  \tag{Eq.\eqref{eq:s_min_s_max} and \eqref{eq:f_1'}}\\
= &~ \tfrac{1}{2}\left((\Tcal f)(\smax, a) - (\Tcal f)(\smin, a)\right) \\
= &~ \tfrac{1}{2} \left(R(\smax,a) + \gamma\langle P(\smax,a),V_f\rangle - R(\smin,a) - \gamma\langle P(\smin,a),V_f\rangle \right)\\
\le &~ \tfrac{1}{2} \left|R(\smax,a)-R(\smin,a)\right|+ \tfrac{\gamma}{2} \left|\langle P(\smax,a)-P(\smin,a),V_f\rangle\right|\\
\le &~ \tfrac{1}{2} \epsilon_R +\tfrac{\gamma}{2} \left|\langle \Phi P(\smax,a)- \Phi P(\smin,a),[V_f]_\phi\rangle\right| \tag{$f$ is piece-wise constant and so is $V_f$} \\
=&~ \tfrac{1}{2} \epsilon_R +\tfrac{\gamma}{2}\left|\left\langle \Phi P(\smax,a) - \Phi P(\smin,a),[V_f]_\phi - \tfrac{\Vmax}{2} \cdot \mathbf{1} \right\rangle\right| \tag{*}\\
\le &~ \tfrac{1}{2} \epsilon_R +\tfrac{\gamma}{2} \|\Phi P(\smax,a) - \Phi P(\smin,a)\|_1 \cdot \Vmax/2 \tag{H\"older's inequality}\\
\le &~ \frac{\epsilon_R}{2}  +\frac{\gamma \epsilon_P \Vmax}{4}. 
\end{align*}
Here Step (*) holds because $\langle \Phi P(\smax,a)- \Phi P(\smin,a),\mathbf{1} \rangle = 0$, as $\Phi P(\cdot, \cdot)$ is always a valid distribution. The other direction follows exactly the same argument due to symmetry and is omitted, and together we conclude that $\|f_1' -\Tcal f\|_\infty \leq\frac{\epsilon_R}{2}+\frac{\gamma\epsilon_P \Vmax}{4}$. 

We then turn to the lower bound of the theorem statement. It suffices to show $\exists f_R, f_P \in \Fcal^\phi$, such that $\inf_{f'\in\Fcal^\phi}\|f' - \Tcal f_R\|_\infty \ge \epsilon_R/2$ and $\inf_{f'\in\Fcal^\phi}\|f' - \Tcal f_P\|_\infty \ge\gamma  \epsilon_P \Vmax/4$, respectively. 

\paragraph{Case of $f_R$} 
Let $f_R := \mathbf{0} \in \Fcal^\phi$, so $
\inf_{f'\in\Fcal^\phi}\|f' - \Tcal f_R\|_\infty = \inf_{f'\in\Fcal^\phi}\|f' - R\|_\infty$, where $R \in [0, \Rmax]^{|\Scal\times\Acal|}$ is the reward function. It is obvious from the definition of $\epsilon_R$ in Eq.\eqref{eq:eps_R} that $\inf_{f'\in\Fcal^\phi}\|f' - R\|_\infty = \epsilon_R/2$, which proves the result.
		
\paragraph{Case of $f_P$} 
Let $s_1^P,s_2^P\in\Scal, a^P\in\Acal$ be the arguments that achieve the maximum in Eq.\eqref{eq:eps_P}, i.e., $\phi(s_1^P) = \phi(s_2^P)$ and $\|\Phi P(s_1^P, a^P) -  \Phi P(s_2^P, a^P)\|_1 = \epsilon_P$. 
We construct $f_P \in\Fcal^\phi$ as follows: Assume w.l.o.g.~that $R(s_1^P, a^P) \ge R(s_2^P, a^P)$. For any $x\in\Scal_\phi$ and $a\in\Acal$, define
$$
[f_P]_\phi(x,a)= \Indi[P(x|s_1^P,a^P)>P(x|s_2^P,a^P)] \cdot \Vmax.
$$

Note that the RHS has no dependence on $a$, so $V_f(s) = [f_P]_\phi(\phi(s), a)$ for any $a\in\Acal$. It is easy to verify that $f_P \in \Fcal^\phi$ as its value is either $0$ or $\Vmax$. Essentially $f_P$ is designed such that $[V_{f_P}]_\phi$ witnesses the $\ell_1$ error (or total variation) between $\Phi P(s_1^P, a^P)$ and $\Phi P(s_2^P, a^P)$. The inequality sign inside the indicator could be either ``$>$'' or ``$<$'', and we choose it in consistence with the relationship between $R(s_1^P, a^P)$ and  $R(s_2^P, a^P)$, which guarantees that the reward error and the transition error wouldn't cancel out with each other. Now consider the difference between two entries in $(\Tcal f_P)$: 
\begin{align*}
&~(\Tcal f_P)(s_1^P,a^P)-(\Tcal f_P)(s_2^P,a^P)\\
= &~ R(s_1^P,a^P) + \gamma\langle P(s_1^P,a^P),V_{f_P}\rangle -R(s_2^P,a^P) - \gamma\langle P(s_2^P,a^P),V_{f_P}\rangle \\
= &~ (R(s_1^P,a^P)-R(s_2^P,a^P)) + \gamma\langle P(s_1^P,a^P)-P(s_2^P,a^P),V_{f_P} \rangle  \tag{Both terms are positive due to construction}\\
= &~ \left|R(s_1^P,a^P)-R(s_2^P,a^P)\right|+\gamma\left|\langle \Phi P(s_1^P,a^P)-\Phi P(s_2^P,a^P),[V_{f_P}]_\phi\rangle\right| \\
\ge &~ 0 + \gamma\|\Phi P(s_1^P,a^P)-\Phi P(s_2^P,a^P)\|_{TV}\Vmax \tag{*} \\
= &~ \gamma \epsilon_P\Vmax/2.
\end{align*}

Step (*) follows because $[V_{f_P}]_\phi$ takes $\Vmax$ on the subset of $\Scal_\phi$ where $\Phi P(s_1^P, a^P)$ has a greater probability than $\Phi P(s_2^P, a^P)$, and $0$ otherwise, so the dot product is equal to the total variation up to the scaling factor of $\Vmax$. 

Now $\sup_{f\in\Fcal^\phi}\inf_{f'\in\Fcal^\phi}\|f'-\Tcal f\|_\infty\geq \inf_{f'\in\Fcal^\phi}\|f'-\Tcal f_P\|_\infty$, which is the approximation error of $\Tcal f_P$ in $\Fcal^\phi$. Since $\Tcal f_P$ takes values that are  $\gamma \epsilon_P\Vmax/2$ apart for aggregated states $s_1^P$ and $s_2^P$ on action $a^P$, the approximation error is at least $\gamma \epsilon_P\Vmax/4$. This completes the proof.
\end{proof}

\section{Proof of Proposition~\ref{prop:value_profile}}
\label{app:value_profile}
Firstly we introduce a standard result that bounds the loss of acting greedily with respect to an approximate Q-value function.
\begin{lemma}\label{lem:qtov}\cite{singh1994upper} For any  $f:\Scal\times\Acal\rightarrow \RR$, let $\pi_f$ be its greedy policy, then
	$$\left\|V^{\star}-V^{\pi_{f}}\right\|_{\infty}\leq \frac{2\left\|f-Q^{\star}\right\|_{\infty}}{1-\gamma}.$$
\end{lemma}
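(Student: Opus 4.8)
The plan is to derive a self-bounding recursion for the pointwise suboptimality $V^\star - V^{\pi_f}$ and then solve it using the discount factor. Write $\pi := \pi_f$ and $\epsilon := \|f - Q^\star\|_\infty$. I first note that $\pi^\star$ dominates every policy pointwise (it maximizes $V^\pi(s)$ at all $s$ simultaneously, as recalled in the preliminaries), so $V^\star(s) \ge V^\pi(s)$ everywhere and hence $\|V^\star - V^\pi\|_\infty = \sup_s\,(V^\star(s) - V^\pi(s))$. It therefore suffices to upper bound $V^\star(s) - V^\pi(s)$ for an arbitrary fixed $s$.

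For such an $s$, I would insert the intermediate quantity $Q^\star(s,\pi(s))$ and write
$$V^\star(s) - V^\pi(s) = \big(Q^\star(s,\pi^\star(s)) - Q^\star(s,\pi(s))\big) + \big(Q^\star(s,\pi(s)) - Q^\pi(s,\pi(s))\big),$$
using $V^\star(s) = Q^\star(s,\pi^\star(s))$ and $V^\pi(s) = Q^\pi(s,\pi(s))$. The first bracket is controlled by the greedy property of $\pi$: since $\pi$ maximizes $f(s,\cdot)$ we have $f(s,\pi(s)) \ge f(s,\pi^\star(s))$, and chaining this with two applications of $|f - Q^\star| \le \epsilon$ (once at the action $\pi^\star(s)$, once at $\pi(s)$) yields $Q^\star(s,\pi^\star(s)) \le Q^\star(s,\pi(s)) + 2\epsilon$, so the first bracket is at most $2\epsilon$. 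The second bracket equals $\gamma\,\EE_{s'\sim P(s,\pi(s))}[V^\star(s') - V^\pi(s')]$, because $Q^\star(s,\pi(s))$ and $Q^\pi(s,\pi(s))$ share the common reward term $R(s,\pi(s))$, which cancels, leaving only the discounted difference of next-state values.

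Combining the two bounds gives $V^\star(s) - V^\pi(s) \le 2\epsilon + \gamma\,\EE_{s'}[V^\star(s') - V^\pi(s')] \le 2\epsilon + \gamma\,\|V^\star - V^\pi\|_\infty$, uniformly in $s$. Taking the supremum over $s$ on the left-hand side yields $\|V^\star - V^\pi\|_\infty \le 2\epsilon + \gamma\,\|V^\star - V^\pi\|_\infty$, and rearranging produces the claimed bound $2\epsilon/(1-\gamma)$. There is no genuine obstacle here, as this is a classical contraction-style argument; the only step requiring care is the factor of $2$ in the first bracket, which arises from applying the sup-norm bound on both sides of the greedy inequality, and dropping either application would spoil the constant.
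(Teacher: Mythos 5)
Your proof is correct. Note that the paper itself does not prove this lemma at all --- it is imported as a known result with a citation to Singh and Yee (1994) --- so there is no in-paper argument to compare against; your derivation (the decomposition through $Q^\star(s,\pi_f(s))$, the factor-of-$2$ bound via the greedy property applied on both sides, and the contraction-style recursion solved by $1/(1-\gamma)$) is exactly the standard argument behind the cited result, and every step checks out.
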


Now we are ready to prove the proposition.
\begin{proof}[Proof of Proposition \ref{prop:value_profile}]
We represent each state $s$ by its value profile $\{f(s,a): f\in\Fcal, a\in\Acal\}$, estimate a tabular model, and output its optimal policy. Since a set of states may share exactly the same value profile, we are essentially using a state abstraction, denoted as $\phi$. For any two states $s_1,s_2\in\Scal$ that share the same value profile, the realizability assumption implies that $Q^\star(s_1,a)=Q^\star(s_2,a), \forall a\in\Acal$, so $\phi$ is \emph{$Q^\star$-irrelevant} \citep{li2006towards}. In the following, we will show that certainty-equivalence with \emph{$Q^\star$-irrelevant} abstraction is consistent and enjoys polynomial sample complexity if each state-action pair $(s,a)$ receives $\Omega(|D|/|\Scal\times\Acal|)$ data.
	
	Let $D_{s,a}$ be the collection of transition tuples that start with $(s,a)$ and $D_{x,a} := \sum_{s \in \phi^{-1}(x)} |D_{s,a}|$. We first consider an abstract MDP $M_\phi = (\Scal_\phi, \Acal, P_\phi, R_\phi, \gamma)$, where $\Scal_\phi$ is the abstract state space (isomorphic to the set of distinct value profiles),
	$$ 
	R_\phi(x,a) = \frac{\sum_{ s \in \phi^{-1}(x)} |D_{ s,a}| R(s,a) }{|D_{\phi(s), a}|}, \quad
	P_\phi(x'|x,a) = \frac{\sum_{ s \in \phi^{-1}(x)} |D_{ s,a}| P(x'|s,a) }{|D_{\phi(s), a}|},\quad \forall x,x'\in\Scal_\phi, a\in\Acal.
	$$
	
	Recall the notations in Definitions~\ref{def:lifting}, \ref{def:piece-wise}, and \ref{def:approx_bisimulation}. We claim that $[Q_{M_\phi}^\star]_M = Q_M^\star$, where $[Q_{M_\phi}^\star]_M$ is the lifted version of $Q_{M_\phi}^\star$:  Since $Q_M^\star(s,a)$ is piece-wise constant under $\phi$, we let $[Q_M^\star]_\phi(x,a) = Q_M^\star(s,a)$ for any $s\in\phi^{-1}(x)$. It suffices to show $Q_{M_\phi}^\star=[Q^\star_M]_\phi$, by showing that $[Q_M^\star]_\phi$ is the fixed point of $\Tcal_{M_\phi}$. This is because, for any $x\in\Scal_\phi, a\in\Acal$, 
	\begin{align*}
	(\Tcal_{M_\phi} [Q_M^\star]_\phi)(x, a) 
	= &~ R_{\phi}(x,a) + \gamma \langle P_{\phi}(x,a), [V_M^\star]_\phi \rangle \\
	= &~ \sum_{ s \in \phi^{-1}(x)} \frac{|D_{s,a}|}{|D_{\phi(s),a}|} \left( R( s, a) + \gamma \langle \Phi P( s, a), [V_M^\star]_\phi \right))\\
	= &~ \sum_{ s \in \phi^{-1}(x)} \frac{|D_{s,a}|}{|D_{\phi(s),a}|} \left( R( s, a) + \gamma \langle P( s, a), V_M^\star \right)) \\
	= &~ \sum_{ s \in \phi^{-1}(x)} \frac{|D_{s,a}|}{|D_{\phi(s),a}|}\, [Q_M^\star]_\phi(x,a) =  [Q_M^\star]_\phi(x,a).
	\end{align*}
	
	Then we consider the estimated model using the abstract representation $\emp M_\phi = (\Scal_\phi, \Acal, \emp P_\phi, \emp R_\phi, \gamma)$. Let $\mathbf{e}_{\phi(s')}$ be the unit vector whose $\phi(s')$-th entry is 1 and all other entries are 0, the parameters are
	$$\widehat R_\phi(x,a)=\frac{1}{|D_{x,a}|}\sum_{(r,s')\in D_{x,a}}r \quad \text{and}\quad\widehat P_\phi(x,a)=\frac{1}{|D_{x,a}|}\sum_{(r,s')\in D_{x,a}} \mathbf{e}_{\phi(s')},\quad \forall (x,a) \in\Scal_\phi \times\Acal.$$
	Define the minimal samples received by any abstract state-action pair as $
	n_\phi(D) := \min_{x \in \Scal_\phi, a\in \Acal} |D_{x,a}|.$ We can upper bound $\left\|[Q_{M_\phi}^\star]_M - [Q_{\emp M_\phi}^\star]_M\right\|_\infty$ by a function of $n_\phi(D)$.
	
	Noticing the contraction property of $ \Tcal_{\emp M_\phi}$, we have
	\begin{align*}
	\left\|[Q_{M_\phi}^\star]_M - [Q_{\emp M_\phi}^\star]_M\right\|_\infty 
	=&~ \left\|Q_{M_\phi}^\star - Q_{\emp M_\phi}^\star\right\|_\infty \\
	\le &~ \frac{1}{1-\gamma} \maxnorm{Q_{M_\phi}^\star - \Tcal_{\emp M_\phi} Q_{M_\phi}^\star}\tag{*}\\
	=&~ \frac{1}{1-\gamma} \maxnorm{\Tcal_{\emp M_\phi} Q_{M_\phi}^\star - \Tcal_{M_\phi} Q_{M_\phi}^\star}.
	\end{align*}
	Step(*) holds because $$\left\|Q_{M_\phi}^\star - Q_{\emp M_\phi}^\star\right\|_\infty\le \left\|Q_{M_\phi}^\star-\Tcal_{\emp M_\phi} Q_{M_\phi}^\star\right\|_\infty+\left\|\Tcal_{\emp M_\phi} Q_{M_\phi}^\star- \Tcal_{\emp M_\phi}Q_{\emp M_\phi}^\star  \right\|_\infty\le \left\|Q_{M_\phi}^\star-\Tcal_{\emp M_\phi} Q_{M_\phi}^\star\right\|_\infty+\gamma\left\| Q_{M_\phi}^\star- Q_{\emp M_\phi}^\star  \right\|_\infty.$$
	
	Then we plug in the definition of $\Tcal_{\emp M_\phi}$ and $\Tcal_{ M_\phi}$. For each $(x,a)\in\Scal_\phi \times \Acal$, 
	\begin{align*}
	&~ |(\Tcal_{\emp M_\phi} Q_{M_\phi}^\star)(x,a) - (\Tcal_{M_\phi} Q_{M_\phi}^\star)(x,a)|  \\
	= &~ |\emp R_\phi(x,a) + \gamma \langle \emp P_\phi(x,a), V_{M_\phi}^\star \rangle - R_\phi(x,a) - \gamma \langle P_\phi(x,a), V_{M_\phi}^\star \rangle | \\
	= &~ \left|\frac{1}{|D_{x,a}|}\sum_{s \in \phi^{-1}(x)}~ \sum_{(r,s') \in D_{s,a}} \left(r + \gamma V_{M_\phi}^\star(\phi(s')) - R(s,a) - \gamma \langle P(s,a), [V_{M_\phi}^\star]_M \rangle \right) \right|.
	\end{align*}
	If we view the nested sum as a flat sum, the expression is the sum of the differences between random variables $r + \gamma V_{M_\phi}^\star(s')$ and their expectation w.r.t.~the randomness of $(r,s')$. Each sample is independent and bounded in $[0,\Vmax]$, so Hoeffding's inequality applies: with probability at least $1-\delta/|\Scal_\phi\times \Acal|$, 
	$$
	\left|(\Tcal_{\emp M_\phi} Q_{M_\phi}^\star)(x,a) - (\Tcal_{M_\phi} Q_{M_\phi}^\star)(x,a)\right| \le \Vmax\sqrt{\frac{1}{2n_\phi(D)}\ln\frac{2|\Scal_\phi\times \Acal|}{\delta}}.
	$$
	Union bounding over all $(x,a)\in\Scal_\phi\times\Acal$, with probability at least $1-\delta$, we get
	$$
	\maxnorm{\Tcal_{\emp M_\phi} Q_{M_\phi}^\star - \Tcal_{M_\phi} Q_{M_\phi}^\star} \le \Vmax\sqrt{\frac{1}{2n_\phi(D)}\ln\frac{2|\Scal_\phi\times \Acal|}{\delta}}.
	$$
	
	Therefore, with probability at least $1-\delta$, we have
	\begin{align*}
	\left\|Q_M^\star - [Q_{\emp M_\phi}^\star]_M\right\|_\infty
	&~\le \left\|Q_M^\star - [Q_{M_\phi}^\star]_M \right\|_\infty + \left\|[Q_{M_\phi}^\star]_M - [Q_{\emp M_\phi}^\star]_M\right\|_\infty\\
	&~\le \frac{\Vmax}{1-\gamma}\sqrt{\frac{1}{2n_\phi(D)}\ln\frac{2|\Scal_\phi\times \Acal|}{\delta}}.
	\end{align*}
	
	Finally, applying Lemma \ref{lem:qtov} with $f=[Q_{\emp M_\phi}^\star]_M$, we get
	$$v_M^\star-v_M^{[\pi_{\emp M_\phi}^\star]_M}\le\left\|V^\star_M-V_M^{[\pi_{\emp M_\phi}^\star]_M}\right\|_\infty=\left\|V^\star_M-V_M^{\pi_{[Q_{\emp M_\phi}^\star]_M}}\right\|_\infty\le \frac{2\left\|[Q_{\emp M_\phi}^\star]_M-Q^\star_M\right\|_\infty}{1-\gamma}.$$
	
	This means that with probability at least  $1-\delta$, the output of certainty-equivalence with \emph{$Q^\star$-irrelevant} abstraction $[\pi_{\emp M_\phi}^\star]_M$ satisfies
	 $$v_M^\star-v_M^{[\pi_{\emp M_\phi}^\star]_M}\le \frac{2\Vmax}{(1-\gamma)^2}\sqrt{\frac{1}{2n_\phi(D)}\ln\frac{2|\Scal_\phi\times \Acal|}{\delta}}\le\frac{2\Vmax}{(1-\gamma)^2}\sqrt{\frac{1}{2n_\phi(D)}\ln\frac{2|\Scal\times \Acal|}{\delta}}.$$
	 
 	We complete the proof by noticing that $n_\phi(D)= \Omega(|D|/|\Scal\times\Acal|)$, so to guarantee the above bound to be $\epsilon$, the necessary sample size $|D|$ will be polynomial in all relevant parameters.
\end{proof}

\section{Possible Relaxation of Assumption~\ref{asm:concentratability}}
\label{app:relax}

We illustrate the possibility of relaxing Assumption~\ref{asm:concentratability} using a simple example on state abstractions: when learning with abstractions, it is sufficient to have data that is relatively uniform over the abstract state space, even if some raw state receives no data. Due to the connection to FQI (Section~\ref{sec:abstraction}), one would expect that with $\Fcal^\phi$ as the function class,  concentratability coefficient can be upper bounded by the number of abstract states and incur no dependence on the raw state space, which is unfortunately not the case according to the current definition. It turns out that our analysis provides an easy fix to this issue: the proof of Theorem \ref{thm:fqi} (for FQI)  only depends on Assumption~\ref{asm:concentratability} via 
\begin{align} \label{eq:con_relax}
\|f - \Tcal f'\|_{2,\nu}\le \sqrt{C} \|f - \Tcal f'\|_{2,\muu}, \forall f, f' \in\Fcal.
\end{align}
And the proof of Theorem \ref{thm:minimax} (for the minimax algorithm) only depends on Assumption~\ref{asm:concentratability} via 
\begin{align} \label{eq:con_relax2}
\|f - \Tcal f\|_{2,\nu}\le \sqrt{C} \|f - \Tcal f\|_{2,\muu}, \forall f \in\Fcal.
\end{align}

If we define $C$ through the above inequalities (which are strict relaxations of Assumption~\ref{asm:concentratability}), Theorems~\ref{thm:fqi} and \ref{thm:minimax} still hold under Eq.\eqref{eq:con_relax} and \eqref{eq:con_relax2} respectively. Furthermore, when $\Fcal=\Fcal^\phi$ and $\phi$ is a bisimulation, we can easily verify that $C$ can be upper bounded by the number of abstract state-action pairs with uniform data. One issue here is that Eq.\eqref{eq:con_relax} is specialized to the completeness assumption, and if we wish to work with alternative assumptions (as discussed earlier), we may need to relax the definition in a different manner. Another interesting observation is that Eq.\eqref{eq:con_relax2} is less strict and much nicer than Eq.\eqref{eq:con_relax}, but so far we have not been able to modify the FQI analysis to work under Eq.\eqref{eq:con_relax2}. It is unclear whether this is an artifact of proof techniques or a fundamental difference between FQI and the minimax algorithm. We leave the investigation of these issues to future work.

\end{document}